
\documentclass[%
  twoside,
  11pt,
  final,
]{article}

\usepackage[utf8]{inputenc}
\usepackage[T1]{fontenc}

\usepackage{caption}  

\usepackage[
  hyperref,
  preprint,
]{jmlr2e}

\usepackage{lastpage}

\newcommand*{\shorttitle}[1]{\newcommand*{\jmlrshorttitle}{#1}}
\newcommand*{\authorlistfull}[1]{\newcommand*{\jmlrauthorlistfull}{#1}}
\newcommand*{\authorlistlast}[1]{\newcommand*{\jmlrauthorlistlast}{#1}}

\newcommand*{\jmlrmeta}[6]{%
    \newcommand*{\jmlrvolume}{#1}%
    \newcommand*{\jmlryear}{#2}%
    \newcommand*{\jmlrfirstpage}{#3}%
    \newcommand*{\jmlrdatesubmitted}{#4}%
    \newcommand*{\jmlrdatepublished}{#5}%
    \newcommand*{\jmlrpaperid}{#6}%
}

\newcommand*{\jmlrsetup}{%
    \firstpageno{\jmlrfirstpage}%
    \jmlrheading{\jmlrvolume}%
                {\jmlryear}%
                {\jmlrfirstpage{-}\pageref*{LastPage}}%
                {\jmlrdatesubmitted}%
                {\jmlrdatepublished}%
                {\jmlrpaperid}%
                {\jmlrauthorlistfull}%
    \ShortHeadings{\jmlrshorttitle}{\makeatletter\jmlrauthorlistlast\makeatother}%
}

\usepackage{ifdraft}

\usepackage[
  obeyFinal,
  textsize=footnotesize,
  figwidth=0.99\linewidth,
]{todonotes}
\usepackage{environ}

\ifoptionfinal{}{\setlength{\marginparwidth}{2.5cm}}

\NewEnviron{todoenv}[1][]{%
  \todo[inline, caption={2do}, #1]{%
    \begin{minipage}{\textwidth - 4pt}
      \BODY
    \end{minipage}
  }
}

\usepackage{etoolbox}  
\usepackage{xparse}    

\hbadness=3500

\usepackage[inline]{enumitem}


\usepackage[numbered]{bookmark}  



\usepackage{natbib}
\usepackage{doi}

\usepackage[nottoc, section]{tocbibind}


\usepackage[
  labelformat=simple,
]{subcaption}

\usepackage{tikz}

\usetikzlibrary{positioning}

\usepackage{graphicx}

\graphicspath{%
  {figures/}%
}
\usepackage{booktabs}
\usepackage{multirow}

\newcommand{\cmark}{}%
\DeclareRobustCommand{\cmark}{%
  \tikz\fill[scale=0.4, color=black!30!green]
  (0,.35) -- (.25,0) -- (1,.7) -- (.25,.15) -- cycle;%
}
\newcommand{\xmark}{}%
\DeclareRobustCommand{\xmark}{%
  \tikz [x=1.4ex,y=1.4ex,line width=.2ex, color=red] \draw (0,0) -- (1,1) (0,1) -- (1,0);
}
\usepackage{algorithm}
\usepackage[noend]{algpseudocode}

\makeatletter
\expandafter\patchcmd\csname\string\algorithmic\endcsname{\itemsep\z@}{\itemsep=0.25ex}{}{} 
\newcommand\fs@booktabsruled{%
    \def\@fs@cfont{\bfseries\strut}\let\@fs@capt\floatc@ruled
    \def\@fs@pre{\hrule height\heavyrulewidth depth0pt \kern\belowrulesep}%
    \def\@fs@mid{\kern\aboverulesep\hrule height\lightrulewidth\kern\belowrulesep}%
    \def\@fs@post{\kern\aboverulesep\hrule height\heavyrulewidth\relax}%
    \let\@fs@iftopcapt\iftrue
}
\makeatother
\floatstyle{booktabsruled} 
\restylefloat{algorithm}
\captionsetup[algorithm]{singlelinecheck=off}

\algrenewcommand{\algorithmiccomment}[1]{\hfill {\small \textcolor{darkgray}{$\vartriangleright$ #1}}}  
\algrenewcommand\algorithmicindent{1.5em}   
\algrenewcommand\alglinenumber[1]{\small {\textcolor{darkgray}{#1}}} 
\usepackage{amsmath, amssymb}
\usepackage{bm}
\usepackage{mathtools}
\usepackage{nicefrac}

\usepackage{amsthm}
\usepackage{thmtools}
\usepackage{thm-restate}


\allowdisplaybreaks    

\numberwithin{equation}{section}  


\newcommand*{\defeq}{\coloneqq}  
\newcommand*{\rdefeq}{\eqqcolon}  



\providecommand{\where}{}
\providecommand{\suchthat}{}

\DeclarePairedDelimiterX{\set}[1]\{\}{%
\renewcommand*{\where}{\colon}
\renewcommand*{\suchthat}{%
  \nonscript\:\delimsize\vert%
  \allowbreak%
  \nonscript\:%
  \mathopen{}%
}%
#1}


\newcommand*{\setsym}[1]{\ensuremath{\mathbb{#1}}}

\newcommand*{\N}{\setsym{N}}

\newcommand*{\R}{\setsym{R}}


\DeclarePairedDelimiter{\card}{\lvert}{\rvert}


\DeclarePairedDelimiter{\ointerval}{]}{[}


\newcommand*{\maps}[2]{#2^{#1}}  

\newcommand*{\preim}[1]{#1^{-1}}  

\newcommand*{\inv}{^{-1}}

\newcommand*{\id}[1][]{\operatorname{id}\ifstrempty{#1}{}{_{#1}}}


\newcommand*{\spacesym}[1]{{\mathbb{#1}}}  


\newcommand*{\boundary}[1]{\partial #1}
\newcommand*{\closure}[1]{\overline{#1}}

\DeclareMathOperator{\seqcl}{scl}


\newcommand*{\metricsp}[1]{\spacesym{#1}}


\renewcommand*{\vec}[1]{{\bm{#1}}}
\newcommand*{\vecelem}[1]{{#1}}

\newcommand*{\pvec}[1]{\begin{pmatrix} #1 \end{pmatrix}}

\newcommand*{\linspan}[1]{\operatorname{span} \left( #1 \right)}


\newcommand*{\mat}[1]{{\bm{#1}}}
\newcommand*{\matelem}[1]{{#1}}

\NewDocumentCommand{\range}{s m}{%
  \operatorname{ran}
  \IfBooleanTF{#1}{\left(}{(}
  #2
  \IfBooleanTF{#1}{\right)}{)}
}

\NewDocumentCommand{\kernel}{s m}{%
  \operatorname{ker}
  \IfBooleanTF{#1}{\left(}{(}
  #2
  \IfBooleanTF{#1}{\right)}{)}
}

\newcommand*{\isomorphism}[2]{\linop{I}_{#1}^{#2}}

\newcommand*{\pinv}{^\dagger}


\newcommand*{\proj}[1]{\linop{P}_{#1}}

\newcommand*{\dualsp}[1]{#1'}
\newcommand*{\bidualsp}[1]{#1''}


\newcommand*{\banachsp}[1][B]{\spacesym{#1}}  

\DeclarePairedDelimiter{\abs}{\lvert}{\rvert}

\makeatletter
\DeclarePairedDelimiterXPP{\@norm}[2]{}{\lVert}{\rVert}{\ifstrempty{#2}{}{_{#2}}}{#1}
\NewDocumentCommand{\norm}{s O{} m O{}}{%
  \IfBooleanTF{#1}{%
    \@norm*{#3}{#4}%
  }{%
    \@norm[#2]{#3}{#4}%
  }%
}
\makeatother

\newcommand*{\dualop}{'}


\newcommand*{\hilbertsp}[1][H]{\spacesym{#1}}  

\makeatletter
\DeclarePairedDelimiterXPP{\@inprod}[3]{}{\langle}{\rangle}{\ifstrempty{#3}{}{_{#3}}}{#1, #2}
\NewDocumentCommand{\inprod}{s O{} m m O{}}{%
  \IfBooleanTF{#1}{%
    \@inprod*{#3}{#4}{#5}%
  }{%
    \@inprod[#2]{#3}{#4}{#5}%
  }%
}
\makeatother

\newcommand*{\T}{^\top}
\newcommand*{\adj}{^*}



\NewDocumentCommand{\cfns}{m o}{%
  C (#1\IfValueT{#2}{, #2})
}

\NewDocumentCommand{\cdfns}{O{0} m o}{%
  C^{#1} (#2\IfValueT{#3}{, #3})
}

\NewDocumentCommand{\bucdfns}{m m}{%
  C^{#1}(\closure{#2})
}

\newcommand*{\testfns}[1]{C_c^\infty \left( #1 \right)}

\RenewDocumentCommand{\L}{m o}{%
  \ensuremath{
    L_{#1}
    \IfNoValueF{#2}{\left( #2 \right)}
  }
}

\NewDocumentCommand{\sobolev}{o m o}{%
  \IfNoValueTF{#1}{
    H^{#2}
  }{
    W^{#1,#2}
  }
  \IfNoValueF{#3}{\left( #3 \right)}
}
\NewDocumentCommand{\sobolevtest}{o m o}{%
  \sobolev[#1]{#2}_0
  \IfNoValueF{#3}{\left( #3 \right)}
}

\newcommand*{\rkhs}[1]{\hilbertsp_{#1}}


\newcommand*{\linop}[1]{\mathcal{#1}}

\newcommand*{\linfctls}[1]{{\bm{\linop{#1}}}}
\newcommand*{\linfctlselem}[1]{{\linop{#1}}}

\makeatletter
\DeclarePairedDelimiter{\@evallinop@oparg}{[}{]}
\DeclarePairedDelimiter{\@evallinop@fnarg}{(}{)}
\NewDocumentCommand{\evallinop}{m s O{} m s O{} d()}{%
  #1%
  \IfBooleanTF{#2}{%
    \@evallinop@oparg*{#4}%
  }{%
    \@evallinop@oparg[#3]{#4}%
  }%
  \IfValueT{#7}{%
    \IfBooleanTF{#5}{%
      \@evallinop@fnarg*{#7}%
    }{%
      \@evallinop@fnarg[#6]{#7}%
    }%
  }%
}
\makeatother

\NewDocumentCommand{\linopat}{m s O{} m d()}{%
  \IfBooleanTF{#2}{%
    \evallinop{\linop{#1}}*{#4}(#5)%
  }{%
    \evallinop{\linop{#1}}[#3]{#4}(#5)%
  }%
}

\NewDocumentCommand{\linfctlsat}{m s O{} m d()}{%
  \IfBooleanTF{#2}{%
    \evallinop{\linfctls{#1}}*{#4}(#5)%
  }{%
    \evallinop{\linfctls{#1}}[#3]{#4}(#5)%
  }%
}


\newcommand*{\borelsigalg}[1]{\mathcal{B} \left( #1 \right)}



\newcommand*{\diff}[2][1]{%
  \mathrm{d} #2\ifstrequal{#1}{1}{}{^{#1}}%
}

\newcommand*{\pdiff}[2][1]{%
  \partial #2\ifstrequal{#1}{1}{}{^{#1}}%
}


\NewDocumentCommand{\deriv}{s O{1} m m}{%
  \frac{%
    \mathrm{d}\ifstrequal{#2}{1}{}{^{#2}}\IfBooleanT{#1}{#4}
  }{%
    \diff[#2]{#3}
  }
  \IfBooleanF{#1}{#4}
}

\NewDocumentCommand{\derivat}{s O{1} m m m}{%
  \left.
    \IfBooleanTF{#1}{%
      \deriv*[#2]{#3}{#4}
    }{%
      \deriv[#2]{#3}{#4}
    }
  \right|_{#3 = #5}
}

\NewDocumentCommand{\dderiv}{m m}{%
  \partial_{#1} #2
}

\NewDocumentCommand{\dderivat}{m m o m}{%
  \IfNoValueTF{#3}{%
    \dderiv{#1}{#2} \left( #4 \right)
  }{%
    \left.
    \dderiv{#1}{#2}
    \right_{#3 = #4}
  }
}

\NewDocumentCommand{\pderiv}{s O{1} m m}{%
  \frac{%
    \partial\ifstrequal{#2}{1}{}{^{#2}}\IfBooleanT{#1}{#4}
  }{%
    \pdiff[#2]{#3}
  }
  \IfBooleanF{#1}{#4}
}

\NewDocumentCommand{\pderivat}{s O{1} m m o m}{%
  \left.
    \IfBooleanTF{#1}{%
      \pderiv*[#2]{#3}{#4}
    }{%
      \pderiv[#2]{#3}{#4}
    }
  \right|_{\IfNoValueTF{#5}{#3}{#5} = #6}
}

\NewDocumentCommand{\mpderiv}{s O{1} m m}{%
  \frac{%
    \partial\ifstrequal{#2}{1}{}{^{#2}}\IfBooleanT{#1}{#4}
  }{%
    #3
  }
  \IfBooleanF{#1}{#4}
}

\NewDocumentCommand{\mpderivat}{s O{1} m m m m}{%
  \left.
    \IfBooleanTF{#1}{%
      \mpderiv*[#2]{#3}{#4}
    }{%
      \mpderiv[#2]{#3}{#4}
    }
  \right|_{#5 = #6}
}

\NewDocumentCommand{\mipderiv}{s m o m o}{%
  \IfNoValueTF{#3}{
    \mathrm{D}^{#2}
    \IfBooleanTF{#1}{\left[ #4 \right]}{#4}
    \IfNoValueF{#5}{\left( #5 \right)}
  }{
    \IfNoValueF{#5}{\left.}
    \frac{%
      \partial^{\lvert #2 \rvert}\IfBooleanT{#1}{#4}
    }{%
      \pdiff[#2]{#3}
    }
    \IfBooleanF{#1}{#4}
    \IfNoValueF{#5}{\right\vert_{#3 = #5}}
  }
}

\NewDocumentCommand{\jacobian}{o m o}{%
  \IfNoValueTF{#1}{%
    \mathrm{D} #2 \IfNoValueF{#3}{\left( #3 \right)}
  }{%
    \left.
      \mathrm{D} #2
    \right|_{#1\IfNoValueF{#3}{= #3}}
}
}

\NewDocumentCommand{\gradient}{o m o}{%
  \IfNoValueTF{#1}{%
    \nabla #2 \IfNoValueF{#3}{\left( #3 \right)}
  }{%
    \left.
      \nabla #2
    \right|_{#1\IfNoValueF{#3}{= #3}}
}
}

\NewDocumentCommand{\divergence}{m o}{%
  \operatorname{div} \left( #1 \right) \IfNoValueF{#2}{\left( #2 \right)}
}

\NewDocumentCommand{\hessian}{o m o}{%
  \IfNoValueTF{#1}{%
    H #2 \IfNoValueF{#3}{\left( #3 \right)}
  }{%
    \left.
      H #2
    \right|_{#1\IfNoValueF{#3}{= #3}}
}
}

\NewDocumentCommand{\laplaceop}{o m o}{%
  \IfNoValueTF{#1}{%
    \Delta #2 \IfNoValueF{#3}{\left( #3 \right)}
  }{%
    \left.
      \Delta #2
    \right|_{#1\IfNoValueF{#3}{= #3}}
}
}


\newcommand*{\lintegral}[4][]{%
  \int_{#1} #4 \,\diff[1]{#2 \left( #3 \right)}%
}

\NewDocumentCommand{\rintegral}{o o m m}{%
  \int\IfNoValueF{#1}{_{#1}}\IfNoValueF{#2}{^{#2}} #4 \,\diff[1]{#3}%
}




\makeatletter

\newcommand*{\@probsymbol}{\mathrm{P}}

\newcommand*{\@given}[1]{%
  \nonscript\:#1\vert
  \allowbreak
  \nonscript\:
  \mathopen{}}

\providecommand*{\given}{}

\DeclarePairedDelimiterXPP{\@prob}[1]{\@probsymbol}{(}{)}{}{%
  \renewcommand*{\given}{\@given{\delimsize}}%
  #1}

\newcommand*{\prob}[1]{\ifblank{#1}{\@probsymbol}{\@prob*{#1}}}

\makeatother


\newcommand{\indp}[2]{#1 \perp\!\!\!\!\perp #2}


\newcommand*{\rvar}[1]{{\mathrm{#1}}}
\newcommand*{\rvec}[1]{{\bm{\mathrm{#1}}}}
\newcommand*{\rvecelem}[1]{\rvar{#1}}

\makeatletter

\DeclarePairedDelimiterX{\@condrv}[1]{.}{.}{%
  \renewcommand*{\given}{\@given{\delimsize}}%
  #1}

\NewDocumentCommand{\condrv}{som}{%
  \IfBooleanTF{#1}{%
    \@condrv*{#3}
  }{%
    \IfNoValueTF{#2}{%
      \begingroup%
      \renewcommand*{\given}{\@given{}}%
      #3%
      \endgroup%
    }{%
      \@condrv[#2]{#3}%
    }
  }
}

\makeatother


\NewDocumentCommand{\expectation}{o o m}{%
  \operatorname{\mathbb{E}}\IfNoValueF{#1}{_{#1\IfNoValueF{#2}{\sim #2}}} \left[ #3 \right]
}
\NewDocumentCommand{\covariance}{o o m m}{%
  \operatorname{Cov}\IfNoValueF{#1}{_{#1\IfNoValueF{#2}{\sim #2}}} \left[ #3, #4 \right]
}
\NewDocumentCommand{\variance}{o o m}{%
  \operatorname{\mathbb{V}}\IfNoValueF{#1}{_{#1\IfNoValueF{#2}{\sim #2}}} \left[ #3 \right]
}


\newcommand*{\gaussian}[2]{{\ensuremath{\operatorname{\mathcal{N}}\left(#1, #2\right)}}}


\newcommand*{\rproc}[1]{{\mathrm{#1}}}
\newcommand*{\morproc}[1]{{\bm{\mathrm{#1}}}}

\newcommand*{\paths}[1]{\operatorname{paths}\left(#1\right)}

\newcommand*{\gp}[2]{{\ensuremath{\operatorname{\mathcal{GP}}}\left(#1, #2\right)}}
\newcommand*{\multigp}[2]{%
  \gp{%
    \begin{pmatrix}
      #1
    \end{pmatrix}
  }{
    \begin{pmatrix}
      #2
    \end{pmatrix}
  }
}

\newcommand*{\Lk}[2]{#1 #2}
\newcommand*{\kL}[2]{#1 #2\dualop}
\NewDocumentCommand{\LkL}{m m o}{%
  #1 #2 \IfValueTF{#3}{#3}{#1}\dualop
}


\declaretheoremstyle[
  numberwithin=section,
  spaceabove=\topsep,
  spacebelow=\topsep,
  headfont=\bfseries,
  notefont=\normalfont,
  bodyfont=\itshape,
  headpunct=.,
  notebraces=(),
  postheadspace=5pt plus 1pt minus 1pt,
  headindent=0pt,
]{plain}  

\declaretheorem[style=plain]{theorem}
\declaretheorem[style=plain,sibling=theorem]{proposition}
\declaretheorem[style=plain,sibling=theorem]{lemma}
\declaretheorem[style=plain,sibling=theorem]{corollary}
\declaretheorem[style=plain,sibling=theorem]{definition}

\declaretheorem[style=plain,sibling=theorem]{remark}
\declaretheorem[style=plain]{example}

\declaretheorem[
  numberwithin=,
  heading=Notation,
  style=plain,
]{mainnotation}


\makeatletter
\renewenvironment{proof}[1][\proofname]{\par
  \pushQED{\qed}%
  \normalfont \topsep6\p@\@plus6\p@\relax
  \trivlist
  \item[\hskip\labelsep {\bf #1}]\ignorespaces
}{%
  \popQED\endtrivlist\@endpefalse
}
\makeatother




\newcommand*{\sigalg}{\mathcal{F}}  

\newcommand*{\dom}{\banachsp[D]}  
\newcommand*{\diffop}{\linop{D}}  
\newcommand*{\sol}{u}  
\newcommand*{\solvv}{\vec{u}}  
\newcommand*{\solsp}{\banachsp[U]}  
\newcommand*{\rhs}{f}  
\newcommand*{\rhssp}{\banachsp[V]}  
\newcommand*{\bfn}{g}  
\newcommand*{\truesol}{\sol^\star}  
\newcommand*{\truesolvv}{\solvv^\star}  

\newcommand*{\testfn}{v}
\newcommand*{\weakdiffop}{\diffop^w}
\newcommand*{\weakrhs}{\rhs^w}
\newcommand*{\weakbilin}{B}

\newcommand*{\mwrtestfctl}{l}
\newcommand*{\mwrtestfn}{\psi}

\newcommand*{\mwrtrialvv}{\vec{\phi}}
\newcommand*{\mwrtrial}{\phi}
\newcommand*{\mwrtrialsp}{\hat{\solsp}}
\newcommand*{\mwrtest}{\mwrtestfn}

\newcommand*{\mwrcoords}{\vec{c}}
\newcommand*{\mwrcoordselem}{\vecelem{c}}

\newcommand*{\mwrmat}{\hat{\mat{D}}}
\newcommand*{\mwrmatelem}{\hat{\matelem{D}}}
\newcommand*{\mwrrhs}{\hat{\vec{f}}}
\newcommand*{\mwrrhselem}{\hat{\vecelem{f}}}

\newcommand*{\mwrsol}{\solvv^{\mathrm{MWR}}}
\newcommand*{\mwrsolcoords}{\vec{c}^{\mathrm{MWR}}}
\newcommand*{\mwrsolcoordselem}{\vecelem{c}^{\mathrm{MWR}}}

\newcommand*{\heatcond}{\kappa}  
\newcommand*{\heatrhs}{\dot{q}_V}  
\newcommand*{\heatbfn}{\dot{q}_A}  

\newcommand*{\solvvprior}{\morproc{u}}
\newcommand*{\solprior}{\rproc{u}}
\newcommand*{\rhsprior}{\rproc{f}}
\newcommand*{\bfnprior}{\rproc{g}}

\newcommand*{\heatrhsprior}{\dot{\rproc{q}}_V}
\newcommand*{\heatbfnprior}{\dot{\rproc{q}}_A}

\newcommand*{\mwrwsdiffop}{\diffop^{(w)}}
\newcommand*{\mwrwsrhs}{\rhs^{(w)}}

\newcommand*{\mwrwsbop}{\linop{B}^{(w)}}
\newcommand*{\mwrwsbfn}{\bfn^{(w)}}

\newcommand*{\mwrwstestfctlsp}{\banachsp[L]^{(w)}}

\newcommand*{\mwrtrialproj}{\linop{P}_{\mwrtrialsp}}
\newcommand*{\mwrcoordproj}{\linfctls{P}_{\R^m}}

\newcommand*{\mwrcoordsprior}{\rvec{c}}
\newcommand*{\mwrcoordspriorelem}{\rvecelem{c}}

\NewDocumentCommand{\mwrinfoop}{m o}{\linop{I}_{#1\IfValueT{#2}{, #2}}}

\newcommand*{\mwrwsrhsprior}{\rhsprior^{(w)}}
\newcommand*{\mwrwsbfnprior}{\bfnprior^{(w)}}

\newcommand*{\gpprior}{\rproc{f}}  
\newcommand*{\mogpprior}{\morproc{f}}
\newcommand*{\mogppriorelem}{\rproc{f}}
\newcommand*{\gpidcs}{\metricsp{X}}  
\newcommand*{\gppathsp}{\banachsp[B]}  
\newcommand*{\bgrv}{\rvar{b}}
\newcommand*{\bgrvsp}{\banachsp}

\newcommand*{\wstarseqcl}{\seqcl_{w*}}

\usepackage{siunitx}
\ifdefined\unit\else
  \ifdefined\NewCommandCopy
    \NewCommandCopy\unit\si
  \else
    \NewDocumentCommand\unit{O{}m}{\si[#1]{#2}}
  \fi
\fi
\input{preamble/99_end}

\title{%
  Physics-Informed Gaussian Process Regression\\
  Generalizes Linear PDE Solvers
}
\shorttitle{Physics-Informed GP Regression Generalizes Linear PDE Solvers}

\author{%
  \name Marvin Pf\"{o}rtner$^{1}$ \email marvin.pfoertner@uni-tuebingen.de \\
  \name Ingo Steinwart$^{2}$      \email ingo.steinwart@mathematik.uni-stuttgart.de \\
  \name Philipp Hennig$^{1}$    \email philipp.hennig@uni-tuebingen.de \\
  \name Jonathan Wenger$^{1}$     \email jonathan.wenger@uni-tuebingen.de \\[0.5em]
  \addr
  $^1$University of T\"ubingen, Tübingen AI Center \\
  $^2$University of Stuttgart
}
\authorlistfull{Marvin Pf\"{o}rtner, Ingo Steinwart, Philipp Hennig and Jonathan Wenger}
\authorlistlast{Pf\"{o}rtner, Steinwart, Hennig and Wenger}

\editor{Marc Peter Deisenroth}

\jmlrmeta{
  25%
}{
  2024%
}{
  1%
}{
  04/23%
}{
  MM/YY%
}{
  23-0417
}

\jmlrsetup

\begin{document}
  \maketitle

  \begin{abstract}
    %
%
Linear partial differential equations (PDEs) are an important, widely applied class of
mechanistic models, describing physical processes such as heat transfer, electromagnetism, and wave propagation.
In practice, specialized numerical methods based on discretization are used to solve PDEs. They generally use an estimate of the unknown model parameters and, if available, physical measurements for initialization.
Such solvers are often embedded into larger scientific models with a downstream application and thus error quantification plays a key role.
%
%
However, by ignoring parameter and measurement uncertainty, classical PDE solvers may fail to produce consistent estimates of their inherent approximation error.
%
%
In this work, we approach this problem in a principled fashion by interpreting solving linear PDEs as physics-informed Gaussian process (GP) regression.
Our framework is based on a key generalization of the Gaussian process inference
theorem to observations made via an arbitrary bounded linear operator.
%
%
Crucially, this probabilistic viewpoint allows to
(1) \emph{quantify the inherent discretization error};
(2) \emph{propagate uncertainty about the model parameters to the solution}; and
(3) \emph{condition on noisy measurements}.
Demonstrating the strength of this formulation, we prove that it strictly generalizes
methods of weighted residuals, a central class of PDE solvers including collocation,
finite volume, pseudospectral, and (generalized) Galerkin methods such as finite
element and spectral methods.
This class can thus be directly equipped with a structured error estimate.
%
%
In summary, our results enable the seamless integration of mechanistic models as
modular building blocks into probabilistic models by blurring the boundaries between
numerical analysis and Bayesian inference.

  \end{abstract}

  \begin{keywords}
    physics-informed machine learning,
    probabilistic numerics,
    partial differential equations,
    method of weighted residuals,
    Galerkin methods,
    Gaussian processes,
    bounded linear operators
  \end{keywords}

  \section{Introduction}
\label{sec:introduction}
Partial differential equations (PDEs) are powerful mechanistic models of static and
dynamic systems with continuous spatial interactions  \citep{Borthwick2018IntroPDE}.
They are widely used in the natural sciences, especially in physics, and in applied
fields like engineering, medicine and finance.
Linear PDEs form a subclass describing physical phenomena such as heat diffusion
\citep{Fourier1822Chaleur}, electromagnetism \citep{Maxwell1865Electro}, and continuum
mechanics \citep{Lautrup2005PDE}.
Additionally, they are used in applications as diverse as computer graphics
\citep{Kazhdan2006Poisson}, medical imaging \citep{Holder2005EIT}, or option pricing
\citep{Black1973Pricing}.

\paragraph{Scientific inference with PDEs}
Given a mechanistic model of a (physical) system in the form of a linear PDE
\(\linopat{D}{\solvv} = \rhs\), where $\linop{D}$ is a linear differential operator
mapping between vector spaces of functions, the system can be simulated by solving the
PDE subject to a set of linear boundary conditions (BC), given by a linear operator
\(\linop{B}\) and a function \(\bfn\) defined on the boundary of the domain, s.t.
\(\linopat{B}{\solvv} = \bfn\) \citep{Evans2010PDE}.
For instance, given all material parameters and heat sources involved, a PDE can
describe the temperature distribution in an electronic component, while the boundary
conditions describe the heat flux out of the component at the surface.
Since hardly any practically relevant PDE can be solved analytically
\citep{Borthwick2018IntroPDE}, in practice, specialized numerical methods relying on
discretization are employed.
Often such solvers are embedded into larger scientific models, where model parameters are
inferred from measurement and downstream analyses depend on the resulting simulation.
For example, we would like to model whether said electronic component hits critical
temperature thresholds during operation to assess its longevity.

\paragraph{Challenges when solving PDEs}
When performing scientific inference with PDEs via numerical simulation, one is faced
with three fundamental challenges.
\begin{enumerate}[label=(C\arabic*)]
  \item \emph{Limited computation.}
        Any numerically computed solution \(\hat{\solvv} \approx \solvv\) suffers from
        approximation error.
        In practice, a sufficiently accurate simulation often requires vast amounts of
        computational resources.
        \label{cha:limited-computation}
  \item \emph{Partially-known physics.}
        While the underlying physical mechanism is encoded in the formulation of the
        PDE, in practice, its exact parameters and boundary conditions are often
        unknown.
        For example, the position and strength of heat sources \(\rhs\) within the
        aforementioned electric component are only approximately known.
        Similarly, material parameters like thermal conductivity, which define
        \(\linop{D}\), can often only be estimated.
        Finally, the initial or boundary conditions \(\linopat{B}{\solvv}=\bfn\) are
        also only partially known.
        For example, how much heat an electrical component dissipates via its surface.
        \label{cha:partially-known-physics}
  \item \emph{Error propagation.}
        Limited computation and partially-known physics inevitably introduce error into
        the simulation.
        This resulting bias can fundamentally alter conclusions drawn from downstream
        analysis steps, in particular if these are sensitive to input variability.
        For example, an electronic component may be deemed safe based on the simulation,
        although its true internal temperature hits safety-critical levels repeatedly.
        \label{cha:error-propagation}
\end{enumerate}

\paragraph{Solving PDEs as a learning problem}
The challenges of scientific inference with PDEs are fundamentally issues of partial
information.
Here, we interpret solving a PDE as a \emph{learning problem}, specifically as
physics-informed regression, in the spirit of probabilistic numerics
\citep{Hennig2015PN,Cockayne2019BayesPNMeth,Oates2019PNRetro,Owhadi2019StatNumApprox,%
  Hennig2022PNBook}.
By leveraging the tools of Bayesian inference, we can tackle the challenges
\labelcref{cha:limited-computation,cha:partially-known-physics,cha:error-propagation}.
As illustrated in \cref{subfig:framework-illustration}, we model the solution of the PDE
with a Gaussian process, which we condition on observations of the boundary conditions,
the PDE itself and any physical measurements:

\begin{itemize}
  \item \emph{Encoding prior knowledge.}
        We can efficiently leverage any available computation by encoding inductive bias
        about the solution of the PDE.
        For example, we can identify the solution space by ``partial derivative
        counting''.
        Moreover, since PDEs typically model physical systems, expert knowledge is often
        available.
        This includes known physical properties of the system such as symmetries, as
        well as more subjective estimates from previous experience with similar systems
        or computationally cheap approximations.

  \item \emph{Conditioning on the boundary conditions.}
        The linear boundary conditions can be interpreted as measurements of the
        solution of the PDE on the boundary.
        By conditioning on (some of) these measurements, we are not limited to
        satisfying the boundary conditions exactly, but can directly model uncertain
        constraints without having to resort to point estimates.
        Instead, we propagate the uncertainty to the solution estimate.
        This also allows us to handle cases where we do not have a functional form
        \(\bfn\) of the constraints, but only a discrete set of constraints at boundary
        points.

  \item \emph{Conditioning on the PDE.}
        Conditioning a probability measure over the solution on the analytic
        ``observation'' that the PDE holds is generally intractable.
        In the spirit of classic approaches for solving PDEs, we relax the PDE-constraint
        by requiring only a finite number of projections of the associated PDE residual
        onto carefully chosen test functions to be zero.
        This choice of projections defines the discretization and allows for control
        over the amount of expended computation.
        The resulting posterior quantifies the algorithm's uncertainty within a whole
        set of solution candidates.

  \item \emph{Conditioning on measurements.}
        Finally, we can also condition on direct measurements of the solution itself.
        This is especially useful if parameters of the differential operator or boundary
        conditions are uncertain, or if the computational budget is restrictive.
\end{itemize}

The resulting posterior belief quantifies the uncertainty about the true solution
induced by limited computation and partially-known physics (see
\cref{fig:linpde-gp-uncertainty}).
By quantifying this error probabilistically, we can propagate it to any downstream
analysis or decision.
For example, to project the longevity of a newly designed electrical component, we want to simulate how likely it will hit a critical temperature threshold during
operation.
Given our posterior belief, we can simply compute the marginal probability instead of
performing Monte-Carlo sampling, which would require repeated PDE solves at significant
computational expense.

\begin{figure}
  \centering
  \begin{subfigure}[t]{\textwidth}
    \input{figures/figure_1_panels}
    \caption{%
      \emph{Learning to solve the Poisson equation.}
      A problem-specific Gaussian process prior \(\solprior\) is conditioned on
      partially-known physics, given by uncertain boundary conditions (BC) and a linear
      PDE, as well as on noisy physical measurements from experiment.
      The boundary conditions and the right-hand side of the PDE are not known but
      inferred from a small set of noise-corrupted measurements.
      The plots juxtapose the belief $\condrv{\solprior \given \cdots}$ with the true
      solution $\truesol$ of the latent boundary value problem.
    }
    \label{subfig:framework-illustration}
  \end{subfigure}\\
  \vspace{1.5em}
  \begin{subfigure}[t]{0.45\textwidth}
    \includegraphics{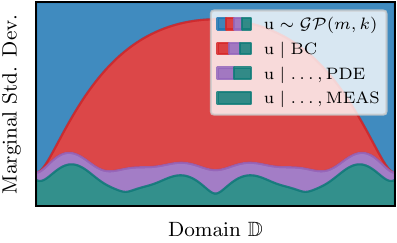}
    \caption{%
      \emph{Uncertainty quantification.}
      Marginal posterior standard deviation after conditioning on uncertain boundary
      conditions, a linear PDE, and noisy (physical) measurements.
    }
    \label{fig:linpde-gp-uncertainty}
  \end{subfigure}
  \hfill
  \begin{subfigure}[t]{0.45\textwidth}
    \includegraphics{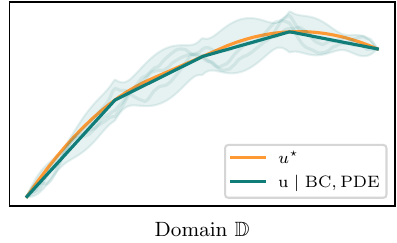}
    \caption{%
      \emph{Generalization of Classical Solvers.}
      For certain priors our framework reproduces any method of weighted residuals, e.g.~%
      the finite element method, in its posterior mean.
    }
    \label{fig:mwr-recovery}
  \end{subfigure}\\
  \caption{%
    \emph{A physics-informed Gaussian process framework for the solution of linear PDEs.}}
  \label{fig:linpde-gp-illustration}
\end{figure}

\paragraph{Contribution}
We introduce a \emph{probabilistic learning framework for the solution of (systems of)
  linear PDEs}.
Our framework can be viewed as physics-informed Gaussian process regression.
It is based on a crucial generalization of a popular result on conditioning GPs on
linear observations to observations made via an arbitrary bounded linear operator with
values in $\R^n$ (\cref{thm:gp-inference-linfctls}).
This enables \emph{combined quantification of uncertainty} from the inherent
discretization error, uncertain initial or boundary conditions, as well as noisy
measurements of the solution.
While connections between GP inference and the
solution of PDEs were made in the past (see \Cref{sec:related-work}),
corresponding methods have largely focused on estimating strong solutions by leveraging finite
difference or collocation schemes. In contrast, our framework applies to both weak and strong
formulations and generalizes a significantly broader class of existing numerical methods.
Our approach is a strict probabilistic \emph{generalization of methods of weighted
  residuals} (\cref{cor:gp-mean-mwr-recovery}), including collocation, finite volume,
(pseudo)spectral, and (generalized) Galerkin methods such as finite element methods.
The resulting probabilistic methods thus have the same convergence properties as their
classic counterparts, while providing a structured error estimate.
Moreover, the probabilistic viewpoint allows to incorporate partially-known physics and
(noisy) experimental measurements.

  \section{Background}
\label{sec:background}

\subsection{Linear Partial Differential Equations}
\label{sec:linear-pde}
A linear \emph{partial differential equation} (PDE) is an equation of the form
\begin{equation}
  \label{eqn:linear-pde}
  \linopat{D}{\solvv} = \rhs,
\end{equation}
where $\linop{D} \colon \solsp \to \rhssp$ is a linear differential operator (see
\cref{def:linear-diffop}) between a Banach space $\solsp$ of $\R^{d'}$-valued functions
and a Banach space $\rhssp$ of real-valued functions on a common open and bounded domain
$\dom \subset \R^d$, and $\rhs \in \rhssp$ is the right-hand side function.
For simplicity of exposition, we will often focus on the case $d' = 1$, in which case we
write $\sol$ instead of $\solvv$.
Systems modeled by linear PDEs are often further constrained by linear
\emph{boundary conditions} (BCs) $\linopat{B}{\solvv} = \bfn$ describing the behavior of
the system on the boundary \(\boundary{\dom}\) of the domain, where $\linop{B}$ is a
linear operator mapping functions $\solvv \in \solsp$ onto functions
$\linopat{B}{\solvv} \colon \boundary{\dom} \to \R$ defined on the boundary and
\(\bfn \colon \boundary{\dom} \to \R\).
Common types of boundary conditions for $d' = 1$ are:
\begin{itemize}[itemsep=0pt]
  \item \emph{Dirichlet}: Specify the values of the solution on the boundary, i.e.~%
        $\linopat{B}{\sol} = \sol \vert_{\boundary{\dom}}$.
  \item \emph{Neumann}: Specify the exterior normal derivative on the boundary, i.e.~%
        \(
        \linopat{B}{\sol}(\vec{x})
        \defeq \dderivat{\vec{\eta}(\vec{x})}{\sol}{\vec{x}}
        \),
        where \(\vec{\eta}(\vec{x})\) is the exterior normal vector at each point of the
        boundary.
\end{itemize}
A PDE and a set of boundary conditions is referred to as a
\emph{boundary value problem} (BVP).
A prototypical example of a linear PDE, used in thermodynamics,
electrostatics and Newtonian gravity, is the \emph{Poisson equation}
\(-\laplaceop{\sol} = \rhs\), where
$\laplaceop{\sol} = \sum_{i = 1}^d \pderiv*[2]{\vecelem{x}_i}{\sol}$
is the Laplacian.

\subsubsection{Weak Formulation}
\label{sec:weak-formulation}
Many models of physical phenomena are expressed as functions $\solvv$, which are not
(continuously) differentiable or even continuous \citep{Evans2010PDE,%
  Borthwick2018IntroPDE,Harrach2021NumPDE}.
In other words, they are not \emph{strong solutions} to any PDE.
There are also PDEs derived from established physical principles, which do not admit
strong solutions at all.
To address this, one can weaken the notion of differentiability leading to the concept
of weak solutions.
Many of the aforementioned physical phenomena are in fact weak solutions.
As an example\footnotemark, consider the weak formulation of the stationary heat
equation for non-homogeneous media
\begin{equation}
  \label{eqn:stat-heat-isotropic-non-homogeneous}
  -\divergence{\heatcond \gradient{\sol}} = \heatrhs.
\end{equation}
\footnotetext{%
  Our exposition is a strongly abbreviated version of
  \citet[Section 6.1.2]{Evans2010PDE}.
}%
Assume that $\sol \in \cdfns[2]{\dom}$, $\heatcond \in \cdfns[1]{\dom}$, and
$\heatrhs \in \cdfns[0]{\dom}$.
If $\sol$ is a solution to \cref{eqn:stat-heat-isotropic-non-homogeneous}, then we can
integrate both sides of the equation against a \emph{test function}
$\testfn \in \testfns{\dom}$, i.e.~an infinitely smooth function with compact support
(see \cref{def:test-fn}), which results in
\begin{equation*}
  -\rintegral[\dom]{\vec{x}}{\divergence{\kappa \gradient{\sol}}[\vec{x}] \testfn(\vec{x})}
  = \rintegral[\dom]{\vec{x}}{\heatrhs(\vec{x}) \testfn(\vec{x})}.
\end{equation*}
Since both $\sol$ and $\testfn$ are sufficiently differentiable, we can apply integration by
parts (Green's first identity) to the first integral to obtain
\begin{equation}
  \label{eqn:stat-heat-isotropic-non-homogeneous-weak}
  \underbrace{
    \rintegral[\dom]{\vec{x}}{\inprod{\kappa(\vec{x}) \gradient{\sol}[\vec{x}]}{\gradient{\testfn}[\vec{x}]}}
  }_{\rdefeq \evallinop{\weakbilin}{\sol, \testfn}}
  = \rintegral[\dom]{\vec{x}}{\heatrhs(\vec{x}) \testfn(\vec{x})},
\end{equation}
since $\testfn \vert_{\boundary{\dom}} = 0$.
This expression does not require $\sol$ to be twice differentiable.
Rather, $\sol$ only needs to be once \emph{weakly differentiable} (see
\citealt[Section 5.2.1]{Evans2010PDE}) with $(\gradient{\sol})_i \in \L2[\dom]$.
Intuitively speaking, a weak derivative of a (classically non-differentiable) function
``behaves like a derivative'' when integrated against a smooth test function.
These relaxed requirements on $\sol$ are exactly the defining properties of the Sobolev
space $\sobolev{1}[\dom]$, i.e.~it suffices that $\sol \in \sobolev{1}[\dom]$.
Similarly, we can weaken all other assumptions to $\testfn \in \sobolevtest{1}[\dom]$,
$\heatrhs \in \L2[\dom]$ and $\kappa \in \L\infty[\dom]$.
Then, for $\sol \in \sobolev{1}[\dom]$ and $\testfn \in \sobolevtest{1}[\dom]$,
\cref{eqn:stat-heat-isotropic-non-homogeneous-weak} is equivalent to
\begin{equation}
  \label{eqn:stat-heat-isotropic-non-homogeneous-bilinear-linear}
  \evallinop{\weakdiffop}{\sol} = \weakrhs,
\end{equation}
where
\(
\weakdiffop \colon \sobolev{1}[\dom] \to \dualsp{\sobolevtest{1}[\dom]},
u \mapsto \evallinop{\weakbilin}{\sol, \cdot}
\)
and $\weakrhs = \inprod{\heatrhs}{\cdot}[\L2[\dom]] \in \dualsp{\sobolevtest{1}[\dom]}$.
Here, $\dualsp{\sobolevtest{1}[\dom]}$ denotes the continuous dual space of
$\sobolevtest{1}[\dom]$.
We define a \emph{weak solution} of \cref{eqn:stat-heat-isotropic-non-homogeneous} as
$\sol \in \sobolev{1}[\dom]$ such that
\cref{eqn:stat-heat-isotropic-non-homogeneous-bilinear-linear}, known as the \emph{weak}
or \emph{variational formulation}, holds.
\begin{definition}
  \label{def:weak-formulation}
  A \emph{weak formulation} of a linear PDE $\linopat{D}{\solvv} = \rhs$ is an equation
  of the form
  \begin{equation}
    \label{eqn:weak-formulation}
    \evallinop{\weakdiffop}{\solvv} = \weakrhs,
  \end{equation}
  where $\weakdiffop \colon \solsp \to \dualsp{\rhssp}$ is a linear operator induced by
  the differential operator $\diffop$ and $\weakrhs \in \dualsp{\rhssp}$ is a linear
  functional induced by the right-hand side $\rhs$.
  A solution to \cref{eqn:weak-formulation} is called a \emph{weak solution} of the PDE.
  In this context, $\linopat{D}{\solvv} = \rhs$ is called the \emph{strong %
    formulation} of the PDE and any solution to it is called a \emph{strong} or
  \emph{classical solution}.
\end{definition}

\todo{Explain why we use nonstandard notation}

\subsubsection{Methods of Weighted Residuals}
\label{sec:mwr}
Unfortunately, linear PDEs both in weak and strong formulation are in general not
analytically solvable, so approximate solutions are sought instead.
\emph{Methods of weighted residuals} (MWR) constitute a large family of popular
numerical approximation schemes for linear PDEs, including \emph{collocation},
\emph{finite volume}, \emph{(pseudo)spectral}, and \emph{(generalized) Galerkin methods}
such as \emph{finite-element methods} \citep{Fletcher1984CompGalerkin}.
Loosely speaking, MWRs interpret a linear PDE as a root-finding problem for the
associated PDE \emph{residual}, i.e.~%
\(
\linopat{D}{\solvv} - \rhs = 0.
\)
Finding the solution of such a system of an uncountably infinite number of equations with infinitely
many unknowns is generally intractable.
To render the problem tractable, we reduce the number of equations by ``projecting''
onto $\R^n$ using a finite number of continuous linear \emph{test functionals}
\( \mwrtestfctl^{(1)}, \dotsc, \mwrtestfctl^{(n)} \in \dualsp{\rhssp} \), i.e.~we use
that the residual being zero implies
\begin{equation}
  \label{eqn:mwr-proj-residual}
  \evallinop{\mwrtestfctl^{(i)}}{\linopat{D}{\solvv} - \rhs}
  = \evallinop{(\mwrtestfctl^{(i)} \circ \linop{D})}{\solvv} - \evallinop{\mwrtestfctl^{(i)}}{\rhs}
  = 0
\end{equation}
for all $i = 1, \dotsc, n$.
This is a relaxation of the original problem, since the above is not an
equivalence but only an implication.\footnotemark
\footnotetext{
  This means that \cref{eqn:mwr-proj-residual} will generally have infinitely many
  solutions and needs regularization to have a unique solution.
}
A common choice for the test functionals appearing in a large class of MWRs is the
integral
\( \evallinop{\mwrtestfctl^{(i)}}{v} \defeq \rintegral[\dom]{x}{\mwrtest^{(i)}(x) v(x)}, \)
where $\mwrtest^{(i)} \in \rhssp$ is a so-called \emph{test function}.
In this case, the test functionals define a weighted average of the current residual, giving rise to the name of the
method.

To reduce the number of unknowns, MWRs also often approximate the unknown solution
function $\solvv$ via finite linear combinations of \emph{trial functions}
\( \mwrtrialvv^{(1)}, \dotsc, \mwrtrialvv^{(m)} \in \solsp, \)
i.e.~%
\begin{equation}
  \label{eqn:mwr-u-hat}
  \solvv
  \approx \hat{\solvv}
  \defeq \sum_{i = 1}^m \mwrcoordselem_i \mwrtrialvv^{(i)},
\end{equation}
where $\mwrcoords \in \R^m$ is the coordinate vector of $\hat{\vec{u}}$ in the
finite-dimensional subspace
$\mwrtrialsp \defeq \linspan{\mwrtrialvv^{(1)}, \dotsc, \mwrtrialvv^{(m)}} \subset \solsp$.
By substituting \cref{eqn:mwr-u-hat} into \cref{eqn:mwr-proj-residual}, we arrive at a linear system
\(
\mwrmat \mwrcoords = \mwrrhs,
\)
where
\(
\mwrmatelem_{ij} \defeq \evallinop{\mwrtestfctl^{(i)}}{\linopat{D}{\mwrtrialvv^{(j)}}}
\)
and
\(
\mwrrhselem_i \defeq \evallinop{\mwrtestfctl^{(i)}}{\rhs}.
\)
Hence, the approximate solution function obtained from this method is given by
\begin{equation}
  \label{eqn:mwr-solution}
  \mwrsol = \sum_{i = 1}^m \mwrsolcoordselem_i \mwrtrialvv^{(i)},
  \qquad \text{where} \qquad
  \mwrsolcoords = \mwrmat\inv \mwrrhs
\end{equation}
assuming that $\mwrmat$ is invertible.
Above, we implicitly assume that the trial functions $\mwrtrialvv^{(i)}$ satisfy the boundary conditions, i.e.~we describe so-called \emph{interior
  methods}.\footnotemark
\footnotetext{
  By stacking the residuals corresponding to the PDE and the boundary conditions, the
  approach outlined here can be used to realized \emph{mixed methods}, which
  solve the boundary value problem without requiring that $\hat{\solvv}$ fulfills the
  boundary conditions by construction.
}

The procedure outlined above can also be applied to approximate weak solutions to linear
PDEs by simply substituting $\diffop \gets \weakdiffop$, $\rhs \gets \weakrhs$, and
$\rhssp \gets \dualsp{\rhssp}$.
In this case, it is customary to employ test functionals
$\mwrtestfctl^{(i)} \in \bidualsp{\rhssp}$
induced by test functions $\mwrtest^{(i)} \in \rhssp$ such that
\(
\evallinop{\mwrtestfctl^{(i)}}{\evallinop{\weakdiffop}{\solvv}}
= \evallinop{\evallinop{\weakdiffop}{\solvv}}{\mwrtest^{(i)}}
\)
and
\(
\evallinop{\mwrtestfctl^{(i)}}{\weakrhs}
= \evallinop{\weakrhs}{\mwrtest^{(i)}}.
\)\footnotemark
\footnotetext{
  This uses the fact that there is an isometric embedding
  \(
  \iota \colon \rhssp \to \bidualsp{\rhssp},
  v \mapsto (l \mapsto \evallinop{l}{v}),
  \)
  where $\bidualsp{\rhssp}$ denotes the strong bidual of $\rhssp$
  \citep[Section IV.8]{Yosida1995FuncAna}.
}
In particular, in the example from \cref{sec:weak-formulation}, this implies
\(
\evallinop{\mwrtestfctl^{(i)}}{\evallinop{\weakdiffop}{\solvv}}
= \evallinop{\weakbilin}{\solvv, \mwrtest^{(i)}}
\)
and
\(
\evallinop{\mwrtestfctl^{(i)}}{\weakrhs}
= \inprod{\heatrhs}{\mwrtest^{(i)}}[\L2[\dom]].
\)
Following \citet{Fletcher1984CompGalerkin}, we will also refer to these methods as
methods of weighted residuals.

\Cref{tbl:mwrs-trial-test} lists the aforementioned examples of MWRs together with the
corresponding trial and test function(al)s that induce the method.

\subsection{Gaussian Processes}
\label{sec:gp}
A \emph{Gaussian process} (GP) $\gpprior$ with index set $\gpidcs$ is a family
\(
\set{\gpprior_\vec{x}}_{\vec{x} \in \gpidcs}
\)
of real-valued random variables on a common probability space $(\Omega, \sigalg, \prob{})$,
such that, for each finite set of indices $\vec{x}_1, \dotsc, \vec{x}_n \in \gpidcs$,
the joint distribution of $\gpprior_{\vec{x}_1}, \dotsc, \gpprior_{\vec{x}_n}$ is
Gaussian.
We also write $\gpprior(\vec{x}) \defeq \gpprior_\vec{x}$ and
$\gpprior(\vec{x}, \omega) \defeq \gpprior_\vec{x}(\omega)$.
The function $\vec{x} \mapsto \expectation[\prob{}]{\gpprior(\vec{x})}$ is called the
\emph{mean (function)} of $\gpprior$ and the function
$(\vec{x}_1, \vec{x}_2) \mapsto \covariance[\prob{}]{\gpprior(\vec{x}_1)}{\gpprior(\vec{x}_2)}$
is called the \emph{covariance function} or \emph{kernel} of $\gpprior$.
We write $\gpprior \sim \gp{m}{k}$ to indicate that $\gpprior$ is a Gaussian process
with mean function $m$ and covariance function $k$.
For each $\omega \in \Omega$, the function
\(
\gpprior(\cdot, \omega) \colon \gpidcs \to \R,
\vec{x} \mapsto \gpprior(\vec{x}, \omega)
\)
is called a \emph{sample} or \emph{(sample) path} of the Gaussian process.
We denote the set of all sample paths of $\gpprior$ by
\(
\paths{\gpprior}
\defeq \set{\gpprior(\cdot, \omega) \where \omega \in \Omega}
\subset \maps{\gpidcs}{\R}.
\)

The sample paths of Gaussian processes are always real-valued.
However, especially in the context of PDEs, vector-valued functions are ubiquitous, e.g.~%
when dealing with vector fields such as the electric field.
Fortunately, the index set of a Gaussian process can be chosen freely, which means that
we can ``emulate'' vector-valued GPs.
More precisely, a function $\vec{f} \colon \gpidcs \to \R^{d'}$ can be equivalently
viewed as a function $\tilde{f} \colon \{ 1, \dotsc, d' \} \times \gpidcs \to \R$ with
$\tilde{f}(i, \vec{x}) \defeq \vecelem{f}_i(\vec{x})$.
Applying this construction to a Gaussian process leads to the notion of a multi-output
Gaussian process:
A $d'$\emph{-output Gaussian process} $\morproc{f}$ with index set $\gpidcs$ is a family
$\set{\morproc{f}_\vec{x}}_{\vec{x} \in \gpidcs}$ of $\R^{d'}$-valued random variables
on $(\Omega, \sigalg, \prob{})$ such that
\(
\tilde{\rproc{f}}
\defeq \set{(\morproc{f}_\vec{x})_i}_{(i, \vec{x}) \in \{ 1, \dotsc, d \} \times \gpidcs}
\)
is a Gaussian process.
As before, we define $\morproc{f}(\vec{x}) \defeq \morproc{f}_\vec{x}$ and
$\morproc{f}(\vec{x}, \omega) \defeq \morproc{f}_\vec{x}(\omega)$.
The mean function $\vec{m} \colon \gpidcs \to \R^{d'}$ and covariance function
$\mat{k} \colon \gpidcs \times \gpidcs \to \R^{d' \times d'}$ of $\morproc{f}$ are
defined by
\begin{equation*}
  \vec{m}(\vec{x}) =
  \begin{pmatrix}
    \tilde{m}(1, \vec{x}) \\
    \vdots                \\
    \tilde{m}(d', \vec{x})
  \end{pmatrix}
  \quad \text{and} \quad
  \mat{k}(\vec{x}_1, \vec{x}_2) =
  \begin{pmatrix}
    \tilde{k}((1, \vec{x}_1), (1, \vec{x}_2))  & \hdots & \tilde{k}((1, \vec{x}_1), (d', \vec{x}_2))  \\
    \vdots                                     & \ddots & \vdots                                      \\
    \tilde{k}((d', \vec{x}_1), (1, \vec{x}_2)) & \hdots & \tilde{k}((d', \vec{x}_1), (d', \vec{x}_2)) \\
  \end{pmatrix},
\end{equation*}
where $\tilde{\rproc{f}} \sim \gp{\tilde{m}}{\tilde{k}}$,
and we write $\morproc{f} \sim \gp{\vec{m}}{\mat{k}}$.

  \section{Learning the Solution to a Linear PDE}
\label{sec:pde-solution-inference}
Consider a linear partial differential equation
\(
\linopat{D}{\solvv} = \rhs
\)
subject to linear boundary conditions \(\linopat{B}{\solvv} = \bfn\) as in
\cref{sec:linear-pde}.
Our goal is to find a solution $\solvv \in \solsp$ satisfying the PDE for (partially)
known $(\linop{D}, \rhs)$ and $(\linop{B}, \bfn)$.
In general, one cannot find a closed-form expression for the solution $\solvv$
\citep{Borthwick2018IntroPDE}.
Therefore, we aim to compute an accurate approximation \(\hat{\solvv} \approx \solvv\)
instead.
Motivated by the challenges \labelcref{cha:limited-computation,cha:partially-known-physics,%
  cha:error-propagation} of partial information inherent to numerically solving PDEs, we
approach the problem from a statistical inference perspective.
In other words, we will \emph{learn} the solution of the PDE from multiple heterogeneous
sources of information.
This way we can quantify the epistemic uncertainty about the solution at any time during
the computation, as \cref{subfig:framework-illustration} illustrates.

\paragraph{Indirectly Observing the Solution of a PDE}
Typically, we think of observations as a finite number of direct measurements
\(\solvv(\vec{x}_i) = \vec{y}_i\) of the latent function \(\solvv\).
As it turns out, we can generalize this notion of a measurement and even interpret the
PDE itself as an (indirect) observation of \(\solvv\).
As an example, consider the important case where \(\solvv\) models the state of a
physical system.
The laws of physics governing such a system are often formulated as \emph{conservation
  laws} in the language of PDEs.
For example, they may require physical quantities like mass, momentum, charge or energy
to be conserved over time.

\begin{example}[Thermal Conduction and the Heat Equation]
  \label{ex:thermal-conduction-heat-equation}
  Say we want to simulate heat conduction in a solid object with shape $\dom \subset \R^3$,
  i.e.~we want to find the time-varying temperature distribution $\sol \colon [0, T] \times \dom \to \R$.
  Neglecting radiation and convection, \(\sol(t, \vec{x})\) is described by a linear PDE
  known as the \emph{heat equation} \citep{Lienhard2020HeatTransfer}.
  Assuming spatially and temporally uniform material parameters $c_p, \rho, \heatcond \in \R$,
  it reduces to
  \begin{equation}
    \label{eqn:heat-equation-isotropic-uniform}
    \left(c_p \rho \pderiv*[1]{t}{} - \heatcond \laplaceop{}\right) \sol - \heatrhs = 0.
  \end{equation}
  Thermal conduction is described by \(-\heatcond \laplaceop{\sol}\), while $\heatrhs$
  are local heat sources, e.g.~from electric currents.
  Any energy flowing into a region due to conduction or a heat source is balanced by an
  increase in energy of the material.
  The net-zero balance shows that energy is conserved.
\end{example}

Notice how a conservation law is an \emph{observation} of the behavior of the physical
system!
To formalize this, we begin by rephrasing the classical notion of an observation at a
point \(\vec{x}_i\) as measuring the result of a specific linear operator applied to the
solution \(\solvv\):
\begin{equation*}
  \solvv(\vec{x}_i) = \vec{y}_i
  \iff \evallinop{\delta_{\vec{x}_i}}{\solvv} = \vec{y}_i
\end{equation*}
where \(\delta_{\vec{x}_i}\) is the evaluation functional.
Now, the key idea is to generalize the notion of a direct observation to collecting
information about the solution via an arbitrary linear operator \(\linfctls{L}\) with
values in $\R^n$ applied to the solution \(\solvv\), such that
\(\linfctlsat{L}{\solvv} = \vec{y} \iff \linfctlsat{L}{\solvv} - \vec{y} = \vec{0}.\)
The affine operator
\begin{equation}
  \label{eqn:linpde-information-op}
  \linfctlsat{I}{\solvv} \defeq \linfctlsat{L}{\solvv} - \vec{y}
\end{equation}
is a specific kind of \emph{information operator} \citep{Cockayne2019BayesPNMeth}.
In this setting the information operator may describe a conservation law
as in \cref{eqn:heat-equation-isotropic-uniform}, a general linear PDE of the form
\eqref{eqn:linear-pde} or an arbitrary affine operator mapping a function space into
$\R^n$.
This generalized notion of an observation turns out to be very powerful to incorporate
different kinds of mathematical, physical, or experimental properties of the solution.
Since PDEs and conservation laws are often assumed to hold exactly, we focused on
noise-free observations above.
However, generally we are not limited to this case and can also model $\vec{y}$ as random
variable, in which case the information operator \(\linopat{I}{(\solvv, \vec{y})}\) is a
(jointly) linear functional of the solution $\solvv$ \emph{and} the right-hand side
$\vec{y}$.

\subsection{Solving PDEs as a Bayesian Inference Problem}
\label{sec:solving-pdes-bayesian-inference}

One of the main challenges \labelcref{cha:limited-computation,cha:partially-known-physics,%
  cha:error-propagation} outlined in the beginning is the limited computational budget
available to us to approximate the solution.
Fortunately, in practice, the solution $\solvv$ is not hopelessly unconstrained, but we
usually a-priori have information about it.
At the very least, we know the space of functions $\solsp$ in which to search for the
solution.
Additionally, we might have expert knowledge about its rough shape and value range, or
solutions to related PDEs at our disposal.
Now, the question becomes: How do we combine this prior knowledge with indirect
observations of the solution through the information operator $\linfctls{I}$
\labelcref{eqn:linpde-information-op}?
To do so, we turn to the Bayesian inference framework.
This provides a different perspective on the numerical problem of solving a linear PDE
as a \emph{learning task}.

\paragraph{Gaussian Process Inference}
We represent our belief about the solution of the linear PDE via a (multi-output)
Gaussian process
\(
\solvvprior \sim \gp{\vec{m}}{\mat{k}}
\)
with mean function $\vec{m} \colon \dom \to \R^{d'}$ and kernel
$\mat{k} \colon \dom \times \dom \to \R^{d' \times d'}$.
Gaussian processes are well-suited for this purpose since:
\begin{enumerate}[itemsep=0pt,label=(\roman*)]
  \item For an appropriate choice of kernel, the Gaussian process defines a probability
        measure over the function space in which the PDE's solution is sought.
  \item Kernels provide a powerful modeling toolkit to incorporate prior information
        (e.g.~variability, periodicity, multi-scale effects, in- / equivariances, \dots)
        in a modular fashion.
  \item Measurement noise often follows a Gaussian distribution.
  \item Conditioning a Gaussian process on observations made via a linear map again
        results in a Gaussian process.\label{itm:linear-gaussian-inference}
\end{enumerate}
While the result in \labelcref{itm:linear-gaussian-inference} is used ubiquitously in
the literature, its general form where observations are made via \emph{arbitrary} linear
operators with values in $\R^n$ as opposed to \emph{finite-dimensional} linear maps, has
only been rigorously demonstrated for Gaussian \emph{measures} on separable Hilbert
spaces, not for the Gaussian \emph{process} perspective, to the best of our knowledge.
The two perspectives are closely related, but there are thorny technical difficulties to
consider.
We intentionally frame the problem from the Gaussian process perspective to make use of
the expressive modeling capabilities provided by the kernel.
Our framework at its core relies on this result, which we explain in detail in
\cref{sec:affine-gp-inference} and prove in \cref{sec:proof-theorem-1}.

\subsubsection{Encoding Prior Knowledge about the Solution}
\label{sec:encoding-prior-knowledge}
We can infer the solution of a linear PDE more quickly by specifying inductive biases in
the prior, which can encode both provable and approximately known properties of the
solution.\footnotemark
\footnotetext{In the special case of GP regression, if the prior smoothness matches the
  smoothness of the target function \(\solvv\), the convergence rate is optimal in the
  number of observations \cite[Thm.~5.1]{Kanagawa2018GPKernMeth}.}

\paragraph{Function Space of the Solution}
The most basic known property derived from the PDE is an appropriate choice of function
space for the solution.
For strong solutions, this can be done by inspecting the differential operator
\(\linop{D}\) and keeping track of the partial derivatives.
In fact, in implementation this can be automatically derived solely from the problem
definition, e.g.~by compositionally defining differential operators and storing
information on the necessary differentiability.
Let \(\vecelem{\beta}_i \in \N_0\) be the number of times any partial derivative in the
differential operator \(\linop{D}\) differentiates w.r.t.~the variable $\vecelem{x}_i$.%
\footnote{%
  Formally, $\vec{\beta} \in \N_0^d$ is the ``smallest'' multi-index such that
  $\vec{\alpha} \le \vec{\beta}$ for every multi-index of a partial derivative occurring
  in $\linop{D}$ (see \cref{def:multi-index,def:linear-diffop}).
}
Then a sensible choice of solution space is the space $\solsp = \bucdfns{\vec{\beta}}{\dom}$ (see \cref{sec:gp-random-function}).
To define a prior with paths in this solution space, a common choice of prior covariance function is the tensor product of one-dimensional half-integer Matérn
kernels $k_{\vecelem{\nu}_i}$ with
$\vecelem{\nu}_i = \vecelem{\beta}_i + \frac{1}{2}$ (see \cref{sec:prior-selection}).
For weak solutions, the Sobolev spaces $\solsp = \sobolev{m}[\dom]$ are prototypical choices of solution spaces.
In this case, a (multivariate) Matérn kernel with smoothness parameter $\nu = m + \frac{1}{2}$ is a useful default prior covariance function.
In both cases, a parametric kernel $k(\vec{x}_0, \vec{x}_1) = \vec{\phi}(\vec{x}_0)\T \mat{\Sigma} \vec{\phi}(\vec{x}_1)$ is also a valid choice if $\vecelem{\phi}_i \in \solsp$.
See \cref{sec:priors-weak-solutions} for a detailed account on how to choose priors for physics-informed GP regression.

\paragraph{Symmetries, In- and Equivariances}
Many solutions of PDEs exhibit a-priori known symmetries.
For example, to calculate the strength of a magnet rotated by \(\mat{R} : \R^3 \to \R^3\),
one can equivalently compute the field $\vec{B}$ of the magnet in its original position
and rotate the field, i.e.~\(\vec{B}(\mat{R} \vec{x}) = \mat{R} \vec{B}(\vec{x})\).
Inductive biases reflecting symmetries can be encoded via kernels that are \emph{invariant}
\(
\mat{k}(\mat{\rho}_g \vec{x}_0, \mat{\rho}_g \vec{x}_1)
= \mat{k}(\vec{x}_0, \vec{x}_1),
\)
or \emph{equivariant}
\(
\mat{k}(\mat{\rho}_g \vec{x}_0, \mat{\rho}_g \vec{x}_1)
= \mat{\rho}_g \mat{k}(\vec{x}_0, \vec{x}_1)\mat{\rho}_g^*,
\)
where \(\mat{\rho}_g\) is a unitary group representation.
The most commonly used kernels are stationary, i.e.~translation invariant, but one can
also construct invariant \citep{Haasdonk2007Invariant,Azangulov2022Stationary},
as well as equivariant kernels \citep{Reisert2007LearnEquiv,Holderrieth2021Equivariant} for
many other group actions.

\paragraph{Related Problems}
Given a set of solutions from related problems, the prior mean function can be set to
a combination thereof and the prior kernel can then be chosen to reflect how related the problems are.
For example, if we have an approximate solution of a PDE computed on a coarser
mesh, we can condition our function space prior on the coarse solution with a noise level reflecting the fidelity of the discretization.
Similarly, if we solved the same PDE with different parameters, we can condition on the
available solutions with a noise level chosen according to how similar the parameters
are to the one of interest.

\paragraph{Domain Expertise}
Domain experts often have approximate knowledge of what solutions can be expected,
either from experience, previous experiments, or familiarity with the physical
interpretation of the solution \(\solvv\). 
For example, an engineer who designs electrical components is likely able to give
realistic temperature ranges for a component for which we aim to simulate the temperature distribution.
This can be included by choosing the (initial) kernel hyperparameters, such as the
output- and lengthscales based on this expertise.

\subsubsection{(Indirectly) Observing the Solution}
\label{sec:indirect-observations}
From a computational perspective, the most important reason for choosing Gaussian
processes is that when conditioning on linear observations, the resulting posterior is
again a Gaussian process with closed form mean and covariance function
\citep{Bishop2006PRML}.
We extend this classic result from observations via a finite-dimensional linear map to
general $\R^n$-valued linear operators in \Cref{thm:gp-inference-linfctls}.
This is crucial to condition on the different types of observations, most importantly
the PDE itself, made via the information operator in \eqref{eqn:linpde-information-op}.
Given such an affine observation defined via a linear operator
\(\linfctls{L} \colon \solsp \to \R^n\)
and an independent Gaussian random variable
\(\rvec{\epsilon} \sim \gaussian{\vec{\mu}}{\mat{\Sigma}},\)
we can condition our prior belief using \cref{thm:gp-inference-linfctls} on the
observations to obtain a posterior of the form
\(
\condrv*{%
  \solvvprior
  \given
  (\linfctlsat{L}{\solvvprior} + \rvec{\epsilon} = \vec{y})
}
\sim \gp{%
\vec{m}^{\condrv{\solvvprior \given \vec{y}}}
}{%
\mat{k}^{\condrv{\solvvprior \given \vec{y}}}
}
\)
with mean and covariance function given by
\begin{align}
  \label{eqn:linpde-posterior-mean}
  \vecelem{m}^{\condrv{\solvvprior \given \vec{y}}}_i(\vec{x})
   & = \vecelem{m}_i(\vec{x}) +
  \linfctlsat{L}{\matelem{k}_{:,i}(\cdot, \vec{x})}\T
  \left( \LkL{\linfctls{L}}{\mat{k}} + \mat{\Sigma} \right)\pinv
  \left( \vec{y} - (\linfctlsat{L}{\vec{m}} + \vec{\mu}) \right), \\
  \label{eqn:linpde-posterior-covariance}
  \matelem{k}^{\condrv{\solvvprior \given \vec{y}}}_{i,j}(\vec{x}_1, \vec{x}_2)
   & = \matelem{k}_{i,j}(\vec{x}_1, \vec{x}_2) +
  \linfctlsat{L}{\matelem{k}_{:,i}(\cdot, \vec{x}_1)}\T
  \left( \LkL{\linfctls{L}}{\mat{k}} + \mat{\Sigma} \right)\pinv
  \linfctlsat{L}{\matelem{k}_{:,j}(\cdot, \vec{x}_2)}.
\end{align}
We will now look more closely at how we can condition on the boundary conditions, the
PDE itself and direct measurements of the solution.

\paragraph{Observing the Solution via the PDE}
The differential operator $\linop{D}$ in \cref{eqn:linear-pde} is linear and therefore
it is tempting to define the information operator
$\linopat{I}{\solvvprior} = \linopat{D}{\solvvprior} - f$
and attempt to condition on $\linopat{I}{\solvvprior} = 0$.
Under some assumptions on $\solsp$, $\linop{D}$, and $\solvvprior$, one can even show
that this is well-defined.
Unfortunately, it turns out that computing the posterior moments is then at least as
hard as solving the PDE directly and thus typically intractable in practice.
Loosely speaking, this is because $\rhs$ is a function and hence $\linopat{D}{\solvvprior} = \rhs$
corresponds to an infinite number of observations.
However, by only enforcing the PDE at a finite number of points in the domain, we can
immediately give a canonical example of an approximation to this intractable information
operator.
Concretely, we can condition $\solvvprior$ on the fact that the PDE holds at a finite
sequence of well-chosen domain points
$\mat{X}_\text{PDE} = (\vec{x}_i)_{i = 1}^n \in \dom^n,$
i.e.~we compute
\(
\condrv{%
  \solvvprior
  \given
  (\linopat{D}{\solvvprior}(\mat{X}_\text{PDE}) - f(\mat{X}_\text{PDE}) = 0)
}
\)
by choosing \(\linfctls{L}=\delta_{\mat{X}_\text{PDE}} \circ \linop{D}\) and
\(\vec{y} = f(\mat{X}_\text{PDE})\).
If the set $\mat{X}_\text{PDE}$ of domain points is dense enough, we obtain a
good approximation to the exact conditional process.
This approach, known as the \emph{probabilistic meshless method} \citep{Cockayne2017PNPDEInv},
is analogous to existing non-probabilistic approaches to solving PDEs, commonly referred
to as \emph{collocation methods}, wherein the points $\mat{X}$ are called \emph{collocation %
  points}.
Satisfying the PDE at a set of collocation points is far from the only choice within our
general framework.
For example, we can choose a set of test functions
\(\mwrtestfctl^{(i)} \in \dualsp{\rhssp},\)
which we use to observe the PDE with, such that
\(
\evallinop{\linfctlselem{L}_i}{\solvvprior}
= \evallinop{\mwrtestfctl^{(i)}}{\linopat{D}{\solvvprior}}
\)
and \( \vec{y}_i = \evallinop{\mwrtestfctl^{(i)}}{f} \).
For efficient evaluation of the differential operator we can further represent the
solution in a basis of trial functions from a subspace \(\mwrtrialsp\), resulting in
\(
\evallinop{\linfctlselem{L}_i}{\solvvprior}
= \evallinop{\mwrtestfctl^{(i)}}{\linopat{D}{\evallinop{\mwrtrialproj}{\solvvprior}}}
\).
This turns out to be very powerful and is analogous to some of the most successful
classical PDE solvers.
In fact, for certain priors and choices of subspaces, our framework recovers several
important classic solvers in the posterior mean (see \Cref{sec:gp-mean-mwr-recovery}).
The above can be applied to both time-dependent and time-independent PDEs and regardless
of the type of linear PDE (e.g.~elliptic, parabolic, hyperbolic).
Moreover, an extension to systems of linear PDEs is straightforward.

\paragraph{Observing the Solution at the Boundary}
As for the PDE, we could attempt to directly condition on the boundary conditions by
choosing \(\linfctls{L} = \linop{B}\) and \(\vec{y} = g\).
However, we are faced with the same intractability issues that we discussed above.
Instead, we observe that the boundary conditions hold at a finite set of points
\(\mat{X}_{\text{BC}} \subset \boundary{\dom}\), i.e.~%
\(\linop{L}=\delta_{\mat{X}_{\text{BC}}} \circ \mathcal{B}\)
and \(\vec{y} = g(\mat{X}_{\text{BC}})\).
In practice, sometimes the boundary conditions are only known at a finite set of points
making this a natural choice.

\paragraph{Observing the Solution Directly}
Finally, as in standard GP regression, we can directly condition on (noisy) measurements
of the solution, for example from a real world experiment, by choosing
\(\linfctls{L} = \delta_{\mat{X}_\text{MEAS}}\) and \(\vec{y} = \truesolvv(\mat{X}_\text{MEAS})\).\\

\par
\noindent In summary, the probabilistic viewpoint allows us to
\begin{itemize}[itemsep=0pt]
  \item encode prior information about the solution,
  \item condition on various kinds of (partial) information, such as the boundary condition, the PDE itself, or direct measurements, and
  \item output a structured error estimate, reflecting all obtained information and performed computation.
\end{itemize}
We will now give concrete examples for some of the possible modeling choices described above in a case study.

\subsection{Case Study: Modeling the Temperature Distribution in a CPU}
\label{sec:cpu-stationary-1d}
Central processing units (CPUs) are pieces of computing hardware that are constrained by
the vast amounts of heat they dissipate under computational load.
Surpassing the maximum temperature threshold of a CPU for a prolonged period of time can
result in reduced longevity or even permanent hardware damage \citep{Michaud2019CPUTemp}.
To counteract overheating, cooling systems are attached to the CPU, which are controlled
by digital thermal sensors (DTS).
For simplicity, assume that the CPU is under sustained computational load and that the
cooling device is controlled in a way such that the die reaches thermal equilibrium.
\begin{example}[Stationary Heat Equation]
  \label{ex:stationary-heat-equation}
  The temperature distribution of a solid at thermal equilibrium, i.e.~%
  $\pderiv*[1]{t}{\sol} = 0$ in \Cref{ex:thermal-conduction-heat-equation}, is described
  by the linear PDE
  \begin{equation}
    \label{eqn:stationary-heat-equation-isotropic-uniform-residual}
    - \heatcond \laplaceop{\sol} - \heatrhs = 0,
  \end{equation}
  known as the \emph{stationary heat equation} \citep{Lienhard2020HeatTransfer}.
  For our choice of material parameters \cref{eqn:stationary-heat-equation-isotropic-uniform-residual}
  is equivalent to the Poisson equation with $\rhs = \frac{\heatrhs}{\heatcond}$.
\end{example}
While the sensors control cooling, they only provide local, limited-precision
measurements of the CPU temperature.
This is problematic, since the chip may reach critical temperature thresholds in
unmonitored regions.
Therefore, our goal will be to infer the temperature in the entire CPU.
We will use our framework to integrate the physics of heat flow, the controlled cooling
at the boundary, and the noisy temperature measurements from the sensors.
See \cref{fig:cpu-stationary-2d} for an illustration of the result.
%
During manufacturing, the resulting belief over the temperature distribution could then
help decide whether the CPU design needs to be changed to avoid premature failure.
From here on out, we focus on a 1D slice across the CPU surface, as shown in
\cref{fig:cpu-stationary-1d-geometry-rhs} (top), to easily visualize uncertainty.

\begin{figure}[t]
  \centering
  \subcaptionbox{%
    Top: CPU die with CPU cores as heat sources and uniform cooling over the whole
    surface.\\
    Bottom: Magnitude of heat sources and sinks \(\heatrhs\) in the 1D slice in the
    upper subplot (\textcolor{orange}{\textbf{---}}).
    \label{fig:cpu-stationary-1d-geometry-rhs}
  }{%
    \includegraphics[width=0.44\textwidth]{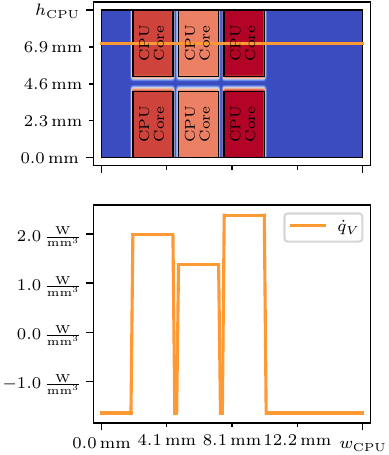}
  }
  \hfill
  \subcaptionbox{%
    Gaussian process integrating \emph{prior information} about the temperature
    distribution, a \emph{mechanistic model} of heat conduction in the form of a linear
    PDE, and \emph{empirical measurements} $(\mat{X}_\text{DTS}, \vec{y}_\text{DTS})$
    taken by limited-precision sensors (DTS).
    The plot shows the GP mean and a 1D slice illustrating the posterior uncertainty
    along with a few samples.
    \label{fig:cpu-stationary-2d}
  }{%
    \includegraphics[width=0.53\textwidth]{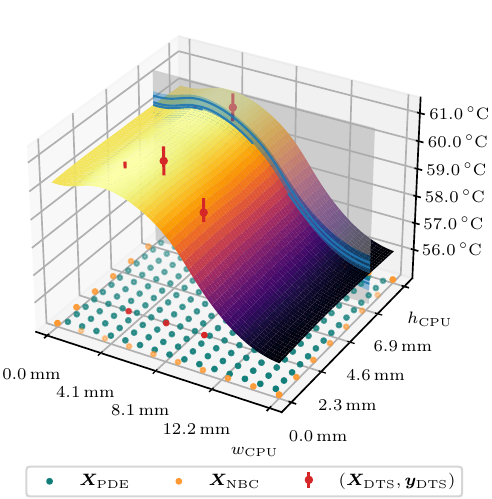}
  }
  \caption{%
    Physics-informed Gaussian process model of the stationary temperature distribution
    in an idealized hexa-core CPU die under sustained computational load.
  }
\end{figure}

\paragraph{Encoding Prior Knowledge}
By inspecting the PDE's differential operator
\(
\linop{D}
= -\heatcond \laplaceop{}
= -\heatcond \sum_{i = 1}^d \pderiv*[2]{\vecelem{x}_i}{},
\)
we can deduce that the paths of our Gaussian process need to be twice-differentiable in
every input variable $\vecelem{x}_i$.
The construction in \cref{sec:priors-gp-inference-linfctls} tells us that that a GP
prior whose covariance function is a tensor product $k_{\vec{\nu}}$ of one-dimensional
Matérn kernels $k_{\vecelem{\nu}_i}$ with \(\vecelem{\nu}_i = 2 + \frac{1}{2} = \frac{5}{2}\)
fulfills the desired path properties.
Assume we also know what temperature ranges are plausible from similar CPU
architectures, meaning we set the kernel output scale to \(\sigma_\text{out}^2=9\).
\Cref{fig:cpu-stationary-1d-prior} shows the prior process $\solprior$ on along with its
image $\linopat{D}{\solprior} \sim \gp{\linopat{D}{m}}{\sigma_\text{out}^2 \LkL{\linop{D}}{k}}$
under the differential operator.
A draw from $\linopat{D}{\solprior}$ can be interpreted as the heat sources
and sinks that generated the corresponding temperature distribution draw from $\solprior$.
\begin{figure}[t]
  \centering
  \subcaptionbox{%
    Gaussian process prior with a Mat\'ern-$\frac{5}{2}$ kernel over the temperature
    distribution of the CPU.
  }{%
    \includegraphics[width=0.48\textwidth]{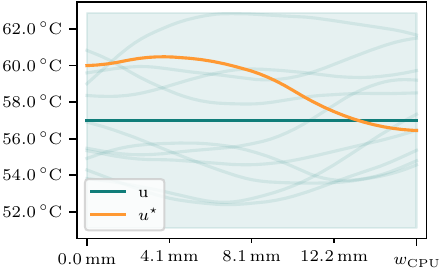}
  }
  \hfill
  \subcaptionbox{%
    Prior under the differential operator $\linop{D} = - \heatcond \laplaceop{}$
    along with heat sources and sinks $\heatrhs$.
  }{%
    \includegraphics[width=0.48\textwidth]{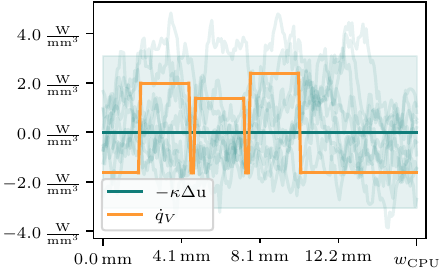}
  }
  \caption{%
    Prior model for the stationary temperature distribution of a CPU die under load.
    \label{fig:cpu-stationary-1d-prior}
  }
\end{figure}

\paragraph{Conditioning on the PDE}

We can now inform our belief about the physics of heat conduction using the mechanistic
model defined by the stationary heat equation. We choose a set of collocation points
$\mat{X}_\text{PDE} \in \dom^n$ and then condition on the observation that the PDE holds
(exactly) at these points.
In other words, we compute the physically-informed Gaussian process
\(
\solprior \mid \text{PDE}
\defeq
\solprior \mid (\evallinop{\linfctls{I}^\text{PDE}}{\solprior} = \vec{0})
\)
with
\(
\evallinop{\linfctls{I}^\text{PDE}}{\solprior}
\defeq -\heatcond \laplaceop{\solprior}[\mat{X}_{\text{PDE}}] - \heatrhs(\mat{X}_{\text{PDE}})
\)
visualized in \cref{fig:cpu-stationary-1d-cond-pde}.
\begin{figure}[t]
  \centering
  \subcaptionbox{%
    Belief about the solution after conditioning on the PDE at a set of collocation
    points.
    \label{subfig:cpu-stationary-1d-cond-pde-belief-solution}
  }{%
    \includegraphics[width=0.48\textwidth]{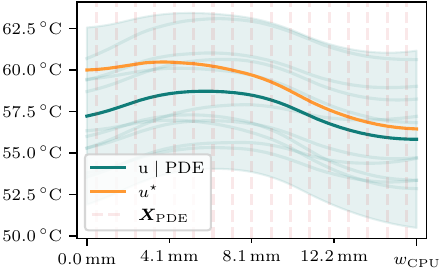}
  }
  \hfill
  \subcaptionbox{%
    Belief about heat sources and sinks after conditioning on the PDE at collocation
    points.
    \label{subfig:cpu-stationary-1d-cond-pde-belief-heat-sources-sinks}
  }{%
    \includegraphics[width=0.48\textwidth]{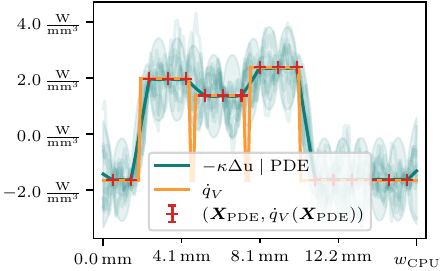}
  }
  \caption{%
  We integrate mechanistic knowledge about the system by conditioning on PDE observations
  \(
  -\heatcond \laplaceop{\solprior}[\mat{X}_{\text{PDE}}] - \heatrhs(\mat{X}_{\text{PDE}})
  = \vec{0}
  \)
  at the collocation points $\mat{X}_{\text{PDE}}$, resulting in the conditional process
  \(
  \condrv{\solprior \given \text{PDE}}
  \).
  The large remaining uncertainty in \cref{subfig:cpu-stationary-1d-cond-pde-belief-solution}
  illustrates that the PDE by itself does not identify a unique solution.
  \label{fig:cpu-stationary-1d-cond-pde}
  }
\end{figure}
We can see that the resulting conditional process indeed satisfies the PDE exactly at
the collocation points (see \cref{subfig:cpu-stationary-1d-cond-pde-belief-heat-sources-sinks}).
The remaining uncertainty in \cref{subfig:cpu-stationary-1d-cond-pde-belief-heat-sources-sinks}
is due to the approximation error introduced by only conditioning on a finite number of
collocation points.
However, while the samples from our belief about the solution in \cref{subfig:cpu-stationary-1d-cond-pde-belief-solution}
exhibit much more similarity to the mean function and less spatial variation, the
marginal uncertainty hardly decreases.
The latter is explained by the PDE not identifying a unique solution, since adding any
affine function to $\truesol$ does not alter its image under the differential operator,
i.e.~$\laplaceop{(\vec{a}\T \vec{x} + b)} = 0$.
There is an at least two-dimensional subspace of functions which can
not be observed.
This ambiguity can be resolved by introducing boundary conditions.

\paragraph{Conditioning on the Boundary Conditions}
\label{sec:boundary-conditions}
We assume that the CPU cooler extracts heat uniformly from all exposed parts of the CPU,
in particular also from the sides, rather than just from the top.
Instead of directly specifying the value of the temperature distribution at the edge
points of the CPU slice, we only know the density $\heatbfn$ of heat flowing out of each
point on the CPU's boundary based on the cooler specification.
We can use another thermodynamical law to turn this assumption into information about
the temperature distribution \(\sol\).
\begin{example}[continues=ex:thermal-conduction-heat-equation]
  Fourier's law states that the local density of heat $\heatbfn$ flowing through
  a surface with normal vector $\vec{\eta}$ is proportional to the inner product of the
  negative temperature gradient and the surface normal $\vec{\eta}$, i.e.~%
  \(
  \heatbfn = -\heatcond \inprod{\vec{\eta}}{\gradient{\sol}},
  \)
  where $\heatcond$ is the material's thermal conductivity in $\si{\watt\per\meter\kelvin}$
  \citep{Lienhard2020HeatTransfer}.
\end{example}
Assuming sufficient differentiability of $\sol$, the inner product above is equal to the
directional derivative $\dderiv{\eta}{\sol}$ of $\sol$ in direction $\eta$.
We can assign an outward-pointing vector $\eta(x)$ (almost) everywhere on the
boundary of the domain.
Since the boundary of the CPU domain is its surface, we can summarize the above in a
Neumann boundary condition $-\heatcond \dderivat{\eta(x)}{\sol}{x} = \heatbfn(x)$ for
$x \in \boundary{\dom}$.
Applying \cref{cor:gp-inference-linop-evals} once more, we can inform our estimate of
the solution about the boundary conditions by computing
\(
\condrv{\solprior \given \text{PDE}, \text{NBC}}
\defeq
\condrv*{
  \left( \condrv{\solprior \given \text{PDE}} \right)
  \given
  \evallinop{\linfctls{I}^\text{NBC}}{(\solprior, \heatbfnprior)} = \vec{0}
},
\)
where
\(
\evallinop{\linfctls{I}^\text{NBC}}{\solprior}
\defeq
- \heatcond \dderivat{\eta(\mat{X}_\text{NBC})}{\solprior}{\mat{X}_\text{NBC}}
- \heatbfnprior(\mat{X}_\text{NBC})
\)
with $\mat{X}_\text{NBC} = \set{0, w_\text{CPU}}$ is the information operator induced by
the boundary conditions.
The result is visualized in \cref{subfig:cpu-stationary-1d-cond-pde-nbc-belief-solution}.
The structure of the samples illustrates that most of the remaining uncertainty about
the solution lies in a one-dimensional subspace of $\solsp$ corresponding to constant
functions.
This is due to the fact that two Neumann boundary conditions on both sides of the
domain only determine the solution of the PDE up to an additive constant.
We need an additional source of information to address the remaining degree of
freedom.

\paragraph{Conditioning on Direct Measurements}
\label{sec:solution-observations}
Fortunately, CPUs are equipped with digital thermal sensors (DTS) located close to each
of the cores, which provide (noisy) local measurements of the core temperatures
\citep{Michaud2019CPUTemp}.
These measurements can be straightforwardly accounted for in our model by performing
standard GP regression using $\condrv{\solprior \given \text{PDE}, \text{NBC}}$ from
\cref{subfig:cpu-stationary-1d-cond-pde-nbc-belief-solution} as a prior.
The resulting belief about the temperature distribution is visualized in
\cref{subfig:cpu-stationary-1d-cond-pde-nbc-dts-belief-solution}.
\begin{figure}[t]
  \centering
  \subcaptionbox{%
    Belief about the solution after conditioning on the PDE and boundary conditions
    (BCs).
    \label{subfig:cpu-stationary-1d-cond-pde-nbc-belief-solution}
  }{%
    \includegraphics[width=0.48\textwidth]{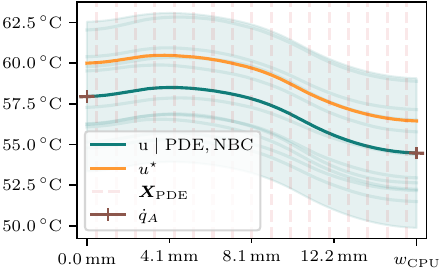}
  }
  \hfill
  \subcaptionbox{%
    Belief about the solution after conditioning on the PDE, BCs and noisy sensor data.
    \label{subfig:cpu-stationary-1d-cond-pde-nbc-dts-belief-solution}
  }{%
    \includegraphics[width=0.48\textwidth]{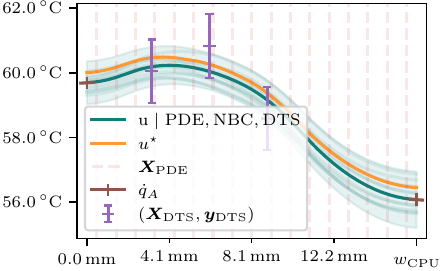}
  }
  \hfill
  \caption{%
  Neumann boundary conditions encoding mechanistic knowledge about the heat flux across
  the boundary of the CPU and a sparse set of limited-precision measurements of the
  temperature distribution made by digital thermal sensors (DTS) located at the points
  \(\mat{X}_{\text{DTS}}\) further constrain the solution of the PDE.
  The remaining uncertainty is due to measurement noise and discretization error.
  }
\end{figure}
We can see that integrating the interior measurements effectively reduces the
uncertainty due to the remaining degree of freedom, albeit not completely.
The remaining uncertainty is due to the model's consistent accounting for noise in the
thermal sensor readings, the uncertainty about the cooling, i.e.~the boundary
conditions, and the discretization error incurred by only choosing a small set of
collocation points.

\paragraph{Uncertainty in the Right-hand Side and the Boundary Function}
\label{sec:uncertain-right-hand-side}
Above, we assumed the true heat source term $\heatrhs$, i.e.~the right-hand side of the
PDE, and the boundary heat flux $\heatbfn$ to be known exactly.
However, in practice, this is rarely the case.
Fortunately, our probabilistic viewpoint admits a straightforward relaxation of this
assumption.
Namely, we can replace $\heatrhs$ and $\heatbfn$ by a joint Gaussian process prior
$(\heatrhsprior, \heatbfnprior)$, whose means are given by estimates of $\heatrhs$ and
$\heatbfn$.\footnotemark
\footnotetext{
  Technically speaking, if the right-hand-side of the PDE is given as a Gaussian
  process, the PDE turns into a stochastic partial differential equation (SPDE).
}
Above, we assumed that the cooler is controlled in such a way, that the temperature
distribution in the CPU does not change over time.
However, a naive prior $(\heatrhsprior, \heatbfnprior)$ may break this assumption.
We need to encode that the amount of heat entering the CPU is equal to the amount of
heat leaving the CPU via its boundary, i.e.~%
\begin{equation}
  \label{eqn:stationarity-constraint-2d}
  \evallinop{\linop{I}^\text{STAT}}{(\heatrhsprior, \heatbfnprior)}
  \defeq
  \rintegral[\dom]{\vec{x}}{\heatrhsprior(\vec{x})}
  - \rintegral[\boundary{\dom}]{A}{\heatbfnprior(\vec{x})}
  = 0,
\end{equation}
The (jointly) linear information operator $\linop{I}^\text{STAT}$ computes the net
amount of thermal energy that the CPU gains per unit time.
Using \cref{thm:gp-inference-linfctls} we can construct a multi-output GP prior
\((\solprior, \heatrhsprior, \heatbfnprior)\), which is consistent with the assumption
of thermal stationarity by conditioning on
$\evallinop{\linop{I}^\text{STAT}}{(\heatrhsprior, \heatbfnprior)} = 0$.
Here, we assume a-priori that $\solprior$, $\heatrhsprior$, and $\heatbfnprior$ are
pairwise independent.
In the one-dimensional model, we can simplify \cref{eqn:stationarity-constraint-2d}
by assuming that heat is drawn uniformly from the sides of the CPU.
By encoding this information in the prior $\heatbfnprior$, the information operator
corresponding to thermal stationarity resolves to
\begin{equation}
  \label{eqn:stationarity-constraint-1d}
  \evallinop{\linop{I}^\text{STAT}}{(\heatrhsprior, \heatbfnprior)}
  =
  h_\text{CPU} \rintegral[0][w_\text{CPU}]{x}{\heatrhsprior(x)}
  - h_\text{CPU} \left( \heatbfnprior(0) + \heatbfnprior(w_\text{CPU}) \right).
\end{equation}

As above, we can now use \cref{cor:gp-inference-linop-evals} to condition our
physically-consistent GP prior
\(
\condrv{(\solprior, \heatrhsprior, \heatbfnprior) \given \text{STAT}}
\defeq
\condrv*{
  (\solprior, \heatrhsprior, \heatbfnprior)
  \given
  \left( \evallinop{\linop{I}^\text{STAT}}{(\heatrhsprior, \heatbfnprior)} = 0 \right)
}
\)
on
$\evallinop{\linfctls{I}^\text{PDE}}{(\solprior, \heatrhsprior)} = \vec{0},$
as well as
$\evallinop{\linfctls{I}^\text{NBC}}{(\solprior, \heatbfnprior)} = \vec{0}$
and the noisy measurements of the temperature distribution.
Here, it is important to keep track of the cross-covariances in
$(\solprior, \heatrhsprior, \heatbfnprior),$
since the outputs in $\condrv{(\heatrhsprior, \heatbfnprior) \given \text{STAT}}$ become
correlated.
The resulting process
$\condrv{\solprior \given \text{PDE}, \text{NBC}, \text{STAT}, \text{DTS}}$
(or rather its marginals) is shown in
\cref{fig:cpu-stationary-1d-cond-pde-nbc-stat-dts}.
\begin{figure}[t]
  \centering
  \subcaptionbox{%
    Posterior belief about the temperature distribution physically consistent with the assumption of stationarity.
    \label{subfig:cpu-stationary-1d-cond-pde-nbc-stat-dts-belief-solution}
  }{%
    \includegraphics[width=0.47\textwidth]{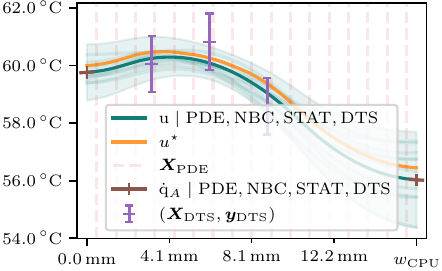}
  }
  \hfill
  \subcaptionbox{%
    Posterior belief about the heat sources and sinks after conditioning on the corresponding uncertain right-hand-side \(\heatrhsprior\) of the PDE.
    \label{subfig:cpu-stationary-1d-cond-pde-nbc-stat-dts-belief-heat-sources-sinks}
  }{%
    \includegraphics[width=0.47\textwidth]{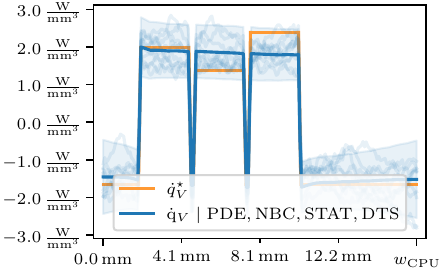}
  }
  \caption{%
    We integrate information from the joint prior
    $\condrv{(\solprior, \heatrhsprior, \heatbfnprior) \given \text{STAT}}$
    over the solution, the right-hand side of the PDE, and the values of the Neumann
    boundary conditions into our belief about the temperature distribution by
    conditioning on said PDE and boundary conditions.
    \label{fig:cpu-stationary-1d-cond-pde-nbc-stat-dts}
  }
\end{figure}
Comparing \cref{subfig:cpu-stationary-1d-cond-pde-nbc-stat-dts-belief-solution,%
  subfig:cpu-stationary-1d-cond-pde-nbc-dts-belief-solution}, we can see that, due to
the uncertainty in the right-hand side $\heatrhsprior$ of the PDE, the samples of
\(
\condrv{\solprior \given \text{PDE}, \text{NBC}, \text{STAT}, \text{DTS}}
\)
exhibit much more spatial variation.
Moreover, the samples of the GP posterior over $\heatrhsprior$ fulfill the stationarity
constraint we imposed.

\begin{figure}[t]
  \centering
  \begin{minipage}[c]{0.45\textwidth}
    \begin{tikzpicture}[
    x=1pt,
    y=1pt,
    yscale=-1,
    line width=0.75pt,
    graphvariable/.style={
        circle,
        draw=black,
        minimum size=45,
        inner sep=0,
        align=center,
      },
    unobserved/.style={
        graphvariable,
      },
    observed/.style={
        graphvariable,
        fill=lightgray,
      },
  ]

  \node[unobserved] (u) at (0.0, 0.0) {$\solprior$};
  \node[unobserved, left=15.0 of u] (f) {$\heatrhsprior$};
  \node[unobserved, right=15.0 of u] (g) {$\heatbfnprior$};

  \node[observed, below=15.0 of f, font=\footnotesize] (pde) {$\linfctls{I}^{\text{PDE}}$};
  \node[observed, below=15.0 of g, font=\footnotesize] (boundary) {$\linfctls{I}^{\text{NBC}}$};
  \node[observed, below=15.0 of u, font=\footnotesize] (u_X) {$\solprior(\mat{X}_\text{DTS})$\\$+$\\$\rvec{\epsilon}_\text{DTS}$};

  \node[observed, above=15.0 of u, font=\footnotesize] (stat) {$\linop{I}^{\text{STAT}}$};

  \draw[->] (u.south west) -- (pde.north east);
  \draw[->] (u.south east) -- (boundary.north west);
  \draw[->] (u.south) -- (u_X.north);

  \draw[->] (f.south) -- (pde.north);
  \draw[->] (f.north) to[out=-90, in=180] (stat.west);

  \draw[->] (g.south) -- (boundary.north);
  \draw[->] (g.north) to[out=-90, in=0] (stat.east);
\end{tikzpicture}
  \end{minipage}%
  \begin{minipage}[c]{0.54\textwidth}
    \caption{%
      Representation of the CPU model as a directed graphical model.
      The inference procedure described in \cref{sec:uncertain-right-hand-side} is
      equivalent to the \emph{junction tree algorithm} \citep[Section 8.4.6]{Bishop2006PRML}
      applied to the graphical model above.
      This example shows that the language of information operators is a powerful tool for
      aggregating heterogeneous sources of partial information in a joint
      probabilistic model.
    }
    \label{fig:cpu-stationary-graphical-model}
  \end{minipage}
\end{figure}
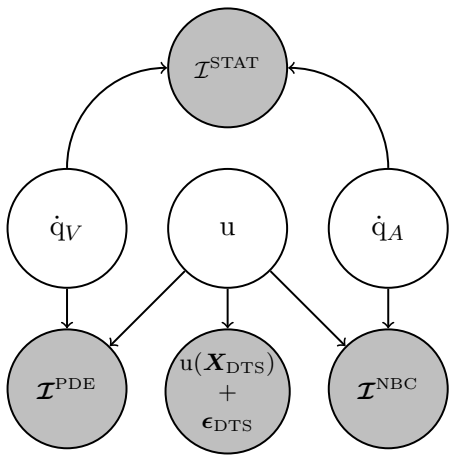

\paragraph{Summary}
Stepping back, we can view the problem of modeling the CPU under computational load as
a scientific inference problem, where we need to aggregate heterogeneous sources of
information in a joint probabilistic model.
This inference task is illustrated as a directed graphical model in \cref{fig:cpu-stationary-graphical-model}.
Our physics-informed regression framework is a local computation in the global inference procedure on the graph.
Importantly, its implementation does not change based on what happens to the solution
estimate and the input data in either upstream or downstream computations.
All this information is already handily encoded in the structured uncertainties of the
Gaussian processes.

\subsection{A General Class of Tractable Information Operators for Linear PDEs}
\label{sec:mwr-info-ops}
Recall that conditioning on the linear PDE directly via the information operator
\(
\linopat{I}{\solvvprior} = \linopat{D}{\solvvprior} - f
\)
is usually intractable.
Instead, in \cref{sec:indirect-observations} we approximated this information operator by
\( \linfctls{I} \colon \solsp \to \R^n \)
with
\(
\evallinop{\linfctlselem{I}_i}{\solvvprior}
\defeq \linopat{D}{\solvvprior}(\vec{x}_i) - f(\vec{x}_i)
\)
where $\vec{x}_i \in \dom$.
This implicitly assumes that point evaluation on both $\linopat{D}{\solvvprior}$ and
$\rhs$ is well-defined, which crucially means that this approach applies \emph{only}
to strong solutions of PDEs.
In this \lcnamecref{sec:mwr-info-ops}, we extend this approximation scheme to a
general class of tractable information operators aimed at approximating \emph{both} weak
and strong solutions to linear PDEs.
Our framework is inspired by the method of weighted residuals (MWR) (see
\cref{sec:mwr}).
In fact, in \cref{sec:gp-mean-mwr-recovery} we will show that GP inference with
information operators in this class reproduces any weighted residual method in the
posterior mean while additionally providing an estimate of the inherent approximation error.

\todo[author=Jonathan]{The paragraphs below need some cleanup. They seem quite disconnected. Give the reader some context of what they are reading next, e.g. by adding a paragraph header ``Notation'', or better connecting the different paragraphs.}
In the following, we will consider both weak and strong formulations of linear PDEs,
which is why we introduce the unifying notation
\(
\evallinop{\mwrwsdiffop}{\solvv} = \mwrwsrhs.
\)
For a strong formulation, $\mwrwsdiffop \defeq \diffop$, where
$\diffop \colon \solsp \mapsto \rhssp$
is a linear differential operator (see \cref{def:linear-diffop}), and
$\mwrwsrhs \defeq \rhs \in \rhssp$
is the right-hand side function.
In the context of a weak formulation, $\mwrwsdiffop \defeq \weakdiffop$,
where $\weakdiffop \colon \solsp \mapsto \dualsp{\rhssp}$ is the weak differential
operator, and $\mwrwsrhs \defeq \weakrhs \in \dualsp{\rhssp}$ is the right-hand side
functional (see \cref{sec:weak-formulation}).
Following \cref{sec:mwr}, we will apply linear functionals to the PDE residual.
To facilitate notation, we define the shorthand $\mwrwstestfctlsp$ for the space of
continuous linear functionals on the image space of $\mwrwsdiffop$, i.e.~%
$\mwrwstestfctlsp \defeq \dualsp{\rhssp}$ in the context of a strong formulation and
$\mwrwstestfctlsp \defeq \bidualsp{\rhssp}$ in the context of a weak formulation.
We additionally require that $\mwrtestfctl \circ \mwrwsdiffop$ is continuous for every
$\mwrtestfctl \in \mwrwstestfctlsp$.
\todo[author=Marvin]{Maybe mention why these assumptions are typically fulfilled in
  practice, e.g.~via (Lions-)Lax-Milgram assumptions}

Let $\solvvprior \sim \gp{\vec{m}}{\mat{k}}$ be a Gaussian process prior over the
solution $\solvvprior$ of the PDE, whose path space can be continuously embedded into
the solution space $\solsp$ (see \cref{sec:prior-selection} for more details on the
latter assumption).
It is intractable to condition the GP prior
on the full information provided by the PDE via the family
$\{ \mwrinfoop{\mwrtestfctl} \}_{\mwrtestfctl \in \mwrwstestfctlsp}$
of affine information operators
\(
\evallinop{\mwrinfoop{\mwrtestfctl}}{\solvvprior}
\defeq
\evallinop{(\mwrtestfctl \circ \mwrwsdiffop)}{\solvvprior} - \evallinop{\mwrtestfctl}{\mwrwsrhs},
\)
since $\mwrwstestfctlsp$ is typically infinite-dimensional.
To identify tractable families of information operators, we take inspiration from the
method of weighted residuals.
\todo[author=Jonathan]{there is some duplication here about intractability with the intro of this section}

\subsubsection{Infinite-Dimensional Trial Function Spaces}
\label{sec:mwr-info-ops-inf-trial}
Using \cref{thm:gp-inference-linfctls} we can tractably condition on a finite subfamily
\(
\{ \mwrinfoop{\mwrtestfctl^{(i)}} \}_{i = 1}^n
\subset
\{ \mwrinfoop{\mwrtestfctl} \}_{\mwrtestfctl \in \mwrwstestfctlsp}
\)
of information operators, where
\(
\set{\mwrtestfctl^{(i)}}_{i = 1}^n \subset \mwrwstestfctlsp
\)
is a finite subset of test functionals, as long as we can compute
\(
\evallinop{\mwrinfoop{\mwrtestfctl^{(i)}}}{\vec{m}},
\)
\(
\evallinop{\linop{L}_{\mwrtestfctl^{(i)}}}{\vec{k}_{:,j}(\cdot, \vec{x})},
\)
and
\(
\LkL{\linop{L}_{\mwrtestfctl^{(i)}}}{\mat{k}},
\)
where $\linop{L}_{\mwrtestfctl^{(i)}} = \mwrtestfctl^{(i)} \circ \mwrwsdiffop$.
This might not always be possible in closed-form, since $\linop{L}_{\mwrtestfctl^{(i)}}$
often involves computing integrals.
However, in these cases one could fall back to an efficient numeric quadrature method,
since the integrals are often low-dimensional (typically at most four-dimensional).
A prominent example of this approach is the \emph{probabilistic meshless method} used in
\cref{sec:pde-solution-inference}.
\begin{example}[Symmetric Collocation]
  If the PDE is in strong formulation, then
  $\mwrtestfctl^{(i)} = \delta_{\vec{x}_i} \in \dualsp{\rhssp}$
  with $\vec{x}_i \in \dom$ is a valid test functional, which induces the information
  operator
  \begin{equation*}
    \evallinop{\mwrinfoop{\mwrtestfctl^{(i)}}}{\solvvprior}
    = \linopat{D}{\solvvprior}(\vec{x}_i) - f(\vec{x}_i),
  \end{equation*}
  i.e.~we recover the \emph{probabilistic meshless method} by
  \citet{Cockayne2017PNPDEInv}. They show that the conditional mean of this approach
  reproduces \emph{symmetric collocation} \citep{Fasshauer1997SColl,Fasshauer1999SColl},
  a well-known method to approximate strong solutions of PDEs.
\end{example}
The probabilistic meshless method can only be used to approximate
strong solutions of linear PDEs, since point evaluation functionals are not well-defined
on the image space $\dualsp{\rhssp}$ of $\weakdiffop$.
However, other choices of the $\mwrtestfctl^{(i)}$ lead to approximation schemes for
weak solutions.
\begin{example}[Weak Formulations]
  \label{ex:mwr-info-ops-inf-trial-weak}
  Consider a linear PDE in weak formulation.
  As mentioned in \cref{sec:mwr}, it is customary to use test functionals
  $\mwrtestfctl^{(i)}$, which are induced by test functions $\mwrtestfn^{(i)} \in \rhssp$,
  i.e.~%
  \begin{equation}
    \label{eqn:weak-mwr-info-op-inf-dim-trial}
    \evallinop{\mwrinfoop{\mwrtestfctl^{(i)}}}{\solvvprior}
    =
    \evallinop{\evallinop{\weakdiffop}{\solvvprior}}{\mwrtestfn^{(i)}}
    - \evallinop{\weakrhs}{\mwrtestfn^{(i)}}.
  \end{equation}
  For instance, if $\dom = \ointerval{l, r} \subset \R$, then a valid set of test
  functions for the weak formulation from \cref{sec:weak-formulation} is given by
  \begin{equation}
    \label{eqn:1d-linear-lagrange-element}
    \mwrtestfn^{(i)} =
    \begin{cases}
      \frac{x - x_{i - 1}}{x_i - x_{i - 1}} & \text{if } x_{i - 1} \le x \le x_i, \\
      \frac{x_{i + 1} - x}{x_{i + 1} - x_i} & \text{if } x_i \le x \le x_{i + 1}, \\
      0                                     & \text{otherwise}.
    \end{cases}
    \quad \in \sobolevtest{1}[\ointerval{l, r}],
  \end{equation}
  where $l = x_0 < \dotsb < x_{n + 1} = r$.
  The test functions are visualized in \cref{fig:1d-linear-lagrange-elements}.
  For the weak formulation in \cref{sec:weak-formulation}, the information operator from
  \cref{eqn:weak-mwr-info-op-inf-dim-trial} is equivalent to
  \(
  \evallinop{\mwrinfoop{\mwrtestfctl^{(i)}}}{\solprior}
  =
  \evallinop{\weakbilin}{\solprior, \mwrtestfn^{(i)}} - \inprod{f}{\mwrtestfn^{(i)}}[\L2].
  \)
\end{example}

\subsubsection{Finite-Dimensional Trial Function Spaces}
\label{sec:mwr-info-ops-fin-trial}
As opposed to the methods outlined in \cref{sec:mwr}, we did not need to choose a
finite-dimensional subspace of trial functions to arrive at tractable information
operators in \cref{sec:mwr-info-ops-inf-trial}.
Nevertheless, in practice, it might still be desirable to specify a finite-dimensional
trial function basis $\mwrtrialvv^{(1)}, \dotsc, \mwrtrialvv^{(m)}$, e.g.~because
\begin{itemize}
  \item we want to reproduce the output of a classical method in the posterior mean to
        use the GP solver as an uncertainty-aware drop-in replacement (see
        \cref{cor:gp-mean-mwr-recovery});
  \item the trial basis encompasses problem-specific knowledge, which is difficult to encode
        in the prior; or
  \item we want to solve the problem in a coarse-to-fine scheme, allowing for mesh
        refinement strategies, which are informed by the GP's uncertainty estimate.
\end{itemize}
Naively, one might achieve this goal by defining the prior over $\solvvprior$ as a parametric
Gaussian process with features $\mwrtrialvv^{(i)}$.
However, this means the posterior can not quantify the inherent approximation error,
since the GP has no support outside of the finite subspace of $\solsp$ spanned by the
trial functions.
Consequently, we need to take a different approach.
Starting from a general, potentially nonparametric prior over $\solvvprior$, we consider
a bounded (potentially oblique) projection
$\mwrtrialproj \colon \solsp \to \mwrtrialsp$
onto a subspace $\mwrtrialsp \subset \solsp$, i.e.~%
$\mwrtrialproj^2 = \mwrtrialproj$,
$\norm{\mwrtrialproj} < \infty$,
and $\range{\mwrtrialproj} = \mwrtrialsp$.
In general, this subspace need not be finite-dimensional.
We apply $\mwrtrialproj$ to our GP prior over $\solvvprior$, which, by
\cref{cor:gp-inference-linop-evals}, results in another GP
\begin{equation*}
  \hat{\solvvprior}
  \defeq \evallinop{\mwrtrialproj}{\solvvprior}
  \sim \gp{
    \evallinop{\mwrtrialproj}{\vec{m}}
  }{
    \LkL{\mwrtrialproj}{\mat{k}}
  },
\end{equation*}
with sample paths in $\mwrtrialsp$.
This discards prior information about $\kernel{\mwrtrialproj}$.
Hence, especially in case $\dim{\mwrtrialsp} < \infty$, applying the information
operators $\mwrinfoop{\mwrtestfctl^{(i)}}$ from \cref{sec:mwr-info-ops-inf-trial}
directly to $\hat{\solvvprior}$ would suffer from similar problems as choosing a
parametric prior.
However,
\begin{equation*}
  \mwrinfoop{\mwrtestfctl^{(i)}}[\mwrtrialproj]
  \defeq \mwrinfoop{\mwrtestfctl^{(i)}} \circ \mwrtrialproj
  = \evallinop{(\mwrtestfctl^{(i)} \circ \mwrwsdiffop \circ \mwrtrialproj)}{\cdot} - \evallinop{\mwrtestfctl^{(i)}}{\mwrwsrhs}
\end{equation*}
is a valid information operator for $\solvvprior$, which leads to a probabilistic
generalization of the method of weighted residuals.
This is why we refer to $\mwrinfoop{\mwrtestfctl^{(i)}}[\mwrtrialproj]$ as an \emph{MWR
  information operator}.

The similarity to the method of weighted residuals is particularly prominent if we
choose a finite-dimensional subspace
$\mwrtrialsp = \linspan{\mwrtrialvv^{(1)}, \dotsc, \mwrtrialvv^{(m)}}$
as in \cref{sec:mwr}.
In this case, there is a bounded linear operator $\mwrcoordproj \colon \solsp \to \R^m$
such that
\begin{equation*}
  \evallinop{\mwrtrialproj}{\solvvprior}
  = \sum_{i = 1}^m \mwrcoordspriorelem_i \mwrtrialvv^{(i)}
  \rdefeq \evallinop{\isomorphism{\R^m}{\mwrtrialsp}}{\mwrcoordsprior},
\end{equation*}
where the $\mwrcoordsprior \defeq \evallinop{\mwrcoordproj}{\solvvprior} \in \R^m$
are the coordinates of $\evallinop{\mwrtrialproj}{\solvvprior}$ in $\mwrtrialsp$ and
$\isomorphism{\R^m}{\mwrtrialsp} \colon \R^m \to \mwrtrialsp$ is the canonical
isomorphism between $\R^m$ and $\mwrtrialsp$.
Hence, we get the factorization
\begin{equation}
  \label{eqn:factorization-trial-projection}
  \mwrtrialproj = \isomorphism{\R^m}{\mwrtrialsp} \mwrcoordproj,
\end{equation}
which implies that $\hat{\solvvprior}$ is a parametric Gaussian process.
Moreover, $\evallinop{\mwrtestfctl^{(i)}}{\mwrwsrhs} = \mwrrhselem_i$ and
\begin{equation*}
  \evallinop{(\mwrtestfctl^{(i)} \circ \mwrwsdiffop \circ \isomorphism{\R^m}{\mwrtrialsp})}{\mwrcoordsprior}
  = \sum_{i = 1}^m \mwrcoordspriorelem_i \evallinop{(\mwrtestfctl^{(i)} \circ \mwrwsdiffop)}{\mwrtrialvv^{(i)}}
  = (\mwrmat \mwrcoordsprior)_i,
\end{equation*}
where $\mwrmat$ and $\mwrrhs$ are defined as in \cref{sec:mwr}.
Consequently, the MWR information operator is given by
\(
\evallinop{\mwrinfoop{\mwrtestfctl^{(i)}}[\mwrtrialproj]}{\solvvprior}
= \evallinop{(\linfctls{I}_{\R^m} \circ \mwrtrialproj)}{\solvvprior}_i,
\)
where
\(
\evallinop{\linfctls{I}_{\R^m}}{\mwrcoordsprior} \defeq \mwrmat \mwrcoordsprior - \mwrrhs.
\)
This illustrates that we are dealing with the hierarchical model
\begin{align*}
  \solvvprior                                 & \sim \gp{\vec{m}}{\mat{k}}                           \\
  \condrv{\mwrcoordsprior \given \solvvprior} & \sim \delta_{\evallinop{\mwrcoordproj}{\solvvprior}}
\end{align*}
with observations
\(
\evallinop{\linfctls{I}_{\R^m}}{\mwrcoordsprior} = \vec{0},
\)
where
\(
\mwrcoordsprior
\sim \gaussian{\evallinop{\mwrcoordproj}{\vec{m}}}{\LkL{\mwrcoordproj}{\mat{k}}}.
\)
Inference in this model can be broken down into two steps.
First, we update our belief about the solution's coordinates in $\mwrtrialsp$ by
computing the conditional random variable
\(
\condrv{%
  \mwrcoordsprior
  \given
  \evallinop{\linfctls{I}_{\R^m}}{\mwrcoordsprior} = \vec{0}
},
\)
which is also Gaussian.
If $\mwrmat$ is invertible and $\mwrcoordsprior$ has full support on $\R^m$, then
the law of
\(
\condrv{%
  \mwrcoordsprior
  \given
  \evallinop{\linfctls{I}_{\R^m}}{\mwrcoordsprior} = \vec{0}
},
\)
is a Dirac measure whose mean is given by the coordinates of the MWR approximation
$\mwrsolcoords = \mwrmat\inv \mwrrhs$ from \cref{eqn:mwr-solution}.
Next, we can reuse precomputed quantities from the conditional moments of
\(
\condrv{%
  \mwrcoordsprior
  \given
  \evallinop{\linfctls{I}_{\R^m}}{\mwrcoordsprior} = \vec{0}
},
\)
such as the representer weights
\(
\vec{w} =
(\mwrmat \LkL{\mwrcoordproj}{\mat{k}} \mwrmat\T)\pinv
(\mwrrhs - \mwrmat \evallinop{\mwrcoordproj}{\vec{m}})
\)
to efficiently compute the conditional random process
\begin{equation*}
  (\condrv{\solvvprior \given \evallinop{(\linfctls{I}_{\R^m} \circ \mwrcoordproj)}{\mwrcoordsprior} = \vec{0}})
  = (\condrv{\solvvprior \given \{ \evallinop{\mwrinfoop{\mwrtestfctl^{(i)}}[\mwrtrialproj]}{\solvvprior} = 0 \}_{i = 1}^n}),
\end{equation*}
i.e.~the main quantity of interest.
Assuming once more that $\mwrmat$ is invertible and $\mwrcoordsprior$ has full support
on $\R^m$, the remaining uncertainty of the conditional process lies in the kernel of
$\proj{\mwrtrialsp}$, since the law of
\(
\condrv{%
  \mwrcoordsprior
  \given
  \evallinop{\linfctls{I}_{\R^m}}{\mwrcoordsprior} = \vec{0}
},
\)
is a Dirac measure and
\begin{equation*}
  (\condrv{\evallinop{\mwrtrialproj}{\solvvprior} \given \{ \evallinop{\mwrinfoop{\mwrtestfctl^{(i)}}[\mwrtrialproj]}{\solvvprior} = 0 \}_{i = 1}^n})
  = (\condrv{\evallinop{\isomorphism{\R^m}{\mwrtrialsp}}{\mwrcoordsprior} \given \evallinop{\linfctls{I}_{\R^m}}{\mwrcoordsprior} = \vec{0}}).
\end{equation*}
Thus, all remaining uncertainty must be due to
\(
\condrv{%
\evallinop{(\id[\solsp] - \mwrtrialproj)}{\solvvprior}
\given
\{ \evallinop{\mwrinfoop{\mwrtestfctl^{(i)}}[\mwrtrialproj]}{\solvvprior} = 0 \}_{i = 1}^n
}.
\)
Note the striking similarity of this property to the notion of \emph{Galerkin
  orthogonality} \citep[Equation 2.63]{Logg2012FEniCS}.

A canonical choice for $\mwrtrialproj$ would arguably be an orthogonal
projection w.r.t. the RKHS inner product of the sample space of $\solvvprior$ (see e.g.~%
\citealt{Kanagawa2018GPKernMeth}).
However, this inner product is generally difficult to compute.
Fortunately, we can use the $\L2$ inner products or Sobolev inner products on the
samples to induce a (usually non-orthogonal) projection $\mwrtrialproj$.
\begin{example}
  \label{ex:l2-projection}
  If the elements of $\solsp$ are square-integrable, then the linear operator
  \begin{equation*}
    \evallinop{\mwrcoordproj}{\solvvprior}
    \defeq \mat{P}\inv \left( \rintegral[\dom]{\vec{x}}{\inprod{\mwrtrialvv^{(i)}(\vec{x})}{\solvvprior(\vec{x})}[\R^d]} \right)_{i = 1}^m,
  \end{equation*}
  where
  \begin{equation*}
    \matelem{P}_{ij} \defeq \rintegral[D]{\vec{x}}{\inprod{\mwrtrialvv^{(i)}(\vec{x})}{\mwrtrialvv^{(j)}(\vec{x})}[\R^d]},
  \end{equation*}
  induces a projection $\mwrtrialproj = \isomorphism{\R^m}{\mwrtrialsp} \mwrcoordproj$
  onto $\mwrtrialsp \subset \solsp$, even if $\solsp$ is not a Hilbert space with inner
  product $\inprod{\cdot}{\cdot}[\L2[\dom]]$.
\end{example}
At first glance, information operators restricting $\mwrtrialsp$ to be
finite-dimensional might seem fundamentally inferior to the information operators from
\cref{sec:mwr-info-ops-inf-trial}.
However, the conditional mean of a Gaussian process prior conditioned on
$\{ \evallinop{\mwrinfoop{\mwrtestfctl^{(i)}}}{\solvvprior} = 0 \}_{i = 1}^n$ is updated
by a linear combination of $n$ functions, while the covariance function receives an at
most rank $n$ downdate.
This means that, implicitly, conditioning a Gaussian process on an information operator
with $\mwrtrialproj = \id[\solsp]$ also constructs a finite-dimensional trial function
space, which depends on the test function basis, the bilinear form $\weakbilin$ and the
prior covariance function $\mat{k}$.

MWR information operators with finite-dimensional trial function bases can be used to
realize a GP-based analogue of the finite element method.
\begin{example}[A 1D Finite Element Method]
  \label{ex:fem}
  Finite element methods are (generalized) Galerkin methods, where
  the functions in the test and trial bases have compact support, i.e.~they are nonzero
  only in a highly localized region of the domain.
  The archetype of a finite element method for the weak formulation from
  \cref{sec:weak-formulation} uses linear \emph{Lagrange elements}
  \citep[Section 3.3.1]{Logg2012FEniCS} as test and trial functions, i.e.~%
  $\mwrtrial^{(i)}(x) = \mwrtest^{(i)}(x)$ and $m = n$.
  Linear Lagrange elements are piecewise linear on a triangulation of the domain.
  For instance, on a one-dimensional domain $\dom = \ointerval{-1, 1}$, the linear
  Lagrange elements are given by \cref{eqn:1d-linear-lagrange-element} from
  \cref{ex:mwr-info-ops-inf-trial-weak}.
  Multiplying a coordinate vector $\mwrcoords \in \R^m$ with these basis functions leads
  to a piecewise linear interpolation between the points
  \[
    (x_0, 0), (x_1, \mwrcoordselem_1), \dotsc, (x_n, \mwrcoordselem_n), (x_{n + 1}, 0),
  \]
  since, for $x \in [x_i, x_{i + 1}]$,
  \begin{equation*}
    \sum_{i = 1}^m \mwrcoordselem_i \mwrtrial^{(i)}(x)
    = \mwrcoordselem_i \frac{x_{i + 1} - x}{x_{i + 1} - x_i}
    + \mwrcoordselem_{i + 1} \frac{x - x_i}{x_{i + 1} - x_i}
    = \left( 1 - \frac{x - x_i}{x_{i + 1} - x_i} \right) \mwrcoordselem_i
    + \left( \frac{x - x_i}{x_{i + 1} - x_i} \right) \mwrcoordselem_{i + 1}.
  \end{equation*}
  The basis functions and an element in their span are visualized in
  \cref{fig:1-fem-basis}.
  The Lagrange elements at the boundary of the domain can also be easily modified such
  that arbitrary piecewise linear boundary conditions are fulfilled by construction.
  The effect of MWR information operators based on this set of test and trial functions
  is visualized in \cref{fig:mwr-info-ops-fin-trial-fem-matern}.
\end{example}
\begin{figure}[t]
  \subcaptionbox{%
    Test/trial functions $\mwrtrial^{(i)} = \mwrtest^{(i)}$.
    \label{fig:1d-linear-lagrange-elements}
  }{%
    \includegraphics[width=0.49\textwidth]{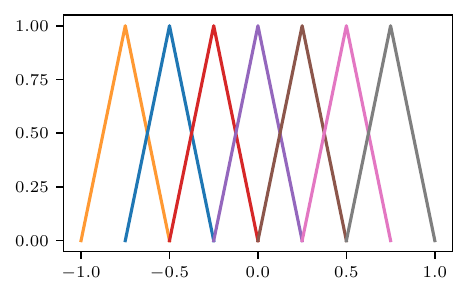}%
  }
  \hfill
  \subcaptionbox{%
    The trial functions $\mwrtrial^{(i)}$ span the space of piecewise linear functions on the
    given grid.
  }{%
    \includegraphics[width=0.49\textwidth]{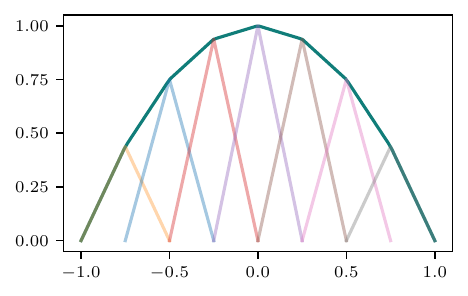}%
  }
  \caption{
    Linear Lagrange test and trial functions as used by the finite element method.
  }
  \label{fig:1-fem-basis}
\end{figure}
\begin{figure}[t]
  \subcaptionbox{%
    Posterior process corresponding to a Mat\'ern-$\nicefrac{3}{2}$ prior.
    The sample paths of the process embed continuously into the Sobolev space
    $\sobolev{1}[\dom]$ (see \cref{sec:prior-selection}).
    \label{fig:mwr-info-ops-fin-trial-fem-matern}
  }{%
    \includegraphics[width=0.49\textwidth]{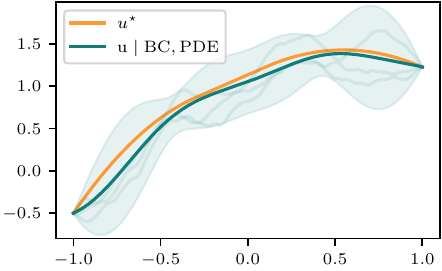}%
  }
  \hfill
  \subcaptionbox{%
    Posterior process corresponding to an MWR Recovery Prior constructed from a
    Mat\'ern-$\nicefrac{3}{2}$ prior via \cref{prop:mwr-recovery-prior}.
    The posterior mean corresponds to the point estimate produced by the classical MWR.
    \label{fig:mwr-info-ops-fin-trial-fem-recovery}
  }{%
    \includegraphics[width=0.49\textwidth]{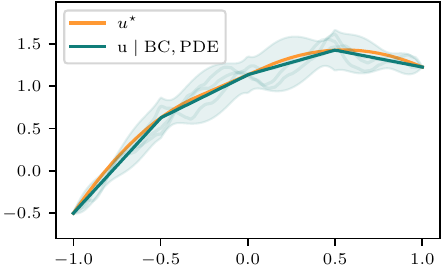}%
  }
  \caption{
    Conditioning two different Gaussian process priors on the MWR information operators
    $\{ \linop{I}_{\mwrtest^{(i)},\mwrtrialproj} \}_{i = 1}^n$
    corresponding to the weak formulation of the Poisson equation, i.e.~%
    \cref{eqn:stat-heat-isotropic-non-homogeneous-weak}, and $m = 3$ linear Lagrange
    elements as test functions $\mwrtest^{(i)}$ and trial functions $\mwrtrial^{(i)}$ (see
    \cref{ex:fem}).
    The trial functions $\mwrtrial^{(1)}$ and $\mwrtrial^{(m)}$ were modified to fulfill
    the non-zero boundary conditions exactly.
  }
  \label{fig:mwr-info-ops-fin-trial-fem}
\end{figure}

\subsubsection{MWR Information Operators}
Even though the class of information operators introduced above is constructed for
linear PDEs, it can naturally be applied to the weak form of an arbitrary operator
equation.
In particular, we can use MWR information operators for the boundary conditions in an
(I)BVP.
Moreover, it is straightforward to extend $\mwrinfoop{\mwrtestfctl}[\mwrtrialproj]$ to a
joint GP prior over $(\solvvprior, \mwrwsrhsprior)$ if the right-hand side $\mwrwsrhs$
of the operator equation is unknown as in \cref{sec:linear-pde}.
In this case, $\mwrinfoop{\mwrtestfctl}[\mwrtrialproj]$ is jointly linear in
$(\solvvprior, \mwrwsrhsprior)$.
Summarizing \cref{sec:mwr-info-ops-inf-trial,sec:mwr-info-ops-fin-trial} and
incorporating the extensions discussed here, we define an MWR information operator as follows:
\begin{definition}[MWR Information Operator]
  \label{def:mwr-info-op}
  Let $\evallinop{\mwrwsdiffop}{\solvvprior} = \mwrwsrhs$ be an operator equation in
  strong or weak formulation.
  An \emph{MWR information operator} for said operator equation is a continuous affine
  functional
  \begin{equation*}
    \mwrinfoop{\mwrtestfctl}[\mwrtrialproj]
    \defeq
    \evallinop{(\mwrtestfctl \circ \mwrwsdiffop \circ \mwrtrialproj)}{\cdot}
    - \evallinop{\mwrtestfctl}{\mwrwsrhs}
  \end{equation*}
  parameterized by a \emph{test functional} $\mwrtestfctl \in \mwrwstestfctlsp$ and a
  bounded (potentially oblique) projection $\mwrtrialproj$ onto a subspace
  $\mwrtrialsp \subset \solsp$.
  We also write $\mwrinfoop{\mwrtestfctl} \defeq \mwrinfoop{\mwrtestfctl}[\id[\solsp]]$.
  The input of $\mwrinfoop{\mwrtestfctl}[\mwrtrialproj]$ can be extended to the
  right-hand side $\mwrwsrhsprior$ of the operator equation, i.e.~%
  \begin{equation*}
    \evallinop{\mwrinfoop{\mwrtestfctl}[\mwrtrialproj]}{(\solvvprior, \mwrwsrhsprior)}
    \defeq
    \evallinop{(\mwrtestfctl \circ \mwrwsdiffop \circ \mwrtrialproj)}{\solvvprior}
    - \evallinop{\mwrtestfctl}{\mwrwsrhsprior},
  \end{equation*}
  which is jointly linear in $(\solvvprior, \mwrwsrhsprior)$.
\end{definition}

\begin{table}[t]
  \centering
  \small
  \begin{tabular}{c|p{0.17\textwidth}|p{0.28\textwidth}|p{0.38\textwidth}}
  \toprule
   & Method
   & Trial Functions $\mwrtrialvv^{(i)}$
   & Test Functionals $\mwrtestfctl^{(i)}$ / Functions $\mwrtest^{(i)}$
  \\
  \midrule
  \multirow{9}{*}{\rotatebox[origin=c]{90}{Strong Solutions}}
   & Collocation
   & arbitrary
   & $\mwrtestfctl^{(i)} = \delta_{\vec{x}_i}$ for $\vec{x}_i \in \dom$
  \newline $\Rightarrow \evallinop{(\mwrtestfctl^{(i)} \circ \diffop)}{\solvvprior} = \linopat{D}{\solvvprior}(\vec{x}_i)$
  \\
  \cmidrule{2-4}
   & Subdomain \newline (Finite Volume)
   & arbitrary
   & $\mwrtest^{(i)} = \chi_{\dom_i}$ for $\dom_i \subset \dom$
  \newline $\Rightarrow \evallinop{(\mwrtestfctl^{(i)} \circ \diffop)}{\solvvprior} = \rintegral[\dom_i]{\vec{x}}{\linopat{D}{\solvvprior}(\vec{x})}$
  \\
  \cmidrule{2-4}
   & Pseudospectral
   & orthogonal and globally supported (e.g.~Fourier basis or Chebychev polynomials)
   & $\mwrtestfctl^{(i)} = \delta_{\vec{x}_i}$ for $\vec{x}_i \in \dom$
  \newline $\Rightarrow \evallinop{(\mwrtestfctl^{(i)} \circ \diffop)}{\solvvprior} = \linopat{D}{\solvvprior}(\vec{x}_i)$
  \\
  \midrule
  \multirow{10}{*}{\rotatebox[origin=c]{90}{Weak \& Strong Solutions}}
   & Generalized Galerkin
   & arbitrary
   & arbitrary, but in general $\mwrtest^{(i)} \neq \mwrtrial^{(i)}$
  \\
  \cmidrule{2-4}
   & Finite Element
   & locally supported (e.g.~piecewise polynomial)
   & same class as trial functions, but in general $\mwrtest^{(i)} \neq \mwrtrial^{(i)}$
  \\
  \cmidrule{2-4}
   & Spectral (Galerkin)
   & orthogonal and globally supported (e.g.~Fourier basis or Chebychev polynomials)
   & same class as trial functions, but in general $\mwrtest^{(i)} \neq \mwrtrial^{(i)}$
  \\
  \cmidrule{2-4}
   & (Ritz-)Galerkin
   & arbitrary
   & $\mwrtest^{(i)} = \mwrtrial^{(i)}$
  \\
  \bottomrule
\end{tabular}
  \caption{%
    Trial and test function(al)s defining commonly used methods of weighted
    residuals. If used as part of an MWR information operator, the GP posterior mean recovers the corresponding classic method (see \cref{cor:gp-mean-mwr-recovery}).
  }
  \label{tbl:mwrs-trial-test}
\end{table}

\subsubsection{Recovery of Classical Methods}
\label{sec:gp-mean-mwr-recovery}
In this section we will show that, under certain assumptions, the posterior mean of a
GP prior conditioned on a set of MWR information operators is identical to the
approximation generated by the corresponding traditional method of weighted residuals,
examples of which are given in \Cref{tbl:mwrs-trial-test}.
More precisely, we will show that there is a flexible family of GP priors
$\solvvprior \sim \gp{\vec{m}}{\mat{k}}$ whose posterior means after conditioning on
\(
\{ \mwrinfoop{\mwrtestfctl^{(i)}}[\mwrtrialproj] \}_{i = 1}^m
\)
are identical to the corresponding classical MWR approximation $\mwrsolcoords$ to the
solution of the same weak form linear PDE, where we use the same trial functions
$\mwrtrialvv^{(1)}, \dotsc, \mwrtrialvv^{(m)}$ and test functionals
$\mwrtestfctl^{(1)}, \dotsc, \mwrtestfctl^{(n)}$ in both cases, i.e.~%
$\mwrtrialsp = \linspan{\mwrtrialvv^{(1)}, \dotsc, \mwrtrialvv^{(m)}}$.
As in \cref{sec:mwr}, we assume that the trial functions are already constructed in such
a way that the boundary conditions are fulfilled.
However, it is possible to extend the results below to the general case by adding MWR
information operators corresponding to the boundary conditions and using
\begin{equation*}
  \mwrsolcoords =
  \begin{pmatrix}
    \mwrmat_\text{PDE} \\
    \mwrmat_\text{BC}
  \end{pmatrix}\inv
  \begin{pmatrix}
    \mwrrhs_\text{PDE} \\
    \mwrrhs_\text{BC}
  \end{pmatrix}
\end{equation*}
as coordinates for the reference solution generated by the traditional MWR.

\begin{proposition}
  \label{prop:gp-mwr-info-op-fin-trial}
  If $\mwrmat \in \R^{n \times m}$ and
  \(
  \mat{\Sigma}_\mwrcoordsprior
  \defeq \LkL{\mwrcoordproj}{\mat{k}}
  \in \R^{m \times m}
  \)
  are invertible, then
  \begin{equation*}
    \condrv{\mwrcoordsprior \given \mwrmat \mwrcoordsprior - \mwrrhs = \vec{0}}
    \sim
    \delta_{\mwrsolcoords}
  \end{equation*}
  and the conditional mean $\vec{m}^{\condrv{\solvvprior \given \mwrmat, \mwrrhs}}$ of
  \(
  \condrv{\solvvprior \given \mwrmat \evallinop{\mwrcoordproj}{\solvvprior} - \mwrrhs = \vec{0}}
  \)
  admits a unique additive decomposition
  \begin{equation}
    \vec{m}^{\condrv{\solvvprior \given \mwrmat, \mwrrhs}}
    = \mwrsol + \solvv_{\kernel{\mwrtrialproj}}
  \end{equation}
  with $\mwrsol \in \mwrtrialsp$ and
  $\solvv_{\kernel{\mwrtrialproj}} \in \kernel{\mwrtrialproj}$.
\end{proposition}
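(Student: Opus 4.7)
My plan is to prove the two claims separately, with the first (the Dirac statement) feeding directly into the second (the decomposition of the conditional mean).

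For the first claim, I would observe that $\mwrcoordsprior \sim \gaussian{\evallinop{\mwrcoordproj}{\vec{m}}}{\mat{\Sigma}_\mwrcoordsprior}$ is a finite-dimensional Gaussian, so I can apply the classical (finite-dimensional) Gaussian conditioning formula to the joint law of $(\mwrcoordsprior,\,\mwrmat\mwrcoordsprior)$. Because $\mwrmat$ is square (invertibility forces $n=m$) and both $\mwrmat$ and $\mat{\Sigma}_\mwrcoordsprior$ are invertible, the matrix $\mwrmat\mat{\Sigma}_\mwrcoordsprior\mwrmat\T$ is invertible, and an elementary computation yields conditional covariance $\mat{\Sigma}_\mwrcoordsprior - \mat{\Sigma}_\mwrcoordsprior\mwrmat\T(\mwrmat\mat{\Sigma}_\mwrcoordsprior\mwrmat\T)\inv\mwrmat\mat{\Sigma}_\mwrcoordsprior = \mat{0}$ and conditional mean $\evallinop{\mwrcoordproj}{\vec{m}} + \mwrmat\inv(\mwrrhs - \mwrmat\evallinop{\mwrcoordproj}{\vec{m}}) = \mwrmat\inv\mwrrhs = \mwrsolcoords$, which is exactly the assertion $\condrv{\mwrcoordsprior \given \mwrmat\mwrcoordsprior = \mwrrhs} \sim \delta_{\mwrsolcoords}$.

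For the second claim, I would first note that existence and uniqueness of the decomposition follow almost immediately from the fact that $\mwrtrialproj$ is a (bounded) projection: it induces the direct sum $\solsp = \mwrtrialsp \oplus \kernel{\mwrtrialproj}$, so any $\sol \in \solsp$ decomposes uniquely as $\mwrtrialproj \sol + (\id[\solsp] - \mwrtrialproj) \sol$. Applied to $\vec{m}^{\condrv{\solvvprior \given \mwrmat, \mwrrhs}}$, this produces candidates $\mwrsol \defeq \evallinop{\mwrtrialproj}{\vec{m}^{\condrv{\solvvprior \given \mwrmat, \mwrrhs}}} \in \mwrtrialsp$ and $\solvv_{\kernel{\mwrtrialproj}} \defeq (\id[\solsp] - \mwrtrialproj)\vec{m}^{\condrv{\solvvprior \given \mwrmat, \mwrrhs}} \in \kernel{\mwrtrialproj}$. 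The substantive step is to identify this first summand with the classical MWR approximation $\sum_{i=1}^m \mwrsolcoordselem_i \mwrtrialvv^{(i)}$.

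To do so, I would exploit the factorization $\mwrtrialproj = \isomorphism{\R^m}{\mwrtrialsp}\mwrcoordproj$ from \cref{eqn:factorization-trial-projection} and the fact that for bounded linear operators $\linop{T}$, $\expectation{\evallinop{\linop{T}}{\solvvprior} \given \cdot} = \evallinop{\linop{T}}{\expectation{\solvvprior \given \cdot}}$ (a direct consequence of \cref{thm:gp-inference-linfctls} applied to $\mwrcoordproj$). Therefore $\evallinop{\mwrcoordproj}{\vec{m}^{\condrv{\solvvprior \given \mwrmat,\mwrrhs}}} = \expectation{\condrv{\mwrcoordsprior \given \mwrmat\mwrcoordsprior = \mwrrhs}} = \mwrsolcoords$ by the first claim, giving $\evallinop{\mwrtrialproj}{\vec{m}^{\condrv{\solvvprior \given \mwrmat,\mwrrhs}}} = \evallinop{\isomorphism{\R^m}{\mwrtrialsp}}{\mwrsolcoords} = \sum_{i=1}^m \mwrsolcoordselem_i\mwrtrialvv^{(i)} = \mwrsol$, completing the decomposition. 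The main conceptual point (rather than obstacle) is the interchange of the bounded linear operator $\mwrcoordproj$ with conditional expectation, which is the reason Theorem~1 was formulated for arbitrary bounded linear operators; the computation itself is short once this tool is in hand.
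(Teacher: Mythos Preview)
Your proposal is correct and follows essentially the same approach as the paper: both use the direct-sum decomposition $\solsp = \mwrtrialsp \oplus \kernel{\mwrtrialproj}$ induced by the bounded projection, both establish $\evallinop{\mwrcoordproj}{\vec{m}^{\condrv{\solvvprior \given \mwrmat,\mwrrhs}}} = \mwrsolcoords$, and both verify the Dirac statement by computing that the conditional covariance of $\mwrcoordsprior$ vanishes. The only cosmetic difference is ordering and how the key identity is obtained---the paper applies $\mwrcoordproj$ directly to the explicit conditional-mean formula from \cref{thm:gp-inference-linfctls} and simplifies, whereas you first prove the Dirac claim and then invoke the interchange of $\mwrcoordproj$ with conditional expectation; both routes are equally short.
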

\begin{corollary}[MWR Generalization]
  \label{cor:gp-mean-mwr-recovery}
  If, additionally, $\vec{m} \in \mwrtrialsp$ and
  $\LkL{\proj{\kernel{\mwrtrialproj}}}{\mat{k}}[\mwrcoordproj] = \vec{0}$,
  then the conditional mean function $\vec{m}^{\condrv{\solvvprior \given \mwrmat, \mwrrhs}}$
  is equal to the MWR approximation, i.e.~%
  \begin{equation*}
    \vec{m}^{\condrv{\solvvprior \given \mwrmat, \mwrrhs}} = \mwrsol.
  \end{equation*}
\end{corollary}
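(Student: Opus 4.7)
The plan is to start from the decomposition $\vec{m}^{\condrv{\solvvprior \given \mwrmat, \mwrrhs}} = \mwrsol + \solvv_{\kernel{\mwrtrialproj}}$ furnished by Proposition \ref{prop:gp-mwr-info-op-fin-trial} and to show that the two extra hypotheses force $\solvv_{\kernel{\mwrtrialproj}} = \vec{0}$. Since $\mwrtrialproj$ is a projection with $\range{\mwrtrialproj} = \mwrtrialsp$, the complementary projection $\proj{\kernel{\mwrtrialproj}} = \id[\solsp] - \mwrtrialproj$ annihilates $\mwrtrialsp$; applying it to the decomposition gives $\proj{\kernel{\mwrtrialproj}} \vec{m}^{\condrv{\solvvprior \given \mwrmat, \mwrrhs}} = \solvv_{\kernel{\mwrtrialproj}}$, so the proof reduces to showing that the left-hand side vanishes.

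Next I would expand the posterior mean using \cref{eqn:linpde-posterior-mean} from \Cref{thm:gp-inference-linfctls} applied to the linear operator $\linfctls{L} = \mwrmat \mwrcoordproj$, observation $\vec{y} = \mwrrhs$, and zero observation noise; this writes the posterior mean as the prior mean $\vec{m}$ plus a correction that depends on $\vec{x}$ only through $\linfctlsat{L}{\mat{k}(\cdot, \vec{x})} = \mwrmat \, \mwrcoordproj[\mat{k}(\cdot, \vec{x})]$, multiplied into a constant representer weight vector. The first assumption, $\vec{m} \in \mwrtrialsp$, directly gives $\proj{\kernel{\mwrtrialproj}} \vec{m} = \vec{0}$. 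For the correction, I would commute $\proj{\kernel{\mwrtrialproj}}$, acting in the $\vec{x}$-argument of $\mat{k}$, past $\mwrcoordproj$, acting in the other argument; up to the paper's conventions, the resulting cross-covariance-type object is exactly $\LkL{\proj{\kernel{\mwrtrialproj}}}{\mat{k}}[\mwrcoordproj]$, which is zero by the second hypothesis. Both contributions vanishing, $\solvv_{\kernel{\mwrtrialproj}} = \vec{0}$ and the decomposition from \Cref{prop:gp-mwr-info-op-fin-trial} collapses to $\vec{m}^{\condrv{\solvvprior \given \mwrmat, \mwrrhs}} = \mwrsol$.

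The main obstacle is the careful justification of the commutativity step: that bounded linear operators applied to distinct arguments of the covariance function $\mat{k}$ can be interchanged, so that evaluating $\mwrcoordproj$ in the first slot and $\proj{\kernel{\mwrtrialproj}}$ in the second indeed yields the operator $\LkL{\proj{\kernel{\mwrtrialproj}}}{\mat{k}}[\mwrcoordproj]$. This is essentially the same Fubini-style fact that underlies the closed-form update in \Cref{thm:gp-inference-linfctls} itself, but it has to be stated at the level of generality required here (potentially oblique $\mwrtrialproj$, vector-valued GPs); once it is in place, the remainder of the proof is pure bookkeeping that matches the correction term against the cross-covariance condition in the hypothesis.
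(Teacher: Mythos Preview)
Your proposal is correct and follows essentially the same route as the paper: apply $\proj{\kernel{\mwrtrialproj}}$ to the posterior-mean formula, kill the prior-mean term via $\vec{m} \in \mwrtrialsp$, and kill the correction term by factoring $\mwrmat\T$ out of the cross-covariance and invoking $\LkL{\proj{\kernel{\mwrtrialproj}}}{\mat{k}}[\mwrcoordproj] = \vec{0}$. The commutativity of operators acting on distinct kernel arguments that you flag as the main obstacle is exactly the symmetry stated after \cref{not:L1kL2adj-linfunctls}, so no additional machinery is needed.
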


It turns out that it is possible to transform any admissible GP prior over the (weak)
solution of the PDE into a prior that fulfills the assumptions of
\cref{cor:gp-mean-mwr-recovery}.
\begin{proposition}[MWR Recovery Prior]
  \label{prop:mwr-recovery-prior}
  Let $\tilde{\solvvprior} \sim \gp{\tilde{\vec{m}}}{\tilde{\mat{k}}}$ with mean and
  sample paths in $\solsp$.
  Then \( \solvvprior \sim \gp{\vec{m}}{\mat{k}} \) with
  $\vec{m} \defeq \evallinop{\mwrtrialproj}{\tilde{\vec{m}}}$
  and
  \begin{align*}
    \mat{k}
     & \defeq \LkL{\mwrtrialproj}{\tilde{\mat{k}}} + \LkL{\proj{\kernel{\mwrtrialproj}}}{\tilde{\mat{k}}} \\
     & = \LkL{\mwrtrialproj}{\tilde{\mat{k}}} + \LkL{(\id[\solsp] - \mwrtrialproj)}{\tilde{\mat{k}}}      \\
     & = \tilde{\mat{k}}
    - \Lk{\mwrtrialproj}{\tilde{\mat{k}}}
    - \kL{\tilde{\mat{k}}}{\mwrtrialproj}
    + 2 \LkL{\mwrtrialproj}{\tilde{\mat{k}}}
  \end{align*}
  has sample paths in $\solsp$, $\vec{m} \in \mwrtrialsp$,
  and $\LkL{\proj{\kernel{\mwrtrialproj}}}{\mat{k}}[\mwrcoordproj] = \vec{0}$.
\end{proposition}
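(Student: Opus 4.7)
My plan is to verify the four conclusions in sequence, writing $P \defeq \mwrtrialproj$ and $Q \defeq \id[\solsp] - P$ throughout and repeatedly invoking the idempotency $P^{2} = P$. For the \emph{equivalence of the three kernel expressions}, since $P$ is a projection onto $\mwrtrialsp$, the operator $Q$ is the projection onto $\kernel{P}$ along $\mwrtrialsp$, yielding the first equality. For the third, I would expand $\LkL{Q}{\tilde{\mat{k}}} = Q \tilde{\mat{k}} Q^{*} = \tilde{\mat{k}} - P\tilde{\mat{k}} - \tilde{\mat{k}}P^{*} + P\tilde{\mat{k}}P^{*}$ bilinearly and add $\LkL{P}{\tilde{\mat{k}}} = P\tilde{\mat{k}}P^{*}$ to obtain $\tilde{\mat{k}} - \Lk{P}{\tilde{\mat{k}}} - \kL{\tilde{\mat{k}}}{P} + 2\LkL{P}{\tilde{\mat{k}}}$.

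To exhibit $\solvvprior$ with the required path regularity, I would construct it explicitly: let $\tilde{\solvvprior}_{0}, \tilde{\solvvprior}_{1}$ be independent zero-mean Gaussian processes with covariance $\tilde{\mat{k}}$ on a (possibly enlarged) probability space, and set $\solvvprior \defeq P\tilde{\vec{m}} + P\tilde{\solvvprior}_{0} + Q\tilde{\solvvprior}_{1}$. Since $P$ and $Q$ are bounded endomorphisms of $\solsp$ and each $\tilde{\solvvprior}_{i}$ has sample paths in $\solsp$ by hypothesis, every path of $\solvvprior$ lies in $\solsp$. By independence, the finite-dimensional marginals are jointly Gaussian with mean $P\tilde{\vec{m}} = \vec{m}$ and covariance $\LkL{P}{\tilde{\mat{k}}} + \LkL{Q}{\tilde{\mat{k}}} = \mat{k}$. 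This also immediately yields $\vec{m} \in \range{P} = \mwrtrialsp$.

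The \emph{orthogonality identity} $\LkL{Q}{\mat{k}}[\mwrcoordproj] = Q\mat{k}\mwrcoordproj^{*} = \vec{0}$ reduces to the algebraic fact $\mwrcoordproj Q = 0$. To derive it, I would invoke the factorization $P = \isomorphism{\R^m}{\mwrtrialsp}\mwrcoordproj$ from \cref{eqn:factorization-trial-projection} together with $P^{2} = P$: the latter reads $\isomorphism{\R^m}{\mwrtrialsp}(\mwrcoordproj \isomorphism{\R^m}{\mwrtrialsp})\mwrcoordproj = \isomorphism{\R^m}{\mwrtrialsp}\mwrcoordproj$, and since $\isomorphism{\R^m}{\mwrtrialsp}$ is injective and $\mwrcoordproj$ surjective onto $\R^{m}$, one obtains $\mwrcoordproj \isomorphism{\R^m}{\mwrtrialsp} = \id_{\R^{m}}$, whence $\mwrcoordproj P = \mwrcoordproj$ and $\mwrcoordproj Q = 0$. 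Substituting the kernel decomposition from the first step then gives $Q \mat{k} \mwrcoordproj^{*} = (QP)\tilde{\mat{k}}P^{*}\mwrcoordproj^{*} + Q\tilde{\mat{k}}(\mwrcoordproj Q)^{*} = \vec{0}$, using both $QP = P - P^{2} = 0$ and $\mwrcoordproj Q = 0$.

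The only nontrivial step I anticipate is the construction of $\solvvprior$: one must verify that the bounded-linear pushforwards $P\tilde{\solvvprior}_{0}$ and $Q\tilde{\solvvprior}_{1}$ are well-defined $\solsp$-valued Gaussian processes with the stated covariance kernels. This is precisely the pushforward machinery developed in \Cref{sec:affine-gp-inference} in support of \cref{thm:gp-inference-linfctls}, so this step reduces to invoking those results; the remainder of the proof is routine operator-algebraic manipulation on projections and adjoints.
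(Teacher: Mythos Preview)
Your proposal is correct and follows essentially the same route as the paper: construct $\solvvprior$ as a sum of independent $P$- and $Q$-pushforwards of copies of $\tilde{\solvvprior}$, then verify the cross-covariance vanishes via the projection identities $QP=0$ and $\mwrcoordproj P=\mwrcoordproj$. Your final computation uses the decomposed form $\mat{k}=P\tilde{\mat{k}}P^{*}+Q\tilde{\mat{k}}Q^{*}$, whereas the paper expands the third form $\tilde{\mat{k}}-P\tilde{\mat{k}}-\tilde{\mat{k}}P^{*}+2P\tilde{\mat{k}}P^{*}$ and cancels termwise; your version is marginally cleaner but otherwise equivalent.
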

\Cref{fig:mwr-info-ops-fin-trial-fem-recovery} visualizes how a prior of this form
reproduces a 1D finite element method in the posterior mean and
\cref{fig:mwr-info-ops-fin-trial-fem} as a whole contrasts the difference between
$\tilde{\solvvprior}$ and $\solvvprior$.
Intuitively speaking, the construction for the covariance from
\cref{prop:mwr-recovery-prior} enforces statistical independence between the subspaces
$\mwrtrialsp$ and $\kernel{\mwrtrialproj}$ of the GP's path space.
This way, an observation of the GP prior in the subspace $\mwrtrialsp$ gains no information
about $\kernel{\mwrtrialproj}$, which means that the posterior process will not be
updated along $\kernel{\mwrtrialproj}$.
Since $\vec{m} \in \mwrtrialsp$, i.e.~$\evallinop{\proj{\kernel{\mwrtrialproj}}}{\vec{m}}$,
it follows that the posterior mean will also lie in $\mwrtrialsp$.
Even though this choice of prior is somewhat restrictive, there are good reasons to use
it in practice, arguably the most important of which is that the uncertainty
quantification provided by the GP can be added on top of traditional MWR solvers in
existing pipelines in a plug-and-play fashion.
This is because given the MWR recovery prior, the mean estimate of the probabilistic numerical method agrees with the point
estimate produced by the classical solver.

\subsection{Algorithm}
\label{sec:algorithm}
\Cref{alg:linpde-gp} summarizes our framework from an algorithmic standpoint.
It outlines how a GP prior can be conditioned on heterogeneous sources of information
such as mechanistic knowledge given in the form of a linear boundary value problem,
and noisy measurement data by leveraging the notion of a linear information operator.
All GP posteriors in this article were computed by this algorithm with different choices
of prior, PDE, boundary conditions and policy.
\begin{algorithm}[ht]
  \caption{Solving PDEs via Gaussian Process Inference\label{alg:linpde-gp}}
  \textbf{Input:}
Joint GP prior \( (\solvvprior, \mwrwsrhsprior, \mwrwsbfnprior, \rvec{\epsilon}) \sim \gp{\vec{m}}{\mat{k}} \),
linear PDE \( (\mwrwsdiffop, \mwrwsrhsprior) \),
boundary conditions \( (\mwrwsbop, \mwrwsbfnprior) \),
(noisy) measurements \( (\mat{X}_\text{MEAS}, \vec{y}_\text{MEAS}) \), $\dotsc$ \\
\textbf{Output:} GP posterior \( \gp{\vec{m}^{(i)}}{\mat{k}^{(i)}} \)
\begin{algorithmic}[1]
  \Procedure{\textsc{LinPDE-GP}}{$\vec{m}, \mat{k}, \mwrinfoop{\cdot}[\cdot]^\text{PDE}, \mwrinfoop{\cdot}[\cdot]^\text{BC}, \mat{X}_\text{MEAS}, \vec{y}_\text{MEAS}$}
  \State \( i \gets 0 \)
  \State \( (\vec{m}^{(0)}, \mat{k}^{(0)}) \gets (\vec{m}, \mat{k}) \)
  \State \( \vec{w}^{(0)} \gets ()\)
  \State \( \mat{G}^{(0)} \gets ()\)
  \While{\textbf{not} \textsc{StoppingCriterion}()}
  \State \( i \gets i + 1 \)
  \State \(
  (
  \mwrtestfctl_\text{PDE}^{(i)},
  \mwrtestfctl_\text{BC}^{(i)},
  \mwrtrialproj^{(i)},
  \dotsc,
  \vec{v}_\text{MEAS}^{(i)}
  )
  \gets \textsc{Policy}(\vec{m}^{(i)}, \mat{k}^{(i)})
  \)
  \Comment{Action}
  \State \(
  \linfctls{I}^{(i)}
  \gets
  (\solvv, \mwrwsrhs, \mwrwsbfn, \vec{\epsilon})
  \mapsto
  \begin{pmatrix}
    \evallinop{\linfctls{I}_{\mwrtestfctl_\text{PDE}^{(i)}, \mwrtrialproj^{(i)}}^\text{PDE}}{(\solvv, \mwrwsrhs)} \\
    \evallinop{\linfctls{I}_{\mwrtestfctl_\text{BC}^{(i)}, \mwrtrialproj^{(i)}}^\text{BC}}{(\solvv, \mwrwsbfn)}   \\
    \vdots                                                                                                        \\
    \inprod{\vec{v}_\text{MEAS}^{(i)}}{\solvv(
      \mat{X}_\text{MEAS}) + \vec{\epsilon}}
  \end{pmatrix}
  \)
  \Comment{Information operator}
  \State \( \vec{y}^{(i)} \gets \pvec{0 & 0 & \hdots & \inprod{\vec{v}_\text{MEAS}^{(i)}}{\vec{y}_\text{MEAS}}}\T \)
  \Comment{Observations}
  \State \(
  \mat{G}^{(i)}
  \gets
  \begin{pmatrix}
    \mat{G}^{(i - 1)}                                           & \LkL{\linfctls{I}^{(1:i-1)}}{\mat{k}}[(\linfctls{I}^{(i)})] \\
    \LkL{\linfctls{I}^{(i)}}{\mat{k}}[(\linfctls{I}^{(1:i-1)})] & \LkL{\linfctls{I}^{(i)}}{\mat{k}}[(\linfctls{I}^{(i)})]
  \end{pmatrix}
  \)
  \Comment{Update Gram matrix}
  \State \( \vec{w}^{(i)} \gets (\mat{G}^{(i)})\pinv (\vec{y}^{(1:i)} - \evallinop{\linfctls{I}^{(1:i)}}{\vec{m}}) \)
  \Comment{Update representer weights}
  \State \( \vecelem{m}^{(i)}_j \gets \vec{x} \mapsto \vecelem{m}_j(\vec{x}) + \evallinop{\linfctls{I}^{(1:i)}}{\vec{k}_{:,j}(\cdot, \vec{x})}\T \vec{w}^{(i)} \)
  \Comment{Belief Update}
  \State \( \matelem{k}^{(i)}_{j_1,j_2} \gets (\vec{x}_1, \vec{x}_2) \mapsto \matelem{k}_{j_1,j_2}(\vec{x}_1, \vec{x}_2) - \evallinop{\linfctls{I}^{(1:i)}}{\vec{k}_{:,j_1}(\cdot, \vec{x}_1)}\T (\mat{G}^{(i)})\pinv \evallinop{\linfctls{I}^{(1:i)}}{\vec{k}_{:,j_2}(\cdot, \vec{x}_2)} \)
  \EndWhile
  \State \Return \( \gp{\vec{m}^{(i)}}{\mat{k}^{(i)}} \)
  \EndProcedure
\end{algorithmic}
\end{algorithm}
Modeling uncertainty over the right-hand side $\mwrwsrhs$ of the PDE and the boundary
function(al) $\mwrwsbfn$ is achieved by specifying a joint prior
$(\solvvprior, \mwrwsrhsprior, \mwrwsbfnprior, \rvec{\epsilon})$.
Therefore, \Cref{alg:linpde-gp} also returns a multi-output Gaussian process posterior
over the same objects.
This means that our method can be used to solve PDE-constrained Bayesian inverse
problems for the right-hand side $\mwrwsrhs$ and the boundary function $\mwrwsbfn$,
while computing a consistent distributional estimate for the corresponding solution
$\solvv$ of the forward problem.
This is a generalization of a linear \emph{latent force model} \citep{Alvarez2009LFM}.
If $\mwrwsrhs$ and $\mwrwsbfn$ are not uncertain, the corresponding covariance functions
in the joint prior can simply be set to 0, which (in the absence of measurements)
reduces the joint prior to a simple prior over the solution $\solvvprior$.
To condition the GP on the PDE and the boundary conditions, we make use of MWR
information operators (see \cref{def:mwr-info-op}), where the test functions and
projections are chosen by an arbitrary policy in each iteration of the method.
An example of such a policy which reproduces \cref{fig:mwr-recovery} chooses
$\mwrtrialproj$ as the $\L2$ projection onto the basis from \cref{ex:fem} in every
iteration, the test functions $\mwrtestfctl_\text{BC} \in \{ \delta_{-1}, \delta_1 \}$,
and $\mwrtestfctl_\text{PDE} = 0$ in the first two iterations; and
$\mwrtestfctl_\text{PDE}$ is induced by $\mwrtestfn^{(i - 2)} = \mwrtrial^{(i - 2)}$
(and $\mwrtestfctl_\text{BC} = 0$) from iteration 3 onward.
The ellipses in the information operator $\linfctls{I}^{(i)}$ and the observations
$\vec{y}^{(i)}$ indicate that adding additional information operators is possible in the
same fashion.
For instance, adding additional PDE information operators enables the solution of
systems of linear PDEs.

\paragraph{Performance Considerations}
Instead of naively conditioning the previous conditional process on the
new observation in each iteration, \cref{alg:linpde-gp} always conditions the prior on
the accumulated observations.
This is because the naive expressions for the conditional moments become more
and more complex over time.
While, in principle, it is possible to use automatic differentiation (AD) to compute
$\evallinop{\linfctls{I}^{(i)}}{\vec{m}^{(i)}}$,
$\evallinop{\linfctls{I}^{(i)}}{\vec{k}^{(i - 1)}_{:, j}(\cdot, \vec{x})}$, and
$\LkL{\linfctls{I}^{(i)}}{\mat{k}^{(i - 1)}}[(\linfctls{I}^{(i)})]$ in each iteration and then
evaluate \cref{eqn:gp-linop-evals-posterior-mean,eqn:gp-linop-evals-posterior-cov}
naively, we found that this is detrimental to the performance of the algorithm.
In \cref{alg:linpde-gp}, we only need to compute
$\evallinop{\linfctls{I}^{(i)}}{\vec{m}}$,
$\evallinop{\linfctls{I}^{(i)}}{\vec{k}_{:,j}(\cdot, \vec{x})}$, and
$\LkL{\linfctls{I}^{(i)}}{\mat{k}}[(\linfctls{I}^{(i)})]$
on the prior moments, which are much less complex and cheaper to evaluate.
For maximum efficiency, for many information operator / kernel combinations one can
compute optimized closed-form expressions for these terms, alleviating the need for
automatic differentiation or quadrature.
We can avoid unnecessary recomputation of the representer weights at every iteration of
the method by means of block-matrix inversion.
For instance, if a Cholesky decomposition is used to invert the Gramian $\mat{G}^{(i)}$,
we can use a variant of the block Cholesky decomposition \citep{Golub2013Matrix} to
update the Cholesky factor of $\mat{G}^{(i - 1)}$.

\paragraph{Code}
A Python implementation of \cref{alg:linpde-gp} based on ProbNum
\citep{Wenger2021ProbNum} and JAX \citep{Johnson2018JAX} is available at:
\begin{center}
  \url{https://github.com/marvinpfoertner/linpde-gp}
\end{center}

\subsection{Related Work}
\label{sec:related-work}
%
%
The area of physics-informed machine learning \citep{Karniadakis2021PhysInfML}
aims at augmenting machine learning models with mechanistic knowledge about physical
phenomena, mostly in the form of ordinary and partial differential equations.
Recently, there has been growing interest in deep learning--based approaches \citep{Raissi2019PINNs,Li2020NeurOp,Li2021FNO}.
However, this model choice makes it inherently difficult to quantify the uncertainty about the solution induced by noise-corrupted input data and inevitable approximation error.
%
%
Instead, we approach the problem through the lens of \emph{probabilistic numerics}
\citep{Hennig2015PN,Cockayne2019BayesPNMeth,Oates2019PNRetro,Owhadi2019StatNumApprox,%
  Hennig2022PNBook}, which frames numerical problems as statistical estimation tasks.
%
%
Probabilistic numerical methods for the solution of PDEs
are predominantly based on Gaussian process priors.
Our work builds upon and extends these works.
Many existing methods aim to find a strong solution to a linear PDE using a collocation
scheme (e.g.~\citealt{Graepel2003LinOpEqGP,Cockayne2017PNPDEInv,Raissi2017LinDEGP}).
Unfortunately, many practically relevant (linear) PDEs only admit weak solutions.
Our framework extends existing collocation approaches to weak formulations.
Probabilistic numerical methods approximating weak formulations are primarily based on discretization.
For example, \citet{Cockayne2019BayesCG,Wenger2020ProbLinSolve} apply a probabilistic linear solver to the linear system arising from discretization.
\citet{Girolami2021statFEM} propose a statistical version of the finite element method (statFEM), which uses a specific parametric GP prior.
However, these approaches do not quantify the inherent discretization error -- often the largest source of uncertainty about the solution.
In contrast, our framework models this error and additionally admits a broader class of discretizations.
\citet{Wang2021BayesNonlinPDE,Kraemer2022PNMOL} propose GP-based solvers for strong
formulations of time-dependent nonlinear PDEs by leveraging finite-difference
approximations to the differential operator and linearization-based approximate
inference.
While it is possible to apply such methods to linear PDEs, the finite difference
approximation of the differential operator introduces additional estimation error.
In contrast, the evaluation of the differential operator in our method is exact.
\citet{Cockayne2017PNPDEInv,Raissi2017LinDEGP,Girolami2021statFEM} also apply their
methods to solve PDE-constrained (Bayesian) inverse problems.
\citet{Sarkka2011LinOpsSPDEGP} directly infers the right-hand side of a linear PDE in
strong formulation by observing measurements of the solution through the associated
Green's function.
Our approach also builds a belief over an unknown right-hand side without requiring
access to a Green's function.
%
The aforementioned methods use the closure of Gaussian processes under conditioning on
observations of the sample paths through a linear operator without proof.
\citet{Owhadi2018CondGaussHilbert} show how to condition Gaussian \emph{measures} on an
orthogonal direct sum of separable Hilbert spaces on observations of one of the
summands.
However, this result does not apply to separable \emph{Banach} spaces such as Hölder spaces, which are ubiquitous in the study of strong solutions of linear PDEs.
Furthermore, when it can be applied, it does not translate to Gaussian \emph{processes} without significant effort.\footnote{The theoretical results of an earlier version of this work were based on the result by \citet{Owhadi2018CondGaussHilbert}. In order to generalize our framework to Banach spaces, we've adopted a different proof strategy.}
Our work therefore provides the theoretical basis for conditioning Gaussian
\emph{processes} on observations of their sample paths made through an arbitrary bounded
linear operator with values in $\R^n$.
Recent results about the sample spaces of GPs
\citep{Steinwart2019SamplePathProps,Kanagawa2018GPKernMeth} ensure the applicability of
our work to practical GP regression problems.
From a practitioner's perspective this allows the modeling flexibility of Gaussian processes via the kernel, while ensuring that conditioning on observations of the sample paths through a linear operator is possible.
To our knowledge this is the first complete proof of this widely used property of GPs.
Thus, \cref{thm:gp-inference-linfctls} provides the theoretical basis for physics-informed GP regression, including
the aforementioned methods for the solution of PDEs.
In our work, it enables conditioning on information operators constructed from e.g.~PDEs,
boundary conditions and general integral equations.

  \section{Gaussian Process Inference with Linear Operator Observations}
\label{sec:affine-gp-inference}
Our framework fundamentally relies on the fact that when a Gaussian process prior is
conditioned on linear observations of its paths, one obtains a closed-form posterior.
This section provides the theoretical foundation for this result.
While this property is used widely in the literature
(see e.g.~\citet{%
  Graepel2003LinOpEqGP,%
  Rasmussen2006GPML,%
  Sarkka2011LinOpsSPDEGP,%
  Sarkka2013InfDimFiltSmooth,%
  Cockayne2017PNPDEInv,%
  Raissi2017LinDEGP,%
  Agrell2019GPLinOpIneq,%
  Albert2019GPLinDE,%
  Kraemer2022PNMOL%
}),
no proof of its general form where observations are made via \emph{bounded linear %
  operators} mapping a separable Banach function spaces into $\R^n$, instead of
\emph{finite-dimensional linear maps} on a finite number of point evaluations exists, to
the best of our knowledge.
\citet{Owhadi2018CondGaussHilbert} give a proof of a related property for Gaussian
\emph{measures} on separable Hilbert spaces.
Here, we extend their results to the case of Gaussian \emph{processes}.
While these perspectives are closely related, significant technical attention needs to
be paid for this result to transfer to the GP case.
For our framework this is essential such that we can leverage the modeling capabilities
provided by specifying a kernel as described in \cref{sec:encoding-prior-knowledge}.

To state the result, let $\mogpprior \sim \gp{\vec{m}}{\mat{k}}$ be a (multi-output) GP
prior with index set $\gpidcs$, $\linfctls{L} \colon \paths{\mogpprior} \to \R^n$ a
linear operator acting on the paths of $\mogpprior$, and
$\rvec{\epsilon} \sim \gaussian{\vec{\mu}}{\mat{\Sigma}}$
a Gaussian random vector in $\R^n$ with $\indp{\rvec{\epsilon}}{\mogpprior}$.
We need to compute the conditional random process
\begin{equation*}
  \condrv{\mogpprior \given \linfctlsat{L}{\mogpprior} + \rvec{\epsilon} = \vec{y}}
\end{equation*}
for some $\vec{y} \in \R^n$.
This object is defined as the family
\(
\left( \condrv*{\mogpprior \given \linfctlsat{L}{\mogpprior} + \rvec{\epsilon} = \vec{y}} \right)
\defeq
\{ \condrv{\mogpprior(x, \cdot) \given E} \}_{x \in \gpidcs},
\)
of conditional random variables\footnotemark, where
$(\Omega, \borelsigalg{\Omega}, \prob{})$
is the probability space on which both $\mogpprior$ and $\rvec{\epsilon}$ are defined,
$E$ is the event
$E \defeq \preim{\rvec{h}}(\set{\vec{y}}) \in \borelsigalg{\Omega}$,
and $\rvec{h}$ is the random variable
\begin{equation*}
  \rvec{h} \colon \Omega \to \R^n,
  \omega \mapsto \linfctlsat{L}{\mogpprior(\cdot, \omega)} + \rvec{\epsilon}(\omega).
\end{equation*}
\footnotetext{
  Here, we need to work with \emph{regular conditional probability measures}
  \citep{Klenke2014Probability}, since the event $E$ typically has probability
  0.
}%
We refer to \cref{app:proofs-gp-inference} for definitions of the objects mentioned above.
For instance, in \cref{sec:pde-solution-inference}, we use
$\linfctls{L} \defeq \left( \linopat{D}{\cdot}(\vec{x}_i) \right)_{i = 1}^n$, where
$\linop{D}$ is a linear differential operator,
as well as $\linfctlsat{L}{\mogpprior} \defeq (\mogpprior(\vec{x}_i))_{i = 1}^n$,
and, in \cref{sec:cpu-stationary-1d}, we additionally use
\begin{equation*}
  \linfctlsat{L}{\mogpprior} = \rintegral[\gpidcs]{\vec{x}}{\mogpprior(\vec{x})}.
\end{equation*}
It is well-known that $\rvec{h}$ is a Gaussian random vector
\(
\rvec{h} \sim \gp{%
  \linfctlsat{L}{\vec{m}} + \vec{\mu}
}{%
  \LkL{\linfctls{L}}{\mat{k}} + \mat{\Sigma}
},
\)
where $\LkL{\linfctls{L}}{\mat{k}} \in \R^{n \times n}$ with
\(
\left( \LkL{\linfctls{L}}{\mat{k}} \right)_{i_1,i_2}
= \evallinop{\linfctlselem{L}_{i_1}}{%
  \vec{x} \mapsto (\evallinop{\linfctlselem{L}_{i_2}}{\vec{k}_{j_1,:}(\vec{x}, \cdot)})_{j_1 = 1}^n
},
\)
and that the conditional random process is a (multi-output) Gaussian process
\begin{equation*}
  \condrv*{\mogpprior \given \linfctlsat{L}{\mogpprior} + \rvec{\epsilon} = \vec{y}}
  \sim
  \gp{\vec{m}^{\condrv{\mogpprior \given \vec{y}}}}{\mat{k}^{\condrv{\mogpprior \given \vec{y}}}}
\end{equation*}
with conditional moments given by
\begin{align*}
  \vecelem{m}^{\condrv{\mogpprior \given \vec{y}}}_i(\vec{x})
   & = \vecelem{m}_i(\vec{x}) +
  \linfctlsat{L}{\vec{k}_{:,i}(\cdot, \vec{x})}\T
  \left( \LkL{\linfctls{L}}{\mat{k}} + \mat{\Sigma} \right)\inv
  \left( \vec{y} - (\linfctlsat{L}{\vec{m}} + \vec{\mu}) \right),
  \qquad \text{and}                                  \\
  \matelem{k}^{\condrv{\mogpprior \given \vec{y}}}_{i_1,i_2}(\vec{x}_1, \vec{x}_2)
   & = \matelem{k}_{i_1,i_2}(\vec{x}_1, \vec{x}_2) +
  \linfctlsat{L}{\vec{k}_{:,i_1}(\cdot, \vec{x}_1)}\T
  \left( \LkL{\linfctls{L}}{\mat{k}} + \mat{\Sigma} \right)\inv
  \linfctlsat{L}{\vec{k}_{:,i_2}(\cdot, \vec{x}_2)}.
\end{align*}
Since the above are nontrivial claims about potentially ill-behaved
infinite-dimensional objects, a proof is important, be it just to
identify a precise set of assumptions about the objects at play, ensuring the result holds.
For instance, the statement that $\rvec{h}$ is a random vector, i.e.~a measurable
function, is highly nontrivial.
To remedy this situation, a major contribution of this work are
\cref{thm:gp-inference-linfctls,cor:gp-inference-linop-evals}
and their proof in \cref{app:proofs-gp-inference}, which prove the claims above under
realistic assumptions.
Hence, besides being the theoretical basis for this work,
\cref{thm:gp-inference-linfctls,cor:gp-inference-linop-evals}
also provide theoretical backing for many of the publications cited above.
Our results identify a set of mild assumptions, which are easy to verify and
widely-applicable in practical applications.
\Cref{asm:gp-gm} constitutes the common set of assumptions shared by
\cref{thm:gp-inference-linfctls,cor:gp-inference-linop-evals}.

\begin{restatable}{mainassumption}{LinOpGPInferenceAssumptions}
  \label{asm:gp-gm}
  Let $\gpprior \sim \gp{m}{k}$ be a Gaussian process prior with index set $\gpidcs$
  on the probability space $(\Omega, \sigalg, \prob{})$,
  whose paths lie in a real separable reproducing kernel Banach space (RKBS) $\gppathsp \subset \maps{\gpidcs}{\R}$
  such that $\omega \mapsto \gpprior(\cdot, \omega)$ is a $\gppathsp$-valued Gaussian
  random variable.
\end{restatable}

For instance, for a 1D domain $\dom \subset \R$, a GP prior with half-integer Matérn
kernel with smoothness parameter $\nu = p + \frac{1}{2}$ fulfills \cref{asm:gp-gm}
with $\gppathsp = \bucdfns{p}{\dom}$, i.e.~the space of $p$-times differentiable
functions with bounded and uniformly continuous derivatives.
Similar results hold in multiple dimensions and for other kernels.
See \cref{sec:prior-selection} for more information on prior selection.

\begin{table}
  \caption{
    \cref{thm:gp-inference-linfctls} provides the theoretical basis to condition on
    (affine) observations of a Gaussian process.
    While results like conditioning on derivative evaluations are used ubiquitously
    (e.g.~for monotonic GPs, Bayesian optimization, probabilistic numerical PDE solvers,
    \dots) a complete proof does not exist in the literature, to the best of our
    knowledge.
  }
  \centering
  \begin{tabular}{llcc}
  \toprule
  Observation                    & Information operator                                                                                     & Proof known? & Reference                           \\
  \midrule
  Point evaluation               & \( \mogpprior(\vec{x}) \)                                                                                & \cmark       & \cite{Bishop2006PRML}               \\
  Finite-dim. affine map         & \( \mat{A} \mogpprior(\vec{X}) + \vec{b} \)                                                              & \cmark       & \cite{Bishop2006PRML}               \\
  Point evaluation of derivative & \( \mpderivat[\abs{\vec{\alpha}}]{\pdiff[\vec{\alpha}]{\vec{x}}}{\mogppriorelem_i}{\vec{x}}{\vec{x}'} \) & \xmark       & \cref{cor:gp-inference-linop-evals} \\
  Integral                       & \( \lintegral[\gpidcs]{\mu}{\vec{x}}{\inprod{\vec{\psi}(\vec{x})}{\mogpprior(\vec{x})}} \)               & \xmark       & \cref{thm:gp-inference-linfctls}    \\
  General affine functionals     & \( \linfctlsat{L}{\mogpprior} + \vec{b} \)                                                               & \xmark       & \cref{thm:gp-inference-linfctls}    \\
  \bottomrule
\end{tabular}
  \label{tab:linop-gp-inference}
\end{table}

\Cref{thm:gp-inference-linfctls} enables affine observations, in which the GP sample
paths enter through one or multiple continuous linear functionals.
For example, we used \cref{thm:gp-inference-linfctls} in \cref{sec:cpu-stationary-1d} to
condition on observations of an integral of a GP's paths and in \cref{sec:mwr-info-ops}
to condition on projections of the paths.
To state the result conveniently, we introduce some notation.
\begin{mainnotation}
  \label{not:L1kL2adj-linfunctls}
  Let \cref{asm:gp-gm} hold and let $\linfctls{L} \colon \gppathsp \to \R^n$
  and $\tilde{\linfctls{L}} \colon \gppathsp \to \R^{\tilde{n}}$ be bounded linear
  operators.
  By $\LkL{\linfctls{L}}{k}[\tilde{\linfctls{L}}] \in \R^{n_1 \times n_2}$ we denote
  the matrix with entries
  \begin{equation*}
    (\LkL{\linfctls{L}}{k}[\tilde{\linfctls{L}}])_{ij}
    \defeq
    \evallinop{\linfctls{L}}{%
      \vec{x} \mapsto \evallinop{\tilde{\linfctls{L}}}{k(\vec{x}, \cdot)}_j
    }_i.
  \end{equation*}
\end{mainnotation}
The order in which the operators $\linfctls{L}$, $\tilde{\linfctls{L}}$ are
applied to the arguments of $k$ does not matter, i.e.~%
\(
(\LkL{\linfctls{L}}{k}[\tilde{\linfctls{L}}])_{ij}
= \evallinop{\linfctls{L}}{\vec{x} \mapsto \evallinop{\tilde{\linfctls{L}}}{k(\vec{x}, \cdot)}_j}_i
= \evallinop{\tilde{\linfctls{L}}}{\vec{x} \mapsto \evallinop{\linfctls{L}}{k(\cdot, \vec{x})}_i}_j.
\)
This motivates the parenthesis-free notation $\LkL{\linfctls{L}}{k}[\tilde{\linfctls{L}}]$
introduced above.
\begin{restatable}{maintheorem}{ThmGPInferenceLinFunctls}
  \label{thm:gp-inference-linfctls}
  Let \cref{asm:gp-gm} hold and let $\linfctls{L} \colon \gppathsp \to \R^n$ be a
  bounded linear operator.
  Then
  \begin{equation}
    \label{eqn:gp-linfunctls-predictive}
    \linfctlsat{L}{\gpprior} \sim \gaussian{
      \linfctlsat{L}{m}
    }{
      \LkL{\linfctls{L}}{k}
    }.
  \end{equation}
  Let $\rvec{\epsilon} \sim \gaussian{\vec{\mu}}{\mat{\Sigma}}$ be an $\R^n$-valued
  Gaussian random vector with $\indp{\rvec{\epsilon}}{\gpprior}$.
  Then, for any $\vec{y} \in \R^n$,
  \begin{equation}
    \label{eqn:gp-linfunctls-posterior}
    \condrv{\gpprior \given \linfctlsat{L}{\gpprior} + \rvec{\epsilon} = \vec{y}}
    \sim \gp{
      m^{\condrv{\gpprior \given \vec{y}}}
    }{
      k^{\condrv{\gpprior \given \vec{y}}}
    },
  \end{equation}
  with conditional mean and covariance function given by
  \begin{equation}
    \label{eqn:gp-linfunctls-posterior-mean}
    m^{\condrv{\gpprior \given \vec{y}}}(\vec{x})
    = m(\vec{x}) +
    \linfctlsat{L}{k(\vec{x}, \cdot)}\T
    \left( \LkL{\linfctls{L}}{k} + \mat{\Sigma} \right)\pinv
    \left( \vec{y} - \left( \linfctlsat{L}{m} + \vec{\mu} \right) \right),
  \end{equation}
  and
  \begin{equation}
    \label{eqn:gp-linfunctls-posterior-cov}
    k^{\condrv{\gpprior \given \vec{y}}}(\vec{x}_1, \vec{x}_2)
    = k(\vec{x}_1, \vec{x}_2) -
    \linfctlsat{L}{k(\vec{x}_1, \cdot)}\T
    \left( \LkL{\linfctls{L}}{k} + \mat{\Sigma} \right)\pinv
    \linfctlsat{L}{k(\cdot, \vec{x}_2)}.
  \end{equation}
\end{restatable}

Finally, we turn to \cref{cor:gp-inference-linop-evals}, which is the result that is
most widely-used throughout the literature \citep{Graepel2003LinOpEqGP,Sarkka2011LinOpsSPDEGP,Sarkka2013InfDimFiltSmooth,Cockayne2017PNPDEInv,Raissi2017LinDEGP,Agrell2019GPLinOpIneq,Albert2019GPLinDE,Kraemer2022PNMOL}.
It shows how Gaussian processes can be conditioned on point evaluations of the image of
their paths under a linear operator, provided that the linear operator is bounded and
maps into a separable Banach function space, on which point evaluation is continuous.
Moreover, it shows that, under these conditions, the image of the GP under the linear
operator is itself a Gaussian process.
Again, we introduce some notation to facilitate stating the result.
\begin{restatable}{mainnotation}{NotLkLadjLinOpEvals}
  \label{not:L1kL2adj-linops}
  Let \cref{asm:gp-gm} hold and let $\linop{L}_i \colon \gppathsp \to \gppathsp_i$ for
  $i = 1, 2$ be bounded linear operators mapping into real separable RKBSs
  $\gppathsp_i \subset \maps{\gpidcs_i}{\R}$, respectively.
  In analogy to \cref{not:L1kL2adj-linfunctls}, we define the bivariate functions
  \begin{alignat}{7}
    \kL{k}{\linop{L}_2}
    \colon & \setsym{X}   & \times & \setsym{X}_2 &  & \to \R,\, &  & (\vec{x},\,   &  & \vec{x}_2 & ) & \mapsto \evallinop{\linop{L}_2}{k(\vec{x}, \cdot)}(\vec{x}_2),                       \\
    \Lk{\linop{L}_1}{k}
    \colon & \setsym{X}_1 & \times & \setsym{X}   &  & \to \R,\, &  & (\vec{x}_1,\, &  & \vec{x}   & ) & \mapsto \evallinop{\linop{L}_1}{k(\cdot, \vec{x})}(\vec{x}_1), \qquad \text{and}     \\
    \LkL{\linop{L}_1}{k}[\linop{L}_2]
    \colon & \setsym{X}_1 & \times & \setsym{X}_2 &  & \to \R,\, &  & (\vec{x}_1,\, &  & \vec{x}_2 & ) & \mapsto \evallinop{\linop{L}_1}{(\kL{k}{\linop{L}_2})(\cdot, \vec{x}_2)}(\vec{x}_1).
  \end{alignat}
\end{restatable}
\begin{restatable}{maincorollary}{CorrGPInferenceLinOpEvals}
  \label{cor:gp-inference-linop-evals}
  Let \cref{asm:gp-gm} hold and let $\linop{L} \colon \gppathsp \to \tilde{\gppathsp}$
  be a linear operator mapping into a real vector space
  $\tilde{\gppathsp} \subset \maps{\tilde{\gpidcs}}{\R}$
  such that $\delta_\vec{\tilde{x}} \circ \linop{L}$ is bounded for all
  $\tilde{x} \in \tilde{\gpidcs}$.
  Then
  \begin{equation}
    \label{eqn:gp-linop-evals-predictive}
    \linopat{L}{\gpprior} \sim \gp{
      \linopat{L}{m}
    }{
      \LkL{\linop{L}}{k}
    }.
  \end{equation}
  Let $\rvec{\epsilon} \sim \gaussian{\vec{\mu}}{\mat{\Sigma}}$ with values in $\R^n$
  and $\indp{\rvec{\epsilon}}{\gpprior}$.
  Then, for $\tilde{\mat{X}} = (\tilde{\vec{x}}_i)_{i = 1}^n \in \tilde{\gpidcs}^n$
  and $\vec{y} \in \R^n$,
  \begin{equation}
    \label{eqn:gp-linop-evals-posterior}
    \condrv{%
      \gpprior
      \given
      \linopat{L}{\gpprior}(\tilde{\mat{X}}) + \rvec{\epsilon} = \vec{y}
    }
    \sim \gp{
      m^{\condrv{\gpprior \given \vec{y}}}
    }{
      k^{\condrv{\gpprior \given \vec{y}}}
    }
  \end{equation}
  with
  \begin{equation}
    \label{eqn:gp-linop-evals-posterior-mean}
    m^{\condrv{\gpprior \given \vec{y}}}(\vec{x})
    \defeq
    m(\vec{x}) +
    (\kL{k}{\linop{L}})(\vec{x}, \tilde{\mat{X}})\T
    \left((\LkL{\linop{L}}{k})(\tilde{\mat{X}}, \tilde{\mat{X}}) + \mat{\Sigma} \right)\pinv
    \left( \vec{y} - \left( \linopat{L}{m}(X) + \vec{\mu} \right) \right)
  \end{equation}
  and
  \begin{equation}
    \label{eqn:gp-linop-evals-posterior-cov}
    k^{\condrv{\gpprior \given \vec{y}}}(\vec{x}_1, \vec{x}_2)
    \defeq k(\vec{x}_1, \vec{x}_2) -
    (\kL{k}{\linop{L}})(\vec{x}_1, \tilde{\mat{X}})\T
    \left((\LkL{\linop{L}}{k})(\tilde{\mat{X}}, \tilde{\mat{X}}) + \mat{\Sigma} \right)\pinv
    (\Lk{\linop{L}}{k})(\tilde{\mat{X}}, \vec{x}_2)
  \end{equation}
  If additionally $\tilde{\gpidcs} = \gpidcs$, then
  \begin{equation}
    \label{eqn:gp-linop-evals-joint}
    \begin{pmatrix}
      \gpprior \\
      \linopat{L}{\gpprior}
    \end{pmatrix}
    \sim \multigp{%
      m \\
      \linopat{L}{m}
    }{%
      k                 & \kL{k}{\linop{L}}                 \\
      \Lk{\linop{L}}{k} & \LkL{\linop{L}}{k}
    }.
  \end{equation}
\end{restatable}
\begin{remark}
  The assumptions about $\linop{L}$ from \cref{cor:gp-inference-linop-evals}
  are fulfilled if $\tilde{\gppathsp}$ is an RKBS and $\linop{L}$ is bounded.
  However, these conditions are not necessary.
\end{remark}
\Cref{cor:gp-inference-linop-evals} is the theoretical basis for most of
\cref{sec:cpu-stationary-1d}.
For $\linop{L} = \id[\gppathsp]$, we recover standard GP regression as a special case.
Finally, both \Cref{thm:gp-inference-linfctls} and \Cref{cor:gp-inference-linop-evals}
apply also to vector-valued Gaussian processes.

\begin{remark}[Multi-Output Gaussian Processes]
  \label{rmk:gp-inference-multi-output}
  \Cref{thm:gp-inference-linfctls,cor:gp-inference-linop-evals} also apply to
  multi-output GPs $\mogpprior$.
  In this case, we interpret the sample paths
  $\mogpprior(\cdot, \omega) \colon \gpidcs \to \R^{d'}$
  of the multi-output GP as sample paths
  \(
  \tilde{\gpprior}(\cdot, \omega) \colon I \times \gpidcs \to \R,\
  \tilde{\gpprior}((i, \vec{x}), \omega) \defeq \mogppriorelem_i(\vec{x}, \omega)
  \)
  of a regular GP with index set $I \times \gpidcs \to \R$, where
  $I = \set{1, \dotsc, d'}$ (see \cref{sec:gp}).
  We also generalize notation like $\LkL{\linfctls{L}}{\mat{k}}$ accordingly.
\end{remark}

  \section{Conclusion}
In this work, we developed a probabilistic framework for the solution of (systems of)
linear partial differential equations, which can be interpreted as physics-informed
Gaussian process regression.
It enables the seamless fusion of (1) a-priori known, provable properties of the system
of interest, (2) exact and partial mechanistic information, (3) subjective domain
expertise, as well as, (4) noisy empirical measurements into a unified scientific model.
This model outputs a consistent uncertainty estimate, which quantifies the inherent approximation error in addition to the uncertainty arising from partially-known physics, as well as limited-precision measurements.
Our framework fundamentally relies on the closure of Gaussian processes under conditioning on observations of their sample paths through an arbitrary bounded linear operator.
While this result has been used ubiquitously in the literature, a rigorous proof for linear operator observations, as needed in the PDE setting, did not exist prior to this work to the best of our knowledge.
Our work generalizes and unifies several related formulations of GP-PDE inference.
Importantly, our formulation extends these ideas to virtually all popular methods for PDE
simulation, revealing them to be a form of Gaussian process inference and in turn clarifying the underlying (probabilistic) assumptions.
More specifically, by choosing a specific prior and information operator in our framework, it recovers methods of weighted residuals, a popular family of numerical methods for the solution of (linear) PDEs, which includes generalized Galerkin methods such as finite element and spectral methods.
This demonstrates that classical linear PDE solvers can be generalized in their functionality to include approximate input data and equipped with a structured uncertainty estimate.
Our work outlines a general framework for the integration of mechanistic building blocks
in the form of information operators derived from e.g.~linear PDEs into probabilistic
models.
Our case study shows that the language of information operators is a powerful toolkit
for aggregating heterogeneous sources of partial information in a joint probabilistic
model, especially in the context of physics-informed machine learning.
This opens up several interesting lines of research.
For example, the choice of prior and information operator are not fixed and can be specifically chosen for the problem at hand.
The design of adaptive information operators, which actively collect information based on the current belief about the solution could prove to be a promising research direction.
Further, the uncertainty estimate about the solution could be used to inform experimental design choices.
For example, in the case study from \Cref{sec:cpu-stationary-1d}, the posterior belief can be used to optimize the locations of the digital thermal sensors in future CPU designs.
Finally, it remains an open question whether this framework can be adapted to nonlinear partial differential equations in a similar manner to how many classic methods solve a sequence of linearized problems to approximate the solution of a nonlinear PDE.

  \acks{%
    MP, PH and JW gratefully acknowledge financial support by
    the European Research Council through ERC StG Action 757275 / PANAMA;
    the DFG Cluster of Excellence ``Machine Learning - New Perspectives for Science'',
    EXC 2064/1, project number 390727645;
    the German Federal Ministry of Education and Research (BMBF) through the Tübingen AI
    Center (FKZ: 01IS18039A);
    and funds from the Ministry of Science, Research and Arts of the State of
    Baden-Württemberg.
    The authors thank the International Max Planck Research School for Intelligent
    Systems (IMPRS-IS) for supporting MP and JW.
    IS thanks the Deutsche Forschungsgemeinschaft (DFG, German Research Foundation) for
    supporting this work by funding EXC 2075-390740016 under Germany's Excellence
    Strategy. IS also acknowledges support by the Stuttgart Center for Simulation Science
    (SimTech).

    Finally, the authors are grateful to Natha\"el Da Costa and Filip Tronarp for many invaluable discussions concerning the theoretical part of this work.
  }

  \appendix

  \section{Proofs for Section~\ref*{sec:mwr-info-ops}}
\begin{proof}[Proof of \cref{prop:gp-mwr-info-op-fin-trial}]
  By \cref{thm:gp-inference-linfctls}, we have
  \begin{align*}
    \vecelem{m}^{\condrv{\solvvprior \given \mwrmat, \mwrrhs}}_i(\vec{x})
     & = \vecelem{m}_i(\vec{x}) +
    \evallinop{(\mwrmat \mwrcoordproj)}{\vec{k}_{:,i}(\cdot, \vec{x})}\T
    \left( \LkL{(\mwrmat \mwrcoordproj)}{\mat{k}} \right)\inv
    \left( \mwrrhs - \evallinop{\mwrmat \mwrcoordproj}{\vec{m}} \right)             \\
     & = \vecelem{m}_i(\vec{x}) +
    \evallinop{\mwrcoordproj}{\vec{k}_{:,i}(\cdot, \vec{x})}\T \mwrmat\T
    \left( \mwrmat \mat{\Sigma}_\mwrcoordsprior \mwrmat\T \right)\inv
    \mwrmat \left( \mwrmat\inv \mwrrhs - \evallinop{\mwrcoordproj}{\vec{m}} \right) \\
     & = \vecelem{m}_i(\vec{x}) +
    \evallinop{\mwrcoordproj}{\vec{k}_{:,i}(\cdot, \vec{x})}\T
    \mat{\Sigma}_\mwrcoordsprior\inv
    \left( \mwrmat\inv \mwrrhs - \evallinop{\mwrcoordproj}{\vec{m}} \right).
  \end{align*}
  Since $\mwrtrialproj$ is a bounded projection, we have
  \(
  \solsp
  = \range{\mwrtrialproj} \oplus \kernel{\mwrtrialproj}
  = \mwrtrialsp \oplus \kernel{\mwrtrialproj}
  \)
  \citep[see][Section 5.16]{Rudin1991FuncAna},
  where each $\solvv \in \solsp$ decomposes uniquely into $\solvv = \solvv_{\mwrtrialsp} + \solvv_{\kernel{\mwrtrialproj}}$
  with $\solvv_{\mwrtrialsp} \in \mwrtrialsp$ and $\solvv_{\kernel{\mwrtrialproj}} \in \kernel{\mwrtrialproj}$.
  It is clear that \( \solvv_{\mwrtrialsp} = \evallinop{\mwrtrialproj}{\solvv}, \) and
  \(
  \solvv_{\kernel{\mwrtrialproj}}
  = \evallinop{\left(\id - \mwrtrialproj\right)}{\solvv}
  = \evallinop{\proj{\kernel{\mwrtrialproj}}}{\solvv}.
  \)
  This implies
  \begin{align*}
    \evallinop{\mwrcoordproj}{\vec{m}^{\condrv{\solvvprior \given \mwrmat, \mwrrhs}}}
     & = \evallinop{\mwrcoordproj}{\vec{m}}
    + \underbrace{\LkL{\mwrcoordproj}{\mat{k}}}_{= \mat{\Sigma}_\mwrcoordsprior}
    \mat{\Sigma}_\mwrcoordsprior\inv
    \left( \mwrmat\inv \mwrrhs - \evallinop{\mwrcoordproj}{\vec{m}} \right)                            \\
     & = \evallinop{\mwrcoordproj}{\vec{m}} + \mwrmat\inv \mwrrhs - \evallinop{\mwrcoordproj}{\vec{m}} \\
     & = \mwrmat\inv \mwrrhs
    = \mwrsolcoords.
  \end{align*}
  Hence, we have
  \begin{align*}
    \evallinop{\mwrtrialproj}{\vec{m}^{\condrv{\solvvprior \given \mwrmat, \mwrrhs}}}
    = \sum_{i = 1}^m \left( \evallinop{\proj{\R^n}}{\vec{m}^{\condrv{\solvvprior \given \mwrmat, \mwrrhs}}} \right)_i \mwrtrialvv^{(i)}
    = \sum_{i = 1}^m \mwrsolcoordselem_i \mwrtrialvv^{(i)}
    = \mwrsol
    \in \mwrtrialsp
  \end{align*}
  and since $\solsp = \mwrtrialsp \oplus \kernel{\mwrtrialproj}$, the statement follows.
  Moreover, $\evallinop{\proj{\R^n}}{\vec{m}^{\condrv{\solvvprior \given \mwrmat, \mwrrhs}}}$
  is the mean of $\condrv{\mwrcoordsprior \given \mwrmat \mwrcoordsprior - \mwrrhs = \vec{0}}$
  and its covariance matrix is given by
  \begin{align*}
    \mat{\Sigma}^{\condrv{\mwrcoordsprior \given \mwrmat, \mwrrhs}}
     & = \mat{\Sigma}_\mwrcoordsprior - \mat{\Sigma}_\mwrcoordsprior \mwrmat\T \left( \mwrmat \mat{\Sigma}_\mwrcoordsprior \mwrmat\T \right)\inv \mwrmat \mat{\Sigma}_\mwrcoordsprior \\
     & = \mat{\Sigma}_\mwrcoordsprior - \mat{\Sigma}_\mwrcoordsprior \mwrmat\T (\mwrmat\T)\inv \mat{\Sigma}_\mwrcoordsprior\inv \mwrmat\inv \mwrmat \mat{\Sigma}_\mwrcoordsprior      \\
     & = \mat{\Sigma}_\mwrcoordsprior - \mat{\Sigma}_\mwrcoordsprior \mat{\Sigma}_\mwrcoordsprior\inv \mat{\Sigma}_\mwrcoordsprior
    = \mat{0}.
  \end{align*}
  Consequently, $\condrv{\mwrcoordsprior \given \mwrmat \mwrcoordsprior - \mwrrhs = \vec{0}} \sim \delta_{\mwrsolcoords}$.
\end{proof}

\begin{proof}[Proof of \cref{cor:gp-mean-mwr-recovery}]
  \begin{align*}
         & \evallinop{\proj{\kernel{\mwrtrialproj}}}{\vec{m}^{\condrv{\solvvprior \given \mwrmat, \mwrrhs}}}(\vec{x})                 \\
    = \  & \underbrace{\evallinop{\proj{\kernel{\mwrtrialproj}}}{\vec{m}}}_{= 0}(\vec{x}) +
    \LkL{(\delta_\vec{x} \circ \proj{\kernel{\mwrtrialproj}})}{\mat{k}}[(\mwrmat \mwrcoordproj)]
    \left( \LkL{(\mwrmat \mwrcoordproj)}{\mat{k}} \right)\inv
    \left( \mwrrhs - \evallinop{\mwrmat \mwrcoordproj}{\vec{m}} \right)                                                               \\
    = \  & \evallinop{\delta_\vec{x}}{\underbrace{\LkL{\proj{\kernel{\mwrtrialproj}}}{\mat{k}}[\mwrcoordproj]}_{= \mat{0}}} \mwrmat\T
    \left( \LkL{(\mwrmat \mwrcoordproj)}{\mat{k}} \right)\inv
    \left( \mwrrhs - \evallinop{\mwrmat \mwrcoordproj}{\vec{m}} \right)
    = \mat{0}
  \end{align*}
\end{proof}

\begin{proof}[Proof of \cref{prop:mwr-recovery-prior}]
  The process $\solvvprior$ can be constructed as the sum of independent samples from
  the processes $\evallinop{\mwrtrialproj}{\tilde{\solvvprior}}$ and
  $\evallinop{\mwrtrialproj}{\tilde{\solvvprior} - \tilde{\vec{m}}}$.
  This proves that the sample paths lie in $\solsp$.
  Since $\mwrtrialproj$ is idempotent, we have
  \(
  \proj{\kernel{\mwrtrialproj}} \mwrtrialproj
  = \mwrtrialproj - \mwrtrialproj^2
  = \mwrtrialproj - \mwrtrialproj
  = 0
  \)
  and
  \(
  \proj{\R^n} \mwrtrialproj
  = (\isomorphism{\R^m}{\mwrtrialsp})\inv \mwrtrialproj^2
  = (\isomorphism{\R^m}{\mwrtrialsp})\inv \mwrtrialproj
  = \proj{\R^n}.
  \)
  It follows that
  \begin{align*}
    \proj{\kernel{\mwrtrialproj}} \mat{k} \proj{\R^n}\adj
     & = \proj{\kernel{\mwrtrialproj}} \tilde{\mat{k}} \proj{\R^n}\adj
    - \underbrace{\proj{\kernel{\mwrtrialproj}} \mwrtrialproj}_{= 0} \tilde{\mat{k}}                                     \\
     & \qquad - \proj{\kernel{\mwrtrialproj}} \tilde{\mat{k}} \mwrtrialproj\adj \proj{\R^n}\adj
    + 2 \underbrace{\proj{\kernel{\mwrtrialproj}} \mwrtrialproj}_{= 0} \tilde{\mat{k}} \mwrtrialproj\adj \proj{\R^n}\adj \\
     & = \proj{\kernel{\mwrtrialproj}} \tilde{\mat{k}} \proj{\R^n}\adj
    - \proj{\kernel{\mwrtrialproj}} \tilde{\mat{k}} \underbrace{\left( \proj{\R^n} \mwrtrialproj \right)\adj}_{= \proj{\R^n}}\adj
    = 0.
  \end{align*}
\end{proof}

  \section{Proofs for Section~\ref*{sec:affine-gp-inference}}
\label{app:proofs-gp-inference}
Using the rules of linear-Gaussian inference \citep{Bishop2006PRML}, we can easily see
that
\begin{align*}
  \gpprior
   & \sim \gp{m}{k}                                                            \\
  \mat{A} \gpprior(\mat{X})
   & \sim \gaussian{\mat{A} m(\mat{X})}{\mat{A} k(\mat{X}, \mat{X}) \mat{A}\T} \\
  \condrv{\gpprior \given \mat{A} \gpprior(\mat{X}) + \rvec{b} = \vec{y}}
   & \sim \gp{m^{\gpprior \mid \vec{y}}}{k^{\gpprior \mid \vec{y}}},
\end{align*}
where $\mat{A} \in \R^{m \times n}$, $\mat{X} = (\vec{x}_i)_{i = 1}^n \in \gpidcs$,
$\rvec{b} \sim \gaussian{\vec{\mu}}{\mat{\Sigma}}$ with $\indp{\rvec{b}}{\gpprior}$ and
\begin{align*}
  m^{\gpprior \mid \vec{y}}(\vec{x})
   & \defeq m(\vec{x}) + k(\vec{x}, \mat{X}) \mat{A}\T (\mat{A} k(\mat{X}, \mat{X}) \mat{A}\T + \mat{\Sigma})\pinv (\vec{y} - (\mat{A} m + \vec{\mu}))           \\
  k^{\gpprior \mid \vec{y}}(\vec{x}_1, \vec{x}_2)
   & \defeq k(\vec{x}_1, \vec{x}_2) - k(\vec{x}_1, \mat{X}) \mat{A}\T (\mat{A} k(\mat{X}, \mat{X}) \mat{A}\T + \mat{\Sigma})\pinv \mat{A} k(\mat{X}, \vec{x}_2).
\end{align*}
It is tempting to think that the above also extends to more general linear
transformations of $\gpprior$ such as differentiation at a point $\vec{x} \in \gpidcs$
and integration.
Unfortunately, this is not the case, since the result from \citep{Bishop2006PRML}
heavily uses the fact that, by definition, evaluations of the Gaussian process at a
\emph{finite} set of points follow a joint Gaussian distribution.
However, differentiation at a point and integration are examples of linear functionals,
i.e.~linear maps from a vector space of functions to the real numbers, which operate on
an (uncountably) infinite subset of random variables.

To generalize the result above to general linear operators $\linfctls{L}$ mapping the
paths of $\gpprior$ into $\R^n$, we will take the following route:
\begin{enumerate}
  \item In \cref{sec:gp-random-function}, we will show that, under certain conditions on
        $\gpprior$ and $\gpidcs$, the function $\omega \mapsto \gpprior(\cdot, \omega)$
        is a Gaussian random variable with values in a separable Banach space
        $\gppathsp$ of real-valued functions on $\gpidcs$.
        We introduce Gaussian random variables on separable Banach spaces and their
        essential properties in \cref{sec:gm}.
  \item Under the assumption that $\linfctls{L}$ is continuous, we can use the transformation
        properties of Gaussian random variables on
        separable Banach spaces (see \cref{lem:gm-affine-transform}) to show that $\linfctlsat{L}{\gpprior}$ and for $\mat{X} \in \gpidcs^m$ also
        \begin{equation*}
          \begin{pmatrix}
            \gpprior(\mat{X}) \\
            \linfctlsat{L}{\gpprior}
          \end{pmatrix}
        \end{equation*}
        are Gaussian random variables on $\R^n$ and
        $\R^{m + n}$, respectively.
  \item Finally, in \cref{sec:proof-theorem-1}, we can then use the well-known
        linear-Gaussian inference theorem \citep{Bishop2006PRML} to show that
        $\condrv*{\gpprior \given \linfctlsat{L}{\gpprior} = \vec{y}}$
        is again a Gaussian process.
\end{enumerate}

In the following, $\borelsigalg{\gppathsp}$ denotes the Borel $\sigma$-algebra generated
by the norm topology on a Banach space $\gppathsp$.

\subsection{Gaussian Measures on Separable Banach Spaces}
\label{sec:gm}
As stated before, in many cases, the function $\omega \mapsto \gpprior(\cdot, \omega)$
will often turn out to be a Gaussian random variable with values in an infinite-dimensional
separable Banach space $\gppathsp \supseteq \paths{\gpprior}$ of real-valued functions
on $\gpidcs$ (see \cref{prop:gp-to-gm}).
\begin{definition}
  \label{def:gm-hilbert-space}
  Let $\bgrvsp$ be a real separable Banach space.
  A Borel probability measure $\mu$ on $(\bgrvsp, \borelsigalg{\bgrvsp})$ is called
  \emph{Gaussian} if every continuous linear functional $l \in \dualsp{\bgrvsp}$ is a
  univariate Gaussian random variable.
  A $\bgrvsp$-valued random variable is called Gaussian if its law is Gaussian.
\end{definition}

Just as for Gaussian random variables on Euclidean vector space $\R^n$, we can define a
mean and covariance (operator) for their counterparts on general separable Banach spaces.
\begin{proposition}
  \label{prop:gm-mean-cov}
  Let $\bgrv$ be a Gaussian random variable on $(\Omega, \sigalg, \prob{})$ with
  values in a real separable Banach space $\bgrvsp$.
  Then there is a unique $m_\bgrv \in \bgrvsp$ such that
  \(
  \evallinop{l}{m_\bgrv} = \expectation[\bgrv]{\evallinop{l}{\bgrv}}
  \)
  for any $l \in \dualsp{\bgrvsp}$.
  We refer to $m_\bgrv$ as the \emph{mean (vector)} of $\bgrv$.
  Moreover, there is a unique bounded linear operator
  $\linop{C}_\bgrv \colon \dualsp{\bgrvsp} \to \bgrvsp$
  such that
  \(
  \evallinop{l_2}{\evallinop{\linop{C}_\bgrv}{l_1}}
  = \covariance[\bgrv]{\evallinop{l_1}{\bgrv}}{\evallinop{l_2}{\bgrv}}
  \)
  for any $l_1, l_2 \in \dualsp{\bgrvsp}$, the so-called \emph{covariance operator} of
  $\bgrv$.
\end{proposition}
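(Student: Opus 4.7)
My plan is to realize both $m_\bgrv$ and $\evallinop{\linop{C}_\bgrv}{l_1}$ as Bochner integrals in $\bgrvsp$. Their existence hinges on Fernique's theorem, which guarantees that for any Gaussian random variable $\bgrv$ on a real separable Banach space there exists $\alpha > 0$ with $\expectation{\exp(\alpha \norm{\bgrv}^2)} < \infty$. In particular $\bgrv$ has finite moments of every order with respect to the norm, and since $\bgrvsp$ is separable every $\bgrvsp$-valued Borel random variable is strongly (Bochner) measurable by Pettis's theorem, so Bochner integrability reduces to finiteness of the corresponding scalar $L^1$-norm.

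First I would define the mean as the Bochner integral $m_\bgrv \defeq \expectation{\bgrv}$, which is well-defined because $\expectation{\norm{\bgrv}} < \infty$ by Fernique. Since any $l \in \dualsp{\bgrvsp}$ is continuous, it commutes with the Bochner integral, yielding $\evallinop{l}{m_\bgrv} = \expectation{\evallinop{l}{\bgrv}}$ for every $l \in \dualsp{\bgrvsp}$. Uniqueness of $m_\bgrv$ is then immediate from the fact that $\dualsp{\bgrvsp}$ separates the points of $\bgrvsp$ by the Hahn-Banach theorem.

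Next, for each fixed $l_1 \in \dualsp{\bgrvsp}$, I would set $\evallinop{\linop{C}_\bgrv}{l_1} \defeq \expectation{\evallinop{l_1}{\bgrv - m_\bgrv}(\bgrv - m_\bgrv)}$, again a Bochner integral. Its existence follows from Cauchy-Schwarz combined with Fernique, since $\expectation{\abs{\evallinop{l_1}{\bgrv - m_\bgrv}} \norm{\bgrv - m_\bgrv}} \le \sqrt{\expectation{\evallinop{l_1}{\bgrv - m_\bgrv}^2} \expectation{\norm{\bgrv - m_\bgrv}^2}} < \infty$. Linearity of $\linop{C}_\bgrv$ in $l_1$ is inherited from the duality pairing and the linearity of the Bochner integral; applying any $l_2 \in \dualsp{\bgrvsp}$ and commuting it with the integral produces the defining identity $\evallinop{l_2}{\evallinop{\linop{C}_\bgrv}{l_1}} = \covariance{\evallinop{l_1}{\bgrv}}{\evallinop{l_2}{\bgrv}}$, and uniqueness of $\linop{C}_\bgrv$ again follows via Hahn-Banach.

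Finally, boundedness of $\linop{C}_\bgrv$ will drop out from writing $\norm{\evallinop{\linop{C}_\bgrv}{l_1}} = \sup_{\norm{l_2} \le 1} \abs{\evallinop{l_2}{\evallinop{\linop{C}_\bgrv}{l_1}}}$ (Hahn-Banach) and then estimating each covariance via Cauchy-Schwarz as $\abs{\covariance{\evallinop{l_1}{\bgrv}}{\evallinop{l_2}{\bgrv}}} \le \norm{l_1} \norm{l_2} \expectation{\norm{\bgrv - m_\bgrv}^2}$, yielding the explicit operator-norm bound $\norm{\linop{C}_\bgrv} \le \expectation{\norm{\bgrv - m_\bgrv}^2} < \infty$. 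The main technical obstacle is Fernique's theorem itself, which is the nontrivial ingredient underpinning every integrability claim; once it is taken as a black box, the remainder is routine manipulation of Bochner integrals and the duality pairing.
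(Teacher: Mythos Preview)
Your proposal is correct and follows essentially the same route as the paper: Fernique's theorem yields the required moment bounds, Pettis's theorem gives strong measurability, both $m_\bgrv$ and $\evallinop{\linop{C}_\bgrv}{l_1}$ are realized as Bochner integrals, the defining identities follow by commuting continuous functionals through the integral, and uniqueness comes from the dual separating points. The only cosmetic difference is that the paper bounds $\norm{\evallinop{\linop{C}_\bgrv}{l_1}}$ directly via the Bochner norm inequality $\norm{\int f} \le \int \norm{f}$ together with $\abs{\evallinop{l_1}{x}} \le \norm{l_1}\norm{x}$, whereas you pass through duality and Cauchy--Schwarz on the covariance; both yield the same constant $\expectation{\norm{\bgrv - m_\bgrv}^2}$.
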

\begin{proof}
  Fernique's theorem \citep[Theorem 2.7]{DaPrato1992StochEq} implies that
  $\norm{\bgrv}[\bgrvsp] \in \L{p}(\Omega, \prob{})$ for all $p \in \N_{\ge 1}$.
  By assumption, $\bgrv$ is measurable and $\bgrvsp \supset \range{\bgrv}$ is
  separable, which means that $\bgrv$ is strongly measurable
  \citep[Section V.4, Pettis' Theorem]{Yosida1995FuncAna}.
  Since $\norm{\bgrv}[\bgrvsp] \in \L1(\Omega, \prob{})$, it follows that $\bgrv$ is
  Bochner integrable \citep[Section V.5, Theorem 1]{Yosida1995FuncAna}.
  Let $l \in \dualsp{\bgrvsp}$.
  By Corollary 2 in \citet[Section V.5]{Yosida1995FuncAna} we then have
  \begin{align*}
    \expectation[\bgrv]{\evallinop{l}{\bgrv}}
    = \lintegral[\Omega]{\prob{}}{\omega}{\evallinop{l}*{\bgrv(\omega)}}
    = \evallinop{l}[\bigg]{%
      \underbrace{\lintegral[\Omega]{\prob{}}{\omega}{\bgrv(\omega)}}_{\rdefeq m_\bgrv}
    }.
  \end{align*}
  Now assume that there is another $\tilde{m}_\bgrv \in \bgrvsp$ with
  \(
  \evallinop{l}{\tilde{m}_\bgrv} = \expectation[\bgrv]{\evallinop{l}{\bgrv}}.
  \)
  Then
  \begin{equation*}
    0
    = \expectation[\bgrv]{\evallinop{l}{\bgrv}} - \expectation[\bgrv]{\evallinop{l}{\bgrv}}
    = \evallinop{l}{\tilde{m}_\bgrv} - \evallinop{l}{m_\bgrv}
    = \evallinop{l}{\tilde{m}_\bgrv - m_\bgrv}
  \end{equation*}
  and since this holds for all $l \in \dualsp{\bgrvsp}$, it follows that
  $\tilde{m}_\bgrv = m_\bgrv$.

  Let $l_1 \in \dualsp{\bgrvsp}$. Then
  \(
  \omega \mapsto \evallinop{l_1}{\bgrv(\omega) - m_\bgrv} (\bgrv(\omega) - m_\bgrv)
  \)
  is clearly weakly measurable and, since $\bgrvsp$ is separable, also strongly
  measurable \citep[Section V.4, Pettis' Theorem]{Yosida1995FuncAna}.
  As above, Fernique's theorem shows that $\norm{\bgrv}[\bgrvsp] \in \L2(\Omega, \prob{})$.
  By the triangle inequality in $\bgrvsp$ and the fact that $\prob{}$ is a probability
  measure, we also have $\norm{\bgrv(\cdot) - m_\bgrv}[\bgrvsp] \in \L2(\Omega, \prob{})$.
  It follows that
  \begin{align*}
    \lintegral[\Omega]{\prob{}}{\omega}{\norm{\evallinop{l_1}{\bgrv(\omega) - m_\bgrv} (\bgrv(\omega) - m_\bgrv)}[\bgrvsp]}
     & =   \lintegral[\Omega]{\prob{}}{\omega}{\abs*{\evallinop{l_1}{\bgrv(\omega) - m_\bgrv}} \norm{\bgrv(\omega) - m_\bgrv}[\bgrvsp]}                      \\
     & \le \norm{l_1}[\dualsp{\bgrvsp}] \lintegral[\Omega]{\prob{}}{\omega}{\norm{\bgrv(\omega) - m_\bgrv}[\bgrvsp] \norm{\bgrv(\omega) - m_\bgrv}[\bgrvsp]} \\
     & = \norm{l_1}[\dualsp{\bgrvsp}] \norm[\big]{\omega \mapsto \norm{\bgrv(\omega) - m_\bgrv}[\bgrvsp]}[\L2(\Omega, \prob{})]^2 < \infty,
  \end{align*}
  where $\norm{l_1}[\dualsp{\bgrvsp_1}] < \infty$, since $l_1$ is continuous.
  Let $l_2 \in \dualsp{\bgrvsp}$.
  Again by Corollary 2 in \citet[Section V.5]{Yosida1995FuncAna}, we find that
  \begin{align*}
    \covariance{\evallinop{l_1}{\bgrv}}{\evallinop{l_2}{\bgrv}}
     & = \lintegral[\Omega]{\prob{}}{\omega}{
      \evallinop{l_1}{\bgrv(\omega) - m_\bgrv}
      \evallinop{l_2}{\bgrv(\omega) - m_\bgrv}
    }                                         \\
     & = \evallinop{l_2}[\bigg]{
    \lintegral[\Omega]{\prob{}}{\omega}{
    \underbrace{
    \evallinop{l_1}{\bgrv(\omega) - m_\bgrv}
    (\bgrv(\omega) - m_\bgrv)
    }_{\rdefeq \linopat{C}{l_1}}
    }
    }.
  \end{align*}
  $\linop{C}_\bgrv$ is bounded, since
  \begin{align*}
    \norm{\evallinop{\linop{C}_\bgrv}{l_1}}[\bgrvsp]
     & \le \lintegral[\Omega]{\prob{}}{\omega}{\norm{\evallinop{l_1}{\bgrv(\omega) - m_\bgrv} (\bgrv(\omega) - m_\bgrv)}[\bgrvsp]}                                                 \\
     & \le \norm{l_1}[\dualsp{\bgrvsp}] \underbrace{\norm[\big]{\omega \mapsto \norm{\bgrv(\omega) - m_\bgrv}[\bgrvsp]}[\L2(\Omega, \prob{})]^2}_{\rdefeq \norm{\linop{C}_\bgrv}},
  \end{align*}
  where $\norm{\linop{C}_\bgrv} < \infty$ because
  $\norm{\bgrv(\cdot) - m_\bgrv}[\bgrvsp] \in \L2(\Omega, \prob{})$.
  Uniqueness follows from an argument analogous to the one used to prove uniqueness of
  the mean.
\end{proof}
\begin{remark}
  One can show that the mean and the covariance operator of a Gaussian random variable
  with values in a separable Banach space identify its law uniquely.
  Hence, we often write $\gaussian{m}{\linop{C}}$ to denote Gaussian measures on
  separable Banach spaces.
\end{remark}

\subsubsection{Continuous Affine Transformations}
\label{sec:gm-affine-transforms}
Just as their finite-dimensional counterparts, Gaussian random variables with values in
separable Banach spaces are closed under linear transformations as long as they are
continuous.
Moreover, the expressions for the transformed mean and covariance operator are analogous
to the finite-dimensional case.
For instance, we will use this result to show that $\linfctlsat{L}{\gpprior}$ is an
$\R^n$-valued random variable.
\begin{lemma}
  \label{lem:gm-affine-transform}
  Let $\bgrv \sim \gaussian{m}{\linop{C}}$ be a Gaussian random variable on
  $(\Omega, \sigalg, \prob{})$ with values in a real separable Banach space $\bgrvsp$.
  Let $\linop{L} \colon \bgrvsp \to \tilde{\bgrvsp}$ be a bounded linear operator
  mapping into another real separable Banach space $\tilde{\bgrvsp}$.
  Then
  \(
  \linfctlsat{L}{\bgrv} \sim \gaussian{\linopat{L}{m}}{\linop{L} \linop{C} \linop{L}\dualop}.
  \)
\end{lemma}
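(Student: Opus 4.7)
The plan is to reduce everything to the scalar case by pairing $\linop{L}\bgrv$ with arbitrary continuous linear functionals on $\tilde{\bgrvsp}$ and using the characterization of Gaussianity in \cref{def:gm-hilbert-space} together with the uniqueness statements in \cref{prop:gm-mean-cov}.

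First, I would verify that $\linop{L}\bgrv \colon \Omega \to \tilde{\bgrvsp}$ is a well-defined $\tilde{\bgrvsp}$-valued random variable. Since $\linop{L}$ is bounded, hence continuous between the norm topologies, it is Borel measurable with respect to $\borelsigalg{\bgrvsp}$ and $\borelsigalg{\tilde{\bgrvsp}}$; composition of a Borel measurable and a measurable map then yields measurability of $\linop{L}\bgrv$. Next, to check Gaussianity, I would take an arbitrary $\tilde{l} \in \dualsp{\tilde{\bgrvsp}}$ and note that $\tilde{l} \circ \linop{L} \in \dualsp{\bgrvsp}$ since the composition of bounded linear maps is bounded. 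Therefore $\evallinop{\tilde{l}}{\linop{L}\bgrv} = \evallinop{(\tilde{l} \circ \linop{L})}{\bgrv}$ is univariate Gaussian by Gaussianity of $\bgrv$, so $\linop{L}\bgrv$ is Gaussian in the sense of \cref{def:gm-hilbert-space}.

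To identify the mean, I would use that for every $\tilde{l} \in \dualsp{\tilde{\bgrvsp}}$,
\begin{equation*}
  \expectation{\evallinop{\tilde{l}}{\linop{L}\bgrv}}
  = \expectation{\evallinop{(\tilde{l} \circ \linop{L})}{\bgrv}}
  = \evallinop{(\tilde{l} \circ \linop{L})}{m}
  = \evallinop{\tilde{l}}{\linopat{L}{m}},
\end{equation*}
where the second equality is the defining property of $m$ from \cref{prop:gm-mean-cov}. By the uniqueness part of \cref{prop:gm-mean-cov} applied to $\linop{L}\bgrv$, this forces $m_{\linop{L}\bgrv} = \linopat{L}{m}$. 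Analogously, for $\tilde{l}_1, \tilde{l}_2 \in \dualsp{\tilde{\bgrvsp}}$,
\begin{equation*}
  \covariance{\evallinop{\tilde{l}_1}{\linop{L}\bgrv}}{\evallinop{\tilde{l}_2}{\linop{L}\bgrv}}
  = \covariance{\evallinop{(\tilde{l}_1 \circ \linop{L})}{\bgrv}}{\evallinop{(\tilde{l}_2 \circ \linop{L})}{\bgrv}}
  = \evallinop{(\tilde{l}_2 \circ \linop{L})}{\evallinop{\linop{C}}{\tilde{l}_1 \circ \linop{L}}}
  = \evallinop{\tilde{l}_2}{\evallinop{(\linop{L} \linop{C} \linop{L}\dualop)}{\tilde{l}_1}},
\end{equation*}
using the definition of $\linop{C}$ and the adjoint $\linop{L}\dualop \colon \dualsp{\tilde{\bgrvsp}} \to \dualsp{\bgrvsp}$, $\tilde{l} \mapsto \tilde{l} \circ \linop{L}$, which is bounded since $\linop{L}$ is. Uniqueness of the covariance operator from \cref{prop:gm-mean-cov} then yields $\linop{C}_{\linop{L}\bgrv} = \linop{L} \linop{C} \linop{L}\dualop$.

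The only subtle step is the initial measurability argument, because in full generality one must be careful that continuity implies Borel measurability only when both spaces are equipped with their norm Borel $\sigma$-algebras; separability of $\bgrvsp$ and $\tilde{\bgrvsp}$ guarantees that these coincide with the cylindrical $\sigma$-algebras used implicitly. Everything else reduces to the scalar Gaussian case combined with the uniqueness assertions in \cref{prop:gm-mean-cov}, so no further technical obstacle arises.
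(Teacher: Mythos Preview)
Your proposal is correct and follows essentially the same approach as the paper's proof: both establish measurability via continuity of $\linop{L}$, verify Gaussianity by composing an arbitrary $\tilde{l}\in\dualsp{\tilde{\bgrvsp}}$ with $\linop{L}$ to land in $\dualsp{\bgrvsp}$, and then identify the mean and covariance operator through the defining relations and uniqueness in \cref{prop:gm-mean-cov}. Your extra remark on separability and the coincidence of the Borel and cylindrical $\sigma$-algebras is a helpful elaboration but not a substantive deviation.
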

\begin{proof}
  $\linop{L}$ is continuous and hence Borel measurable, which means that
  $\linopat{L}{\bgrv}$ is a $\tilde{\bgrvsp}$-valued random variable.
  Moreover, for $\tilde{l} \in \dualsp{\tilde{\bgrvsp}}$, we have
  $l \defeq \tilde{l} \circ \linop{L} \in \dualsp{\bgrvsp}$ and hence
  \(
  \evallinop{l}{\bgrv}
  = \evallinop{\tilde{l}}{\linopat{L}{\bgrv}}
  \)
  is Gaussian.
  This implies that $\linopat{L}{\bgrv}$ is a $\tilde{\bgrvsp}$-valued Gaussian
  random variable.
  Moreover, we have
  \(
  \expectation[\bgrv]{\evallinop{l}{\bgrv}}
  = \evallinop{l}{m}
  = \evallinop{\tilde{l}}{\linopat{L}{m}},
  \)
  i.e.~$\linopat{L}{m}$ is the mean of $\linopat{L}{\bgrv}$.
  Let $\tilde{l}_1, \tilde{l}_2 \in \dualsp{\bgrvsp}$ and define
  $l_i \defeq \tilde{l}_i \circ \linop{L} \in \dualsp{\tilde{\bgrvsp}}$ for $i = 1, 2$.
  Then
  \begin{equation*}
    \covariance[\bgrv]{\evallinop{l_1}{\bgrv}}{\evallinop{l_2}{\bgrv}}
    = \evallinop{l_2}{\linopat{C}{l_1}}
    = \evallinop{(\tilde{l}_2 \circ \linop{L})}{\linopat{C}{\tilde{l}_1 \circ \linop{L}}}
    = \evallinop{\tilde{l}_2}{\evallinop{(\linop{L} C \linop{L}\dualop)}{\tilde{l}_1}},
  \end{equation*}
  i.e.~$\linop{L} C \linop{L}\dualop$ is the covariance operator of $\linopat{L}{\bgrv}$.
\end{proof}

\subsection{Gaussian Processes as Gaussian Random Functions}
\label{sec:gp-random-function}
We now introduced all necessary preliminaries to show that, under certain assumptions on
$\gpprior$ and $\gpidcs$, the function $\omega \mapsto \gpprior(\cdot, \omega)$ is a
Gaussian random variable with values in a special kind of separable Banach space, namely
a \emph{reproducing kernel Banach space (RKBS)}:
\begin{definition}[{\citealt[Definition 2.1]{Lin2022RKBS}}]
  A \emph{reproducing kernel Banach space (RKBS)} $(\banachsp, \norm{\cdot}[\banachsp])$
  is a Banach space of real-valued functions on a nonempty set $\metricsp{X}$, on which
  the point evaluation functionals $\delta_\vec{x}$ for $\vec{x} \in \metricsp{X}$ are
  continuous.
\end{definition}

We formulate \cref{thm:gp-inference-linfctls,cor:gp-inference-linop-evals}, under
the following set of assumptions:
\LinOpGPInferenceAssumptions*

Generalizing an observation from \citet[Remark 1]{Rajput1972GPMeasures}, it becomes clear that \cref{asm:gp-gm} is often not about the GP $\gpprior$ itself, but rather about the topology of the space $\gppathsp$.
Denote by $\wstarseqcl(\setsym{L}) \defeq \set{\ell \in \dualsp{\gppathsp} \suchthat \exists \set{\ell_k}_{k \in \N} \subset \setsym{L} \colon \ell_k \to_{w*} \ell}$ the weak-* \emph{sequential} closure\footnotemark of a subset $\setsym{L} \subset \dualsp{\gppathsp}$ of the continuous dual space of a Banach space $\gppathsp$.
Moreover, $\wstarseqcl^n(\setsym{L}) = \wstarseqcl^{n - 1}(\setsym{L})$ and $\wstarseqcl^0(\setsym{L}) = \setsym{L}$.
\footnotetext{The weak-* sequential closure of $\setsym{L}$ is not to be confused with the weak-* closure of $\setsym{L}$. The two notions coincide if $\dualsp{\gppathsp}$ equipped with the weak-* topology is a sequential space, but for many of the dual spaces considered in this work this does not hold.}
\begin{theorem}
  \label{thm:gp-gm-w*-sequential-closure}
  Let $\gppathsp \subset \maps{\gpidcs}{\R}$ be a real separable RKBS and define $\setsym{L}_\delta \defeq \operatorname{span} \set{\delta_\vec{x}}_{\vec{x} \in \gpidcs} \subset \dualsp{\gppathsp}$.
  If there is an $n \in \N_0$ such that $\dualsp{\gppathsp} = \wstarseqcl^n(\setsym{L}_\delta)$, then \cref{asm:gp-gm} holds for any GP $\gpprior$ with paths in $\gppathsp$.
\end{theorem}
\begin{proof}
  First, we show by induction on $n \in \N_0$ that $\omega \mapsto \evallinop{l}{\gpprior(\cdot, \omega)}$ is a Gaussian random variable for every $l \in \wstarseqcl^n(\setsym{L}_\delta)$.
  For $n = 0$, we have $\wstarseqcl^n(\setsym{L}_\delta) = \setsym{L}_\delta$.
  Hence, for every $l \in \wstarseqcl^n(\setsym{L}_\delta) = \setsym{L}_\delta$, there are $m \in \N$, $\set{\alpha_k}_{k = 1}^m \subset \R$, and $\set{\vec{x}_k}_{k = 1}^m \subset \gpidcs$ such that
  \begin{equation*}
    \evallinop{l}{\gpprior(\cdot, \omega)}
    = \sum_{k = 1}^{m} \alpha_k \gpprior(\vec{x}_k, \omega).
  \end{equation*}
  Since $\gpprior$ is a GP, $\gpprior(\vec{x}_1, \cdot), \dotsc, \gpprior(\vec{x}_m, \cdot)$ is jointly Gaussian, which implies that $\omega \mapsto \evallinop{l}{\gpprior(\cdot, \omega)}$ is a Gaussian random variable.
  Now let $n > 0$ and fix $l \in \wstarseqcl^n(\setsym{L}_\delta)$.
  Then there is a sequence $\set{l_k}_{k \in \N} \subset \wstarseqcl^{n - 1}(\setsym{L}_\delta)$ such that $\evallinop{l_k}{f} \to \evallinop{l}{f}$ as $k \to \infty$ for every $f \in \gppathsp$.
  By the inductive hypothesis, we know that $\omega \mapsto \evallinop{l_k}{\gpprior(\cdot, \omega)}$ is Gaussian for every $k \in \N$.
  It follows that the function
  \(
  \omega
  \mapsto \evallinop{l}{\gpprior(\cdot, \omega)}
  = \lim_{k \to \infty} \evallinop{l_k}{\gpprior(\cdot, \omega)}
  \)
  is measurable \cite[Theorem 1.92]{Klenke2014Probability}.
  Moreover, as the pointwise limit of Gaussian random variables is again a Gaussian random variable, we conclude that $\omega \mapsto \evallinop{l}{\gpprior(\cdot, \omega)}$ is Gaussian.

  Under the assumption that $\dualsp{\gppathsp} = \wstarseqcl^n(\setsym{L}_\delta)$ for some $n \in \N_0$, the above shows that $\omega \mapsto \evallinop{l}{\gpprior(\cdot, \omega)}$ is a Gaussian random variable for every $l \in \dualsp{\gppathsp}$.
  In particular, the map $\omega \mapsto \evallinop{l}{\gpprior(\cdot, \omega)}$ is $\mathcal{F}$-$\borelsigalg{\R}$-measurable for all $l \in \dualsp{\gppathsp}$, i.e. $\omega \mapsto \gpprior(\cdot, \omega)$ is weakly measurable.
  By the separability of $\gppathsp$, it follows that $\omega \mapsto \gpprior(\cdot, \omega)$ is $\mathcal{F}$-$\borelsigalg{\banachsp}$-measurable \citep[Theorem A.3.7]{Bogachev1998GaussianMeasures}.
  This shows that $\omega \mapsto \gpprior(\cdot, \omega)$ is a $\gppathsp$-valued Gaussian random variable.
\end{proof}
\begin{corollary}
  \label{cor:gp-gm-w*-sequentially-dense}
  Let $\gppathsp \subset \maps{\gpidcs}{\R}$ be a real separable RKBS.
  If $\setsym{L}_\delta \defeq \operatorname{span} \set{\delta_\vec{x}}_{\vec{x} \in \gpidcs}$ lies weak-* \emph{sequentially} dense\footnotemark in $\dualsp{\gppathsp}$, then \cref{asm:gp-gm} holds for any GP $\gpprior$ with paths in $\gppathsp$.
  \footnotetext{As before, a weak-* sequentially dense set is not to be confused with a weak-* dense set, since the dual spaces considered in this work are not necessarily sequential w.r.t. the weak-* topology.}
\end{corollary}

In the following, we show that \cref{thm:gp-gm-w*-sequential-closure} applies to three important classes of Banach spaces, which often arise in the study of Gaussian Processes and PDEs.

\begin{proposition}
  \label{prop:gp-rkhs-paths-gm}
  The set $\setsym{L}_\delta \defeq \operatorname{span} \set{\delta_\vec{x}}_{\vec{x} \in \gpidcs}$ is weak-* sequentially dense in the dual $\dualsp{\rkhs{k}}$ of any separable RKHS $\rkhs{k} \subset \maps{\gpidcs}{\R}$.
\end{proposition}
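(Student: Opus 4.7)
The plan is to reduce the claim to the classical density of kernel sections in an RKHS. Since $\gppathsp = \rkhs{k}$ is a Hilbert space, the Riesz representation theorem yields an isometric isomorphism between $\dualsp{\gppathsp}$ and $\gppathsp$ sending $g \in \gppathsp$ to the functional $l_g \defeq \inprod{\cdot}{g}[\gppathsp]$. The reproducing property identifies each point-evaluation functional $\delta_\vec{x}$ with the kernel section $k(\cdot, \vec{x})$ under this isomorphism, since $\evallinop{\delta_\vec{x}}{f} = f(\vec{x}) = \inprod{f}{k(\cdot, \vec{x})}[\gppathsp]$ for every $f \in \rkhs{k}$.

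Next I would invoke the standard RKHS fact that $\operatorname{span} \set{k(\cdot, \vec{x}) \where \vec{x} \in \gpidcs}$ is norm-dense in $\rkhs{k}$. This is immediate from the reproducing property: any element of the orthogonal complement of this span evaluates to zero at every point and must therefore vanish. Given an arbitrary $l \in \dualsp{\gppathsp}$ with Riesz representative $g_l$, norm density then produces a sequence $\set{g_n}_{n \in \N} \subset \operatorname{span} \set{k(\cdot, \vec{x}) \where \vec{x} \in \gpidcs}$ with $\norm{g_n - g_l}[\gppathsp] \to 0$. By linearity of the Riesz isomorphism, each $l_n \defeq l_{g_n}$ lies in $\operatorname{span} \set{\delta_\vec{x} \where \vec{x} \in \gpidcs}$.

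It then remains to verify weak-* convergence $l_n \to l$, which follows from the Cauchy--Schwarz inequality:
\begin{equation*}
  \abs{\evallinop{l_n}{f} - \evallinop{l}{f}}
  = \abs{\inprod{f}{g_n - g_l}[\gppathsp]}
  \le \norm{f}[\gppathsp] \norm{g_n - g_l}[\gppathsp] \to 0
\end{equation*}
for every $f \in \gppathsp$. I do not foresee a serious technical obstacle here; the only conceptual point to appreciate is that the isometric Riesz identification converts norm convergence in $\rkhs{k}$ directly into pointwise convergence of the associated functionals, which is precisely the definition of weak-* convergence in $\dualsp{\gppathsp}$. Notably, separability of $\rkhs{k}$ plays no role in the proof itself and enters only via the surrounding hypotheses.
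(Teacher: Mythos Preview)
Your proposal is correct and follows essentially the same route as the paper: Riesz representation, identification of $\delta_\vec{x}$ with $k(\cdot,\vec{x})$ via the reproducing property, norm density of kernel sections, and then continuity of the inner product (your Cauchy--Schwarz bound) to pass to weak-* convergence. Your remark that separability is not actually used in the argument is also accurate.
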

\begin{proof}
  Let $l \in \dualsp{\rkhs{k}}$.
  By the Riesz representation theorem, there is $h \in \rkhs{k}$ such that $l = \inprod{h}{\cdot}[\rkhs{k}]$.
  Since $\operatorname{span} \set{k(\cdot, \vec{x})}_{x \in \gpidcs}$ lies dense in $\rkhs{k}$ \citep[Theorem 4.21]{Steinwart2008SVMs}, there is $\set{h_i}_{i \in \N} \subset \operatorname{span} \set{k(\cdot, \vec{x})}_{x \in \gpidcs}$ such that $h_i \to h$.
  For every $i \in \N$, define $l_i \defeq \inprod{h_i}{\cdot}[\rkhs{k}]$ and note that $\set{l_i}_{i \in \N} \subset \setsym{L}_\delta$ by the reproducing property.
  By the continuity of the inner product it follows that
  \begin{equation*}
    \evallinop{l}{f}
    = \inprod{h}{f}[\rkhs{k}]
    = \inprod*{\lim_{i \to \infty} h_i}{f}[\rkhs{k}]
    = \lim_{i \to \infty} \inprod{h_i}{f}[\rkhs{k}]
    = \lim_{i \to \infty} \evallinop{l_i}{f}
  \end{equation*}
  for every $f \in \rkhs{k}$.
\end{proof}

\begin{proposition}
  \label{prop:gp-cpaths-gm}
  The set $\setsym{L}_\delta \defeq \operatorname{span} \set{\delta_\vec{x}}_{\vec{x} \in \gpidcs}$ is weak-* sequentially dense in the dual $\dualsp{\gppathsp}$ of the space $\gppathsp = \cfns{\gpidcs}$ of real-valued continuous functions on a compact metric space $(\gpidcs, d_\gpidcs)$.
\end{proposition}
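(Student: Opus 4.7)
The plan is to combine the Riesz--Markov--Kakutani representation theorem with a partition-based discretisation of measures, exploiting the total boundedness of the compact metric space $\gpidcs$ and the uniform continuity of every $f \in \cfns{\gpidcs}$.

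First, I would identify $\dualsp{\cfns{\gpidcs}}$ with the space $M(\gpidcs)$ of signed finite regular Borel measures via $l \leftrightarrow \mu$, where $\evallinop{l}{f} = \rintegral[\gpidcs]{}{f(\vec{x})}{\mu(\vec{x})}$. This is the standard Riesz--Markov--Kakutani representation on a compact metric (hence compact Hausdorff) space. Observing that $\delta_\vec{x} \in \dualsp{\cfns{\gpidcs}}$ corresponds to the Dirac measure at $\vec{x}$, it suffices to show that finite linear combinations of Dirac measures are weak-$*$ sequentially dense in $M(\gpidcs)$ — equivalently, that for every $\mu \in M(\gpidcs)$ there is a sequence of discrete signed measures $\mu_n$ with $\rintegral[\gpidcs]{}{f}{\mu_n} \to \rintegral[\gpidcs]{}{f}{\mu}$ for every $f \in \cfns{\gpidcs}$.

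Next, I would construct such a sequence explicitly. Because $\gpidcs$ is a compact metric space, it is totally bounded, so for each $n \in \N$ there exists a finite Borel partition $\{A_i^{(n)}\}_{i = 1}^{k_n}$ of $\gpidcs$ with $\operatorname{diam}(A_i^{(n)}) \le 1/n$ (start from a finite $1/n$-net and make the balls disjoint). Fix any $\vec{x}_i^{(n)} \in A_i^{(n)}$ and define
\begin{equation*}
  \evallinop{l_n}{f} \defeq \sum_{i = 1}^{k_n} \mu(A_i^{(n)}) \, f(\vec{x}_i^{(n)}),
\end{equation*}
so that $l_n \in \operatorname{span} \set{\delta_\vec{x}}_{\vec{x} \in \gpidcs}$. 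Then for any $f \in \cfns{\gpidcs}$,
\begin{equation*}
  \abs{\evallinop{l_n}{f} - \evallinop{l}{f}}
  = \abs[\bigg]{\sum_{i = 1}^{k_n} \rintegral[A_i^{(n)}]{}{\bigl(f(\vec{x}_i^{(n)}) - f(\vec{y})\bigr)}{\mu(\vec{y})}}
  \le \sup_{i,\, \vec{y} \in A_i^{(n)}} \abs{f(\vec{x}_i^{(n)}) - f(\vec{y})} \cdot \abs{\mu}(\gpidcs).
\end{equation*}

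Finally, since $f$ is continuous on the compact space $\gpidcs$, it is uniformly continuous, so given $\epsilon > 0$ there exists $\delta > 0$ with $\abs{f(\vec{x}) - f(\vec{y})} < \epsilon$ whenever $d_\gpidcs(\vec{x}, \vec{y}) < \delta$. Choosing $n$ large enough that $1/n < \delta$ forces the supremum above to be at most $\epsilon$, yielding $\evallinop{l_n}{f} \to \evallinop{l}{f}$. There is no real obstacle here — the only subtlety is assembling the right partition and invoking regularity implicitly through the Riesz--Markov representation; the total boundedness of $\gpidcs$ together with uniform continuity of $f$ does all the work. Combined with \cref{thm:gp-gm-w*-sequentially-dense}, this yields \cref{asm:gp-gm} for any GP with continuous paths on a compact metric index set.
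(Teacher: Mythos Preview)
Your proof is correct and follows essentially the same route as the paper: Riesz--Markov to identify $l$ with a signed Borel measure, a finite Borel partition of $\gpidcs$ with diameter $<1/n$ obtained from total boundedness, discretisation of the measure at chosen partition points, and uniform continuity (Heine--Cantor) to conclude convergence. The only cosmetic difference is that the paper phrases the final step as uniform convergence of simple-function approximants $f_k \to f$ followed by dominated convergence, whereas you bound $\abs{\evallinop{l_n}{f} - \evallinop{l}{f}}$ directly via the total variation $\abs{\mu}(\gpidcs)$; both arguments are equivalent.
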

\begin{proof}
  Let $l \in \dualsp{\cfns{\gpidcs}}$.
  By the Riesz-Markov theorem \citep[Corollary 14.15]{Aliprantis2006InfAna}, there is a signed Borel measure $\lambda$ on $\gpidcs$ such that $\evallinop{l}{f} = \lintegral[\gpidcs]{\lambda}{\vec{x}}{f(\vec{x})}$.
  We need to show that there are $\set{l_k}_{k \in \N} \subset \setsym{L}_\delta$ such that $\evallinop{l}{f} \to \evallinop{l_k}{f}$ for every $f \in \cfns{\gpidcs}$.
  To do so, we modify a construction from \citet[Paragraph 4.22]{Alt2012FuncAna}.
  For $S \subset \gpidcs$, define $\operatorname{diam}(S) \defeq \sup_{\vec{x}, \vec{x}_0 \in S} d_\gpidcs(\vec{x}, \vec{x}_0)$.
  Since $\gpidcs$ is compact, there is a finite open cover $\{ \tilde{S}^{(k)}_i \}_{i = 1}^{n^{(k)}}$ of $\gpidcs$ with $\operatorname{diam}(\tilde{S}^{(k)}) < \frac{1}{k}$ for every $k \in \N$.
  Then $\{ S^{(k)}_i \}_{i = 1}^{n^{(k)}} \subset \gpidcs$ with
  \begin{equation*}
    S^{(k)}_i \defeq \tilde{S}^{(k)}_i \setminus \bigcup_{j < i} \tilde{S}^{(k)}_j
  \end{equation*}
  is also a cover of $\gpidcs$ and $S^{(k)}_i \in \borelsigalg{\gpidcs}$.
  Now choose $\vec{x}^{(k)}_i \in S^{(k)}_i$ for all $i = 1, \dotsc, n^{(k)}$ (w.l.o.g.~$S^{(k)}_i \neq \emptyset$).
  For any $f \in \cfns{\gpidcs}$, we define
  \begin{equation*}
    f_k \defeq \sum_{i = 1}^{n^{(k)}} f(\vec{x}^{(k)}_i) \chi_{S^{(k)}_i}.
  \end{equation*}
  Note that, since the $f_k$ are simple, we have
  \begin{equation*}
    \evallinop{l}{f_k}
    = \lintegral[\gpidcs]{\lambda}{\vec{x}}{f_k(\vec{x})}
    = \sum_{i = 1}^{n^{(k)}} f(\vec{x}^{(k)}_i) \lambda(S^{(k)}_i)
    \rdefeq \evallinop{l_k}{f},
  \end{equation*}
  where $l_k \in \setsym{L}_\delta$.
  We will now show that $f_k \to f$ uniformly, since this implies $\evallinop{l_k}{f} = \evallinop{l}{f_k} \to \evallinop{l}{f}$ by the dominated convergence theorem.

  By the Heine-Cantor theorem, $f \in \cfns{\gpidcs}$ is uniformly continuous.
  Thus, for every $\epsilon > 0$, there is $\delta(\epsilon) > 0$ such that $\abs{f(\vec{x}) - f(\vec{x}_0)} < \epsilon$ for $\vec{x}, \vec{x}_0 \in \gpidcs$ with $d_\gpidcs(\vec{x}, \vec{x}_0) < \delta(\epsilon)$.
  Now fix $\epsilon > 0$.
  Then for $k > \frac{1}{\delta(\epsilon)}$ and any $\vec{x} \in S^{(k)}_i$, we have
  \begin{equation*}
    d_\gpidcs(\vec{x}, \vec{x}^{(k)}_i)
    < \operatorname{diam}(S^{(k)}_i)
    < \frac{1}{k}
    < \delta(\epsilon)
  \end{equation*}
  and thus $\abs{f(\vec{x}) - f(\vec{x}^{(k)}_i)} < \epsilon$.
  Consequently,
  \begin{equation*}
    \norm{f - f_k}[\infty]
    = \max_{i = 1, \dotsc, n^{(k)}} \sup_{\vec{x} \in S^{(k)}_i} \abs{f(\vec{x}) - f(\vec{x}^{(k)}_i)}
    < \max_{i = 1, \dotsc, n^{(k)}} \epsilon
    = \epsilon.
  \end{equation*}
\end{proof}

The Banach spaces $\bucdfns{k}{\gpidcs}$ of $k$-times differentiable functions on an
open and bounded domain $\gpidcs \subset \R^d$ with bounded and uniformly continuous
partial derivatives and their subspaces (in particular the Hölder spaces) appear
naturally in the study of strong solutions to linear PDEs.
However, to allow for a flexible prior construction, we define a generalization of these
spaces.

\begin{definition}
  \label{def:bucdfns-bounded-domain}
  Let $\gpidcs \subset \R^d$ be open and bounded and let $A \subset \N_0^d$ be a
  non-empty downward closed set of multi-indices, i.e. $\vec{\beta} \in A$ implies
  $\vec{\alpha} \in A$ for every $\vec{\alpha} \in \N_0^d$ with
  $\vec{\alpha} \le \vec{\beta}$.
  We define $\bucdfns{A}{\gpidcs}$ as the space of real-valued functions $f$ on
  $\gpidcs$, for which all partial derivatives $\mipderiv{\vec{\alpha}}{f}$ with
  $\vec{\alpha} \in A$ are bounded and uniformly continuous.
\end{definition}

\begin{remark}
  \label{rmk:bucdfns-bounded-domain}
  One can show that $\bucdfns{A}{\gpidcs}$ equipped with the norm
  \begin{equation*}
    \norm{f}[\bucdfns{A}{\gpidcs}]
    \defeq
    \max_{\vec{\alpha} \in A}
    \sup_{\vec{x} \in \gpidcs}
    \abs{\mipderiv{\vec{\alpha}}{f}[\vec{x}]}
  \end{equation*}
  is a separable Banach space.
  Since every partial derivative $\mipderiv{\vec{\alpha}}{f}$ with $\vec{\alpha} \in A$
  is bounded and uniformly continuous, it has a unique, bounded, continuous extension to
  the closure $\closure{\gpidcs}$ of $\gpidcs$ \citep{Adams2003Sobolev}.
\end{remark}

\begin{proposition}
  \label{prop:gp-cdpaths-gm}
  Let $\bucdfns{A}{\gpidcs}$ be the separable Banach space of real-valued functions on an open and bounded domain $\gpidcs \subset \R^d$ with bounded and uniformly continuous partial derivatives $\mipderiv{\vec{\alpha}}{f}$ for all $f \in \bucdfns{A}{\gpidcs}$ and $\vec{\alpha} \in A$.
  Then $\dualsp{\bucdfns{A}{\gpidcs}} = \wstarseqcl^{m + 1}(\setsym{L}_\delta)$, where $\setsym{L}_\delta \defeq \operatorname{span} \set{\delta_\vec{x}}_{\vec{x} \in \gpidcs} \subset \dualsp{\bucdfns{A}{\gpidcs}}$ and $m = \max_{\vec{\alpha} \in A} \abs{\vec{\alpha}}$.
\end{proposition}
\begin{proof}
  In the following, we adapt the proof of Theorem 3.9 in \citep{Adams2003Sobolev}.
  We choose an arbitrary ordering $\vec{\alpha}_1, \dotsc, \vec{\alpha}_n$ of the
  multi-indices in $A$, i.e.~$A = \{ \vec{\alpha}_1, \dotsc, \vec{\alpha}_n \}$,
  where $n = \card{A}$.
  Let
  \begin{equation*}
    \banachsp[W]
    \defeq \set{%
      (\mipderiv{\vec{\alpha}_1}{f}, \dotsc, \mipderiv{\vec{\alpha}_n}{f})
      \where
      f \in \bucdfns{A}{\gpidcs}
    }
    \subset \cfns{\closure{\gpidcs}}^n,
  \end{equation*}
  where we interpret $\mipderiv{\vec{\alpha}_i}{f}$ as a function defined on the closure
  $\closure{\gpidcs}$ of $\gpidcs$ by the unique continuous extension mentioned in
  \cref{rmk:bucdfns-bounded-domain}.
  We equip $\cfns{\closure{\gpidcs}}^n$ and $\banachsp[W] \subset \cfns{\closure{\gpidcs}}^n$ with the norm
  \begin{equation*}
    \norm{\vec{f}}[\cfns{\closure{\gpidcs}}^n] \defeq \max_{i = 1, \dotsc, n} \norm{\vecelem{f}_i}[\cfns{\closure{\gpidcs}}].
  \end{equation*}
  Then $(\cfns{\closure{\gpidcs}}^n, \norm{\cdot}[\cfns{\closure{\gpidcs}}^n])$ is a separable Banach space \citep[Theorem 1.23]{Adams2003Sobolev}.
  Let $\linop{I} \colon \bucdfns{A}{\gpidcs} \to \banachsp[W]$ be the linear operator defined by
  $\linopat{I}{f}_i = \mipderiv{\vec{\alpha}_i}{f}$.
  The operator $\linop{I}$ is surjective and norm-preserving, and hence an isometric isomorphism.
  It follows that $\banachsp[W]$ is a closed subspace of $\cfns{\closure{\gpidcs}}^n$.

  Fix $l \in \dualsp{\bucdfns{A}{\gpidcs}}$.
  Then $l \circ \linop{I}\inv \in \dualsp{\banachsp[W]}$.
  By the Hahn-Banach theorem, there is a continuous extension $\tilde{l} \in \dualsp{(\cfns{\closure{\gpidcs}}^n)}$ of
  $l \circ \linop{I}\inv$ to $\cfns{\closure{\gpidcs}}^n$.
  This means that there are $\tilde{l}_1, \dotsc, \tilde{l}_n \in \dualsp{\cfns{\closure{\gpidcs}}}$,
  such that
  \begin{equation*}
    l
    = (l \circ \linop{I}\inv) \circ \linop{I}
    = \tilde{l} \circ \linop{I}
    = \sum_{i = 1}^n \tilde{l}_i \circ \mipderiv{\vec{\alpha}_i}{}.
  \end{equation*}
  By \cref{prop:gp-cpaths-gm}, there are $\set{\tilde{l}_{ij}}_{j \in \N} \subset \setsym{L}_\delta$ such that $\evallinop{\tilde{l}_{ij}}{f} \to \evallinop{\tilde{l}_i}{f}$ as $j \to \infty$ for all $f \in \cfns{\closure{\gpidcs}}$.
  Consequently, there are $n_{ij} \in \N$, $\set{c_{ijk}}_{k = 1}^{n_{ij}} \subset \R$, and $\set{\vec{x}_{ijk}}_{k = 1}^{n_{ij}} \subset \closure{\gpidcs}$ such that
  \begin{equation*}
    \evallinop{l}{f}
    = \lim_{j \to \infty} \sum_{i = 1}^n \evallinop{\tilde{l}^{(i)}_j}{\mipderiv{\vec{\alpha}_i}{f}}
    = \lim_{j \to \infty} \sum_{i = 1}^n \sum_{k = 1}^{n_{ij}} c^{(i)}_{jk} \mipderiv{\vec{\alpha}_i}{f}(\vec{x}_{ijk})
  \end{equation*}
  for all $f \in \bucdfns{A}{\gpidcs}$.
  We will detail the remainder of the proof only for $\abs{\vec{\alpha}} \le 1$ for all $\vec{\alpha} \in A$, since the proof of the general statement is a straightforward yet laborious extension of this special case.
  Assume without loss of generality that $\vec{\alpha}_1 = \vec{0}$ and $\vec{\alpha}_i = \vec{e}_{i - 1}$ for $2 \le i \le n \le d + 1$.
  Then,
  \begin{align*}
    \evallinop{l}{f}
     & = \lim_{j_0 \to \infty} \sum_{i = 1}^n \sum_{k = 1}^{n_{ij_0}} c_{ij_0k} \mipderiv{\vec{\alpha}_i}{f}(\vec{x}_{ij_0k})                                                                                                                                                          \\
     & = \lim_{j_0 \to \infty} \sum_{k = 1}^{n_{ij_0}} c_{1j_0k} f(\vec{x}_{1j_0k}) + \sum_{i = 2}^n c_{ij_0k} \mipderiv{\vec{e}_{i - 1}}{f}(\vec{x}_{ij_0k})                                                                                                                          \\
     & = \lim_{j_0 \to \infty} \lim_{j_1 \to \infty} \underbrace{\sum_{k = 1}^{n_{ij_0}} c_{1j_0k} f(\vec{x}_{1j_0k}) + \sum_{i = 2}^n \frac{c_{ij_0k}}{h_{j_1}} \left( f(\vec{x}_{ij_0k} + h_{j_1} \vec{e}_{i - 1}) - f(\vec{x}_{ij_0k}) \right)}_{\rdefeq \evallinop{l_{j_0j_1}}{f}}
  \end{align*}
  for any null sequence $\set{h_j}_{j \in \N} \subset \R$ and all $f \in \bucdfns{A}{\gpidcs}$.
  Since $l_{j_0j_1} \in \setsym{L}_\delta$, it follows that $l \in \wstarseqcl^2(\setsym{L}_\delta)$.
  The general case, i.e. $l \in \wstarseqcl^{m + 1}(\setsym{L}_\delta)$ with $m \defeq \max_{i = 1, \dotsc, n} \abs{\vec{\alpha}_i}$, can be shown by induction on $m \in \N_0$.
  Hence, $\dualsp{\bucdfns{A}{\gpidcs}} = \wstarseqcl^{m + 1}(\setsym{L}_\delta)$.
\end{proof}

Having investigated conditions under which $\omega \mapsto \gpprior(\cdot, \omega)$ is
a Gaussian random variable, it remains to show what its mean and covariance operator are.
Perhaps unsurprisingly, it turns out that they are strongly related to the mean and
covariance function of the GP.
\begin{proposition}
  \label{prop:gp-to-gm}
  Let \cref{asm:gp-gm} hold.
  Then $m \in \gppathsp$, $k(\vec{x}, \cdot) \in \gppathsp$ for all $\vec{x} \in \gpidcs$, and the mean and
  covariance operator of $\omega \mapsto \gpprior(\cdot, \omega)$ are given by $m$ and
  \begin{equation}
    \linop{C}_k \colon \dualsp{\gppathsp} \to \gppathsp, \quad
    l \mapsto \evallinop{\linop{C}_k}{l}(\vec{x}) = \evallinop{l}{k(\vec{x}, \cdot)},
  \end{equation}
  respectively.
\end{proposition}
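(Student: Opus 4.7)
The plan is to apply Proposition \ref{prop:gm-mean-cov} to the $\gppathsp$-valued Gaussian random variable $\bgrv(\omega) \defeq \gpprior(\cdot, \omega)$, and then identify the abstract mean vector $m_\bgrv \in \gppathsp$ and covariance operator $\linop{C}_\bgrv \colon \dualsp{\gppathsp} \to \gppathsp$ with $m$ and $\linop{C}_k$, respectively. The whole proof is driven by the single observation that because $\gppathsp$ is an RKBS, every point evaluation $\delta_\vec{x}$ lies in $\dualsp{\gppathsp}$, which allows us to ``test'' the abstract Banach-space-valued objects against point evaluations and thereby recover the pointwise identities claimed in the proposition.

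First I would identify the mean. Proposition \ref{prop:gm-mean-cov} gives a unique $m_\bgrv \in \gppathsp$ with $\evallinop{l}{m_\bgrv} = \expectation{\evallinop{l}{\bgrv}}$ for every $l \in \dualsp{\gppathsp}$. Choosing $l = \delta_\vec{x}$ yields
\begin{equation*}
  m_\bgrv(\vec{x})
  = \evallinop{\delta_\vec{x}}{m_\bgrv}
  = \expectation{\evallinop{\delta_\vec{x}}{\bgrv}}
  = \expectation{\gpprior(\vec{x})}
  = m(\vec{x})
\end{equation*}
for every $\vec{x} \in \gpidcs$, which proves both $m \in \gppathsp$ and that $m$ is the Banach-space mean of $\bgrv$.

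Next I would treat the kernel. Applying the covariance operator from Proposition \ref{prop:gm-mean-cov} to $\delta_\vec{x} \in \dualsp{\gppathsp}$ produces an element $\evallinop{\linop{C}_\bgrv}{\delta_\vec{x}} \in \gppathsp$. Testing against another point evaluation $\delta_{\vec{x}_0}$ gives
\begin{equation*}
  \evallinop{\linop{C}_\bgrv}{\delta_\vec{x}}(\vec{x}_0)
  = \evallinop{\delta_{\vec{x}_0}}{\evallinop{\linop{C}_\bgrv}{\delta_\vec{x}}}
  = \covariance{\evallinop{\delta_\vec{x}}{\bgrv}}{\evallinop{\delta_{\vec{x}_0}}{\bgrv}}
  = k(\vec{x}, \vec{x}_0),
\end{equation*}
so $k(\vec{x}, \cdot) = \evallinop{\linop{C}_\bgrv}{\delta_\vec{x}} \in \gppathsp$. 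This simultaneously verifies the claim that $k(\vec{x}, \cdot)$ is a path of the GP (in the sense of membership in $\gppathsp$) and supplies the key representation for the final step.

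Finally, I would verify that $\linop{C}_\bgrv = \linop{C}_k$. For an arbitrary $l \in \dualsp{\gppathsp}$ and $\vec{x} \in \gpidcs$, applying the defining property of $\linop{C}_\bgrv$ once with $(l_1, l_2) = (l, \delta_\vec{x})$ and once with $(l_1, l_2) = (\delta_\vec{x}, l)$, and using symmetry of the covariance, yields
\begin{equation*}
  \evallinop{\linop{C}_\bgrv}{l}(\vec{x})
  = \evallinop{\delta_\vec{x}}{\evallinop{\linop{C}_\bgrv}{l}}
  = \covariance{\evallinop{l}{\bgrv}}{\evallinop{\delta_\vec{x}}{\bgrv}}
  = \evallinop{l}{\evallinop{\linop{C}_\bgrv}{\delta_\vec{x}}}
  = \evallinop{l}{k(\vec{x}, \cdot)},
\end{equation*}
which is exactly $\evallinop{\linop{C}_k}{l}(\vec{x})$. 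Since $\gppathsp$ is an RKBS, the point evaluations separate points of $\gppathsp$, and by linearity and uniqueness in Proposition \ref{prop:gm-mean-cov} the two operators coincide on all of $\dualsp{\gppathsp}$. There is no real obstacle here: the work has already been done in Proposition \ref{prop:gm-mean-cov} and in Assumption \ref{asm:gp-gm}, and the only subtlety worth spelling out is that each invocation of the covariance identity is legitimate because $\delta_\vec{x} \in \dualsp{\gppathsp}$ by the RKBS hypothesis.
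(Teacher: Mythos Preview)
Your proposal is correct and follows essentially the same approach as the paper's proof: both apply Proposition~\ref{prop:gm-mean-cov} and then test the abstract mean and covariance operator against point evaluation functionals $\delta_\vec{x} \in \dualsp{\gppathsp}$ to identify them with $m$ and $\linop{C}_k$. The only cosmetic difference is that the paper concludes $\linop{C}_\bgrv = \linop{C}_k$ directly from the pointwise identity, whereas you additionally remark that point evaluations separate points of $\gppathsp$; this is fine but not strictly needed, since you have already shown $\evallinop{\linop{C}_\bgrv}{l}(\vec{x}) = \evallinop{\linop{C}_k}{l}(\vec{x})$ for all $\vec{x}$.
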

\begin{proof}
  Since $\omega \mapsto \gpprior(\cdot, \omega)$ is Gaussian, its mean and covariance
  operator $m_\gpprior$ and $\linop{C}_\gpprior$ exist by
  \cref{prop:gm-mean-cov} and we have
  \(
  m(\vec{x})
  = \expectation[\prob{}]{\gpprior(\vec{x})}
  = \expectation[\gpprior]{\evallinop{\delta_\vec{x}}{\gpprior}}
  = \evallinop{\delta_\vec{x}}{m_\gpprior}
  \)
  for all $\vec{x} \in \gpidcs$ and
  \begin{equation*}
    k(\vec{x}_1, \vec{x}_2)
    = \covariance[\prob{}]{\gpprior(\vec{x}_1)}{\gpprior(\vec{x}_2)}
    = \covariance[\gpprior]{\evallinop{\delta_{\vec{x}_1}}{\gpprior}}{\evallinop{\delta_{\vec{x}_2}}{\gpprior}}
    = \evallinop{\linop{C}_\gpprior}{\delta_{\vec{x}_1}}(\vec{x}_2)
  \end{equation*}
  for all $\vec{x}_1, \vec{x}_2 \in \gpidcs$, since all point evaluation functionals are
  continuous on $\gppathsp$.
  Hence, $m = m_\gpprior \in \gppathsp$ and
  $k(\vec{x}, \cdot) = \evallinop{\linop{C}_\gpprior}{\delta_\vec{x}} \in \gppathsp$
  for all $\vec{x} \in \gpidcs$.
  Additionally, for all $l \in \dualsp{\gppathsp}$ and $\vec{x} \in \gpidcs$,
  \begin{align*}
    \evallinop{\linop{C}_\gpprior}{l}(\vec{x})
    = \covariance[\gpprior]{\evallinop{l}{\gpprior}}{\evallinop{\delta_\vec{x}}{\gpprior}}
    = \covariance[\gpprior]{\evallinop{\delta_\vec{x}}{\gpprior}}{\evallinop{l}{\gpprior}}
    = \evallinop{l}{\evallinop{\linop{C}_\gpprior}{\delta_\vec{x}}}
    = \evallinop{l}{k(\vec{x}, \cdot)}
    = \evallinop{\linop{C}_k}{l}(\vec{x}).
  \end{align*}
  This shows that $\linop{C}_\gpprior = \linop{C}_k$.
\end{proof}

\subsection{Proof of Theorem~\ref{thm:gp-inference-linfctls}}
\label{sec:proof-theorem-1}
Using the results from \cref{sec:gm,sec:gp-random-function}, particularly
\cref{prop:gp-to-gm,lem:gm-affine-transform}, we can now conduct the proof of
\cref{thm:gp-inference-linfctls,cor:gp-inference-linop-evals} as outlined in the
beginning of \cref{app:proofs-gp-inference}.

The main theorem deals with the case in which we observe the GP through a finite number
of linear functionals.
This happens when conditioning on integral observations or on (Galerkin) projections as
in \cref{sec:cpu-stationary-1d,sec:mwr-info-ops}.
\ThmGPInferenceLinFunctls*
\begin{proof}
  By \cref{lem:gm-affine-transform}, $\linfctlsat{L}{\gpprior}$ is a Gaussian random
  variable with mean $\linfctlsat{L}{m}$ and covariance matrix $\mat{\Sigma}$ with
  \begin{equation*}
    \matelem{\Sigma}_{ij}
    = \evallinop{\linfctlselem{L}_i}{
      \linopat{C}{\linfctlselem{L}_j}
    }
    = \evallinop{\linfctlselem{L}_i}{
      x \mapsto \evallinop{\linfctlselem{L}_j}{k(x, \cdot)}
    }
    = (\LkL{\linfctls{L}}{k})_{ij},
  \end{equation*}
  where we used \cref{prop:gm-mean-cov,prop:gp-to-gm}.
  This proves \cref{eqn:gp-linfunctls-predictive}.
  Now let $\mat{X} = (\vec{x}_i)_{i = 1}^m \in \gpidcs^m$ and consider
  \begin{equation*}
    \tilde{L} \colon U \to \R^{m + n},
    f \mapsto \pvec{f(\mat{X}) \\ \linfctlsat{L}{f}}.
  \end{equation*}
  $\tilde{L}$ is linear and bounded.
  Hence, by \cref{lem:gm-affine-transform} and the stability properties of independent
  Gaussian random variables on $\R^{m + n}$,
  \begin{equation*}
    \pvec{\gpprior(\mat{X}) \\ \linfctlsat{L}{\gpprior} + \rvec{\epsilon}}
    = \evallinop{\tilde{\linfctls{L}}}{\gpprior} + \pvec{\mat{0}_{m \times n} \\ \mat{I}_n} \rvec{\epsilon}
    \sim \gaussian{
      \pvec{m(\mat{X}) \\ \linfctlsat{L}{m} + \vec{\mu}}
    }{
      \begin{pmatrix}
        k(\mat{X}, \mat{X})                  & \mat{\Sigma}^{\mat{X}, \linfctls{L}} \\
        \mat{\Sigma}^{\linfctls{L}, \mat{X}} & \LkL{\linfctls{L}}{k} + \mat{\Sigma}
      \end{pmatrix}
    },
  \end{equation*}
  where
  \begin{equation*}
    \matelem{\Sigma}^{\mat{X}, \linfctls{L}}_{i,j}
    = \evallinop{\delta_{\vec{x}_i}}{\linopat{C}{\linfctlselem{L}_j}}
    = \linopat{C}{\linfctlselem{L}_j}(\vec{x}_i)
    = \evallinop{\linfctlselem{L}_j}{k(\vec{x}_i, \cdot)}
    = (\LkL{\delta_{\mat{X}}}{k}[\linfctls{L}])_{i, j}
  \end{equation*}
  and $\mat{\Sigma}^{\linfctls{L}, \mat{X}} = (\mat{\Sigma}^{\mat{X}, \linfctls{L}})\T$.
  By the well-known conditioning theorem for Gaussian random variables in $\R^{m + n}$,
  we arrive at
  \begin{equation*}
    \condrv{\gpprior(\mat{X}) \given \linfctlsat{L}{\gpprior} + \rvec{\epsilon} = \vec{y}}
    \sim \gaussian{
    \vec{\mu}^{\condrv{\gpprior(\mat{X}) \given \vec{y}}}
    }{
    \mat{\Sigma}^{\condrv{\gpprior(\mat{X}) \given \vec{y}}}
    },
  \end{equation*}
  with
  \begin{align*}
    \vec{\mu}^{\condrv{\gpprior(\mat{X}) \given \vec{y}}}
     & = m(\mat{X}) +
    (\LkL{\delta_{\mat{X}}}{k}[\linfctls{L}])
    (\LkL{\linfctls{L}}{k} + \mat{\Sigma})\pinv
    (\vec{y} - (\linfctlsat{L}{m} + \vec{\mu})) \\
    \intertext{and}
    \mat{\Sigma}^{\condrv{\gpprior(\mat{X}) \given \vec{y}}}
     & = k(\mat{X}, \mat{X}) -
    (\LkL{\delta_{\mat{X}}}{k}[\linfctls{L}])
    (\LkL{\linfctls{L}}{k} + \mat{\Sigma})\pinv
    (\LkL{\linfctls{L}}{k}[\delta_{\mat{X}}]).
  \end{align*}
  This shows that $\gpprior = \{ \omega \mapsto \gpprior(\vec{x}, \omega) \}_{\vec{x} \in \gpidcs}$
  is a Gaussian process on the probability space
  \begin{equation*}
    (
    \Omega,
    \sigalg,
    \prob{\cdot \given \linfctlsat{L}{\gpprior} + \rvec{\epsilon} = \vec{y}}
    )
  \end{equation*}
  where $\prob{\cdot \given \linfctlsat{L}{\gpprior} + \rvec{\epsilon} = \vec{y}}$ is a
  regular conditional probability whose existence is guaranteed, since $\R^n$ is
  Polish.
  The mean and covariance function of the conditional process evaluated at $\vec{x}_i$
  are given by $\vecelem{\mu}^{\condrv{\gpprior(\mat{X}) \given \vec{y}}}_i$ and
  $\matelem{\Sigma}^{\condrv{\gpprior(\mat{X}) \given \vec{y}}}_{i,i}$.
  Since the points $\mat{X}$ were chosen arbitrarily, this holds for any
  $\vec{x} \in \gpidcs$, which proves \cref{eqn:gp-linfunctls-posterior-mean,%
    eqn:gp-linfunctls-posterior-cov}.
\end{proof}
Finally, we address the archetypical case, in which both the prior $\gpprior$ and the
prior predictive $\linopat{L}{\gpprior} + \rvec{\epsilon}$ are Gaussian processes.
This happens if the linear operator maps into a function space, in which point
evaluation is continuous.
In this article, this case occurred in \cref{sec:solving-pdes-bayesian-inference,%
  sec:cpu-stationary-1d}, where we inferred the strong solution of a PDE from
observations of the PDE residual at a finite number of domain points.
\CorrGPInferenceLinOpEvals*
\begin{proof}
  The linear operator
  \(
  \linopat{L}{\cdot}(\tilde{\mat{X}}) \colon \gppathsp \to \R^n
  \)
  is bounded, since
  \(
  \linopat{L}{\cdot}(\tilde{\mat{X}})_i = \delta_{\tilde{\vec{x}}_i} \circ \linop{L}
  \)
  is bounded by assumption.
  Hence, \cref{eqn:gp-linop-evals-predictive,eqn:gp-linop-evals-posterior,%
    eqn:gp-linop-evals-posterior-mean,eqn:gp-linop-evals-posterior-cov} follow directly
  from \cref{thm:gp-inference-linfctls}.
  Now let $\mat{X} = (\vec{x}_i)_{i = 1}^{m + m'} \in \gpidcs^{m + m'}$.
  Then the linear operator
  \begin{equation*}
    \linfctls{L}_\mat{X} \colon \gppathsp \to \R^{m + m'},
    f \mapsto
    \begin{pmatrix}
      f(\vec{x}_1)
       & \hdots
       & f(\vec{x}_m)
       & \linopat{L}{f}(\vec{x}_{m + 1})
       & \hdots
       & \linopat{L}{f}(\vec{x}_{m + m'})
    \end{pmatrix}\T
  \end{equation*}
  is bounded and $\evallinop{\linfctls{L}_\mat{X}}{\gpprior}$ is Gaussian by
  \cref{thm:gp-inference-linfctls}.
  This implies that
  $\set{\omega \mapsto \morproc{f}^\linop{L}(\vec{x}, \omega)}_{\vec{x} \in \gpidcs}$
  with
  \begin{equation*}
    \morproc{f}^\linop{L}(\vec{x}, \omega)
    \defeq
    \begin{pmatrix}
      f(\vec{x}, \omega) \\
      \linopat{L}{f(\cdot, \omega)}(\vec{x})
    \end{pmatrix}
  \end{equation*}
  is a 2-output Gaussian process.
  By \cref{lem:gm-affine-transform,prop:gm-mean-cov,prop:gp-to-gm}, its mean
  function is given by
  \begin{equation*}
    \vec{m}^\linop{L}(\vec{x})
    =
    \begin{pmatrix}
      \expectation[\prob{}]{\evallinop{\delta_\vec{x}}{\gpprior}} \\
      \expectation[\prob{}]{\evallinop{(\delta_\vec{x} \circ \linop{L})}{\gpprior}}
    \end{pmatrix}
    =
    \begin{pmatrix}
      m(\vec{x}) \\
      \linopat{L}{m}(\vec{x})
    \end{pmatrix}
  \end{equation*}
  and its covariance function is given by
  \begin{align*}
    \mat{k}^\linop{L}(\vec{x}_1, \vec{x}_2)
     & =
    \begin{pmatrix}
      \covariance[\prob{}]{\evallinop{\delta_{\vec{x}_1}}{\gpprior}}{\evallinop{\delta_{\vec{x}_2}}{\gpprior}}
       & \covariance[\prob{}]{\evallinop{\delta_{\vec{x}_1}}{\gpprior}}{\evallinop{(\delta_{\vec{x}_2} \circ \linop{L})}{\gpprior}}                   \\
      \covariance[\prob{}]{\evallinop{(\delta_{\vec{x}_1} \circ \linop{L})}{\gpprior}}{\evallinop{\delta_{\vec{x}_2}}{\gpprior}}
       & \covariance[\prob{}]{\evallinop{(\delta_{\vec{x}_1} \circ \linop{L})}{\gpprior}}{\evallinop{(\delta_{\vec{x}_2} \circ \linop{L})}{\gpprior}}
    \end{pmatrix} \\
     & =
    \begin{pmatrix}
      \evallinop{\delta_{\vec{x}_1}}{\evallinop{\linop{C}_k}{\delta_{\vec{x}_2}}}
       & \evallinop{\delta_{\vec{x}_1}}{\evallinop{\linop{C}_k}{\delta_{\vec{x}_2} \circ \linop{L}}}                   \\
      \evallinop{(\delta_{\vec{x}_1} \circ \linop{L})}{\evallinop{\linop{C}_k}{\delta_{\vec{x}_2}}}
       & \evallinop{(\delta_{\vec{x}_1} \circ \linop{L})}{\evallinop{\linop{C}_k}{\delta_{\vec{x}_2} \circ \linop{L}}}
    \end{pmatrix}             \\
     & =
    \begin{pmatrix}
      k(\vec{x}_1, \vec{x}_2)                   & (\kL{k}{\linop{L}})(\vec{x}_1, \vec{x}_2)  \\
      (\Lk{\linop{L}}{k})(\vec{x}_1, \vec{x}_2) & (\LkL{\linop{L}}{k})(\vec{x}_1, \vec{x}_2)
    \end{pmatrix}.
  \end{align*}
  This proves \cref{eqn:gp-linop-evals-joint}.
\end{proof}

\subsection{On Prior Selection}
\label{sec:prior-selection}
In order to apply \cref{thm:gp-inference-linfctls} in practice, we need to construct our
GP prior $\gpprior$ such that
\begin{enumerate}
  \item $\omega \mapsto \gpprior(\cdot, \omega)$ is a Gaussian random variable on some
        suitably chosen RKBS $\gppathsp$, and
  \item $\linfctls{L} \colon \gppathsp \to \R^n$ is bounded.
\end{enumerate}
Luckily, we can use existing results about the path spaces of Gaussian processes together with \cref{thm:gp-gm-w*-sequential-closure,prop:gp-rkhs-paths-gm,prop:gp-cpaths-gm,prop:gp-cdpaths-gm} to verify these assumptions.
It is tempting to choose $\gppathsp = \rkhs{k}$, i.e.~the RKHS of the GP's kernel $k$.
However, this is only valid if $\rkhs{k}$ is finite-dimensional.
\begin{remark}
  \label{rmk:gp-path-space-rkhs}
  Let $\gpprior \sim \gp{m}{k}$ be a Gaussian process with index set $\gpidcs$ and let
  $\rkhs{k}$ be the RKHS of the covariance function $k$.
  If $\dim \rkhs{k} = \infty$, then the sample paths of $\gpprior$ almost surely do not
  lie in $\rkhs{k}$.
  We refer to \cite[Section 4]{Kanagawa2018GPKernMeth} and
  \cite{Steinwart2019SamplePathProps} for more details on RKHS sample spaces.
\end{remark}
In the following, we will give example constructions of appropriate priors for GP
regression tasks with linear operator observations.

\subsubsection{Priors for GP Regression with Linear Operator Observations}
\label{sec:priors-gp-inference-linfctls}
The spaces $\bucdfns{A}{\gpidcs}$ from \cref{def:bucdfns-bounded-domain}, particularly
$\bucdfns{\vec{\beta}}{\gpidcs}$ with
$A \defeq \set{\vec{\alpha} \in \N_0^d \suchthat \vec{\alpha} \le \vec{\beta}}$
and $\bucdfns{k}{\gpidcs}$ with
$A \defeq \set{\vec{\alpha} \in \N_0^d \suchthat \abs{\vec{\alpha}} \le k}$,
are useful sample spaces for many GP regression tasks, since a large number of
practically relevant observation functionals, including point evaluations of the paths
and their partial derivatives, as well as integrals of the paths, are bounded on these
spaces.
Even though the functions in $\bucdfns{A}{\gpidcs}$ are, technically speaking, only
defined on the open and bounded set $\gpidcs \subset \R^d$, we can treat them as
functions on the closure $\closure{\gpidcs}$ of $\gpidcs$ by continuous extension (see
\cref{rmk:bucdfns-bounded-domain} for more details).
In other words, we can evaluate functions in $\bucdfns{A}{\gpidcs}$ on the
boundary $\boundary{\gpidcs}$ of $\gpidcs$.

To fulfill \cref{asm:gp-gm}, it remains to verify that the sample paths of a given GP prior (almost surely) lie in $\bucdfns{A}{\gpidcs}$.
Under the assumption that $m \in \bucdfns{A}{\gpidcs}$, \citet{DaCosta2023SamplePathRegularity} show that this can be done by studying the regularity of the covariance function $k$.
They also provide readily applicable results for a wide variety of covariance functions used in practice.

\begin{example}[Tensor Products of Matérn Covariances]
  Tensor products of 1D Matérn covariance functions
  \begin{equation*}
    k_{\vec{\nu}}(\vec{x}_1, \vec{x}_2)
    = \prod_{i = 1}^d k_{\vecelem{\nu}_i}(\vecelem{x}_{1,i}, \vecelem{x}_{2,i})
  \end{equation*}
  are a particularly convenient choice of prior covariance function, since their hyperparameters directly control the differentiability of the sample paths independently for each input dimension.
  For an open and bounded domain $\gpidcs \subset \R^d$, Propositions 10 and 21 in \citet{DaCosta2023SamplePathRegularity} imply that samples from a Gaussian process with mean function $m$ and covariance function $k_{\vec{\nu}}$ lie in $\bucdfns{\vec{\beta}}{\gpidcs}$ (with probability 1) if $\vecelem{\nu}_i > \vecelem{\beta}_i$ and
  $m \in \bucdfns{\vec{\beta}}{\gpidcs}$.
  Any point-evaluated partial derivative $f \mapsto \mipderiv{\vec{\alpha}}{f}[\vec{x}]$
  with $\vec{\alpha} \le \vec{\beta}$ and $\vec{x} \in \closure{\gpidcs}$ is continuous
  on $\bucdfns{\vec{\beta}}{\gpidcs}$.
\end{example}

In \cref{sec:cpu-stationary-1d}, we use tensor products of Matérn covariance functions
to construct the GP priors.
In particular, we choose $\vecelem{\nu}_i = \frac{5}{2} = 2 + \frac{1}{2}$, which
implies that the sample paths of the prior lie in $\bucdfns{(2, 2)}{\gpidcs}$ and that
all point-evaluated differential operators of order $\le 2$ are continuous linear
functionals on the sample space.
Hence, the assumptions of \cref{cor:gp-inference-linop-evals} are fulfilled, which
means that the inference procedure used in this \lcnamecref{sec:cpu-stationary-1d} is
supported by our theoretical results above.

The sample paths of Gaussian processes with multivariate Matérn covariance functions $k_\nu$ (almost surely) lie in the Banach space $\gppathsp = \bucdfns{p}{\gpidcs}$ if $\nu > p$ \citep[Proposition 10]{DaCosta2023SamplePathRegularity}.

For Gaussian processes with smooth covariance functions like the \emph{Gaussian}/\emph{exponentiated quadratic} or the \emph{rational quadratic} covariance functions, \cref{asm:gp-gm} holds for $\gppathsp = \bucdfns{p}{\gpidcs}$ for all $p \in \N_0$ \citep[Corollary 13]{DaCosta2023SamplePathRegularity}.
Informally speaking, for the Gaussian covariance function, this can be seen as a limit of the argument above, since a Matérn covariance function approaches the Gaussian covariance function for $\nu \to \infty$.

Gaussian processes with parametric covariance function
\(
k(\vec{x}_1, \vec{x}_2)
= \vec{\phi}(\vec{x}_1)\T \mat{\Sigma} \vec{\phi}(\vec{x}_2)
\)
with features $\vec{\phi} \colon \gpidcs \to \R^m$ and $\mat{\Sigma} \in \R^{m \times m}$
positive-(semi)definite have paths in $\gppathsp$ if $\vecelem{\phi}_i \in \gppathsp$.
In this case, \cref{asm:gp-gm} is also satisfied, since the Gaussian measure can be
explicitly constructed as the law of the random function $\rvec{w}\T \vec{\phi}$,
where $\rvec{w} \sim \gaussian{\vec{0}}{\mat{\Sigma}}$.

\subsubsection{Priors for Inferring Weak Solutions of Linear PDEs}
\label{sec:priors-weak-solutions}
A typical choice for the solution spaces $\solsp$ of linear PDEs in weak formulation
(see \cref{sec:weak-formulation}), are \emph{Sobolev spaces} \citep{Adams2003Sobolev}.
Unfortunately, it is impossible to construct a Gaussian process prior $\solprior$,
whose paths are elements of a Sobolev space $\solsp$.
This is due to the fact that Sobolev spaces are, technically speaking, not function
spaces, but rather spaces of equivalence classes $[u]_{\sim}$ of functions, which are
equal almost everywhere \citep{Adams2003Sobolev}.
By contrast, the path spaces of Gaussian processes are proper function spaces, which
means that, in this setting, $\paths{\solprior} \subseteq \solsp$ is impossible.

Fortunately, if the path space $\gppathsp \supset \paths{\solprior}$ of $\solprior$ can
be continuously embedded in $\solsp$, i.e.~there is a continuous and injective linear
operator $\iota \colon \gppathsp \to \solsp$, commonly referred to as an
\emph{embedding}, then the inference procedure above can still be applied.
If such an embedding exists, we can interpret the paths of the GP as elements of
$\gppathsp$ by applying $\iota$ implicitly.
For instance, $\evallinop{\weakbilin}{\solprior, \testfn}$ is then a shorthand
notation for $\evallinop{\weakbilin}{\evallinop{\iota}{\solprior}, \testfn}$.
Fortunately, since the embedding is assumed to be continuous, the conditions for GP
inference with linear operator observations are still met when applying $\iota$
implicitly.
The canonical choice for the embedding in the case of Sobolev spaces is
$\evallinop{\iota}{u} = [u]_{\sim}$.

\begin{example}[Mat\'ern covariances and Sobolev spaces]
  \label{ex:matern-sample-spaces}
  \citet{Kanagawa2018GPKernMeth} show that, under certain assumptions, RKHS sample
  spaces of GP priors with Mat\'ern covariance functions are continuously embedded in
  Sobolev spaces whose smoothness depends on the parameter $\nu$ of the covariance
  function.
  To be precise, let $\dom \subset \R^d$ be open and bounded with Lipschitz boundary
  such that the cone condition \citep[Definition 4.6]{Adams2003Sobolev} holds.
  Denote by $k_{\nu, l}$ the Mat\'ern kernel with smoothness parameter $\nu > 0$ and
  lengthscale $l > 0$.
  Then, with probability 1, the sample paths of a Gaussian process $\solprior$ with
  covariance function $k_{\nu, l}$ are contained in any RKHS $\rkhs{k_{\nu', l'}}$ with
  $l' > 0$ and
  \begin{equation}
    0 < \underbrace{\nu' + \frac{d}{2}}_{\rdefeq m'} < \nu
  \end{equation}
  \citep[Corollary 4.15 and Remark 4.15]{Kanagawa2018GPKernMeth}, i.e.~%
  $\paths{\solprior} \subset \rkhs{k_{\nu', l'}}$.
  Moreover, if $m' \in \mathbb{N}$, then the RKHS $\rkhs{k_{\nu', l'}}$ is
  norm-equivalent to the Sobolev space $\sobolev{m'}[\dom]$
  \citep[Example 2.6]{Kanagawa2018GPKernMeth}.
  This implies that the canonical embedding
  \begin{equation}
    \iota \colon \rkhs{k_{\nu', l'}} \to \sobolev{m'}[\dom],
    \gpprior(\cdot, \omega) \mapsto [\gpprior(\cdot, \omega)]_{\sim_{\sobolev{m'}[\dom]}}
  \end{equation}
  is continuous.
\end{example}

For $\solsp = \sobolev{m'}[\dom]$, the example above shows that the Matérn covariance
function $k_{\nu, l}$ with $\nu = m' + \epsilon$ for any $\epsilon > 0$
leads to an admissible GP prior.
The choice $\epsilon = \frac{1}{2}$ makes evaluating the covariance function
particularly efficient \citep{Rasmussen2006GPML}.
For instance, in \cref{sec:mwr-info-ops}, we used $\nu = \frac{3}{2} = 1 + \frac{1}{2}$
for a weak form linear PDE with solution space $\solsp = \sobolev{1}[\dom]$.
However, the elements of the Sobolev space $\sobolev{m}[\dom]$ are only $m$-times weakly
differentiable, which means that $\sobolev{2}[\dom]$ is not an admissible choice in
\cref{sec:cpu-stationary-1d}.

  \section{Linear Partial Differential Equations}
\label{sec:linear-pdes}
\begin{definition}[Multi-index]
  \label{def:multi-index}
  Using a $d$-dimensional \emph{multi-index} $\vec{\alpha} \in \N_0^d$, we can represent
  (mixed) partial derivatives of arbitrary order as
  \begin{equation*}
    \mipderiv{\vec{\alpha}}[\vec{x}]{}
    \defeq \mpderiv*[\abs{\vec{\alpha}}]{%
      \pdiff[(\vecelem{\alpha}_1)]{\vecelem{x}_1} \cdots \pdiff[(\vecelem{\alpha}_d)]{\vecelem{x}_d}
    }{},
  \end{equation*}
  where $\abs{\vec{\alpha}} \defeq \sum_{i = 1}^d \vecelem{\alpha}_i$.
  If the variables w.r.t. which we differentiate are clear from the context, we also
  denote this (mixed) partial derivative by $\mipderiv{\vec{\alpha}}{}$.
  For two multi-indices $\vec{\alpha}, \vec{\alpha}' \in \N_0^d$, we write
  $\vec{\alpha} \le \vec{\alpha}'$ iff $\vecelem{\alpha}_i \le \vecelem{\alpha}'_i$ for
  all $i = 1, \dotsc, d$.
\end{definition}
\begin{definition}[Linear differential operator]
  \label{def:linear-diffop}
  A \emph{linear differential operator} $\linop{D} \colon \solsp \to \rhssp$ of order
  $k$ between a space $\solsp$ of $\R^{d'}$-valued functions and a space $\rhssp$ of
  real-valued functions defined on some common open domain $\dom \subset \R^d$ is a
  linear operator that linearly combines partial derivatives up to $k$-th order of its
  input function, i.e.~%
  \begin{equation*}
    \linopat{D}{\solvv}
    \defeq
    \sum_{i = 1}^{d'} \sum_{\vec{\alpha} \in \N_0^d, \abs{\vec{\alpha}} \le k}
    A_{i, \vec{\alpha}} \mipderiv{\vec{\alpha}}{\solvv_i},
  \end{equation*}
  where $A_{i,\vec{\alpha}} \in \R$ for every $i \in \{ 1, \dotsc, d' \}$ and every
  multi-index $\vec{\alpha} \in \N_0^d$ with $\abs{\vec{\alpha}} \le k$.
\end{definition}

\subsection{Weak Derivatives and Sobolev Spaces}

\begin{definition}[Test Function]
  \label{def:test-fn}
  Let $\dom \subset \R^d$ be open and let
  \begin{equation*}
    \testfns{\dom} \defeq
    \set{
      \phi \in \cdfns[\infty]{\dom}[\R]
      \suchthat
      \operatorname{supp} \left( \phi \right) \subset \dom \text{ is compact}
    }
  \end{equation*}
  be the space of smooth functions with compact support in $\dom$.
  A function $\phi \in \testfns{\dom}$ is dubbed \emph{test function} and we refer to
  $\testfns{\dom}$ as the \emph{space of test functions}.
\end{definition}

\begin{theorem}[Sobolev Spaces\protect\footnotemark]
  \footnotetext{This theorem is a summary of \citep[Definitions 3.1 and 3.2 and Theorems 3.3 and 3.6]{Adams2003Sobolev}}
  \label{thm:sobolev-spaces}
  Let $\dom \subset \R^d$ be open, $k \in \N_{> 0}$, and
  $p \in [1, \infty) \cup \set{\infty}$.
  The functional
  \begin{equation}
    \label{eqn:sobolev-norm}
    \norm{u}[k, p, \dom] \defeq
    \begin{cases}
      \left(
      \sum_{\abs{\alpha} \le k} \norm{\mipderiv{\alpha}{u}}[{\L{p}[\dom]}]^p
      \right)^{\nicefrac{1}{p}}                                               & \text{if } p < \infty, \\
      \max_{\abs{\alpha} \le k} \norm{\mipderiv{\alpha}{u}}[{\L\infty[\dom]}] & \text{if } p = \infty,
    \end{cases}
  \end{equation}
  where the $\mipderiv{\alpha}{}$ are weak partial derivatives, is called a
  \emph{Sobolev norm}.
  A Sobolev norm $\norm{u}[k, p, \dom]$ is a norm on subspaces of $\L{p}[\dom]$, on which the
  right-hand side is well-defined and finite.
  A \emph{Sobolev space} of order $k$ is defined as the subspace
  \begin{equation*}
    \sobolev[k]{p}[\dom]
    \defeq \set{
      u \in \L{p}[\dom]
      \suchthat
      \mipderiv{\alpha}{u} \in \L{p}[\dom]\ \text{for}\ \abs{\alpha} \le k
    }.
  \end{equation*}
  of $\L{p}$.
  Sobolev spaces $\sobolev[k]{p}[\dom]$ are Banach spaces under the Sobolev norm
  $\norm{\cdot}[k,p,\dom]$.
  The Sobolev space $\sobolev{k}[\dom] \defeq \sobolev[2]{k}[\dom]$ is a separable
  Hilbert space with inner product
  \begin{equation}
    \label{eqn:sobolev-inprod}
    \inprod{u_1}{u_2}[k, \dom]
    \defeq \sum_{\abs{\alpha} \le k} \inprod{\mipderiv{\alpha}{u_1}}{\mipderiv{\alpha}{u_2}}[{\L2[\dom]}]
  \end{equation}
  and norm
  \(
  \norm{\cdot}[k, \dom]
  \defeq \sqrt{\inprod{\cdot}{\cdot}[k, \dom]}
  = \norm{\cdot}[k, 2, \dom].
  \)
\end{theorem}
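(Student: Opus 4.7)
The statement bundles three standard facts, so my plan is to verify them in sequence, building each on the previous. First I would check that $\norm{\cdot}[k, p, \dom]$ is a norm on the set where the right-hand side of \eqref{eqn:sobolev-norm} is finite: nonnegativity and absolute homogeneity are immediate from the corresponding properties of $\norm{\cdot}[\L{p}[\dom]]$ combined with linearity of weak differentiation; definiteness follows because $\norm{u}[k,p,\dom] = 0$ forces $\norm{u}[\L{p}[\dom]]=0$ (the $\alpha=\vec{0}$ summand), hence $u = 0$ in $\L{p}[\dom]$; and the triangle inequality reduces to Minkowski's inequality applied componentwise in the finite family $\{\mipderiv{\alpha}{u}\}_{\abs{\alpha}\le k}$, taking the $\ell^p$ (or $\ell^\infty$) norm of the resulting vector of $\L{p}[\dom]$-norms.

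Next I would prove completeness of $\sobolev[k]{p}[\dom]$. Given a Cauchy sequence $(u_n)_{n\in\N}$ in $\sobolev[k]{p}[\dom]$, the definition of the Sobolev norm makes $(\mipderiv{\alpha}{u_n})_{n\in\N}$ Cauchy in $\L{p}[\dom]$ for each $\abs{\alpha}\le k$. By completeness of $\L{p}[\dom]$, there exist $u_\alpha \in \L{p}[\dom]$ with $\mipderiv{\alpha}{u_n}\to u_\alpha$. Setting $u\defeq u_{\vec 0}$, the key step is to show $u_\alpha = \mipderiv{\alpha}{u}$ in the weak sense: for any test function $\phi\in\testfns{\dom}$,
\begin{equation*}
\rintegral[\dom]{\vec{x}}{u_\alpha(\vec{x})\phi(\vec{x})}
= \lim_{n\to\infty} \rintegral[\dom]{\vec{x}}{\mipderiv{\alpha}{u_n}(\vec{x})\phi(\vec{x})}
= \lim_{n\to\infty} (-1)^{\abs{\alpha}} \rintegral[\dom]{\vec{x}}{u_n(\vec{x}) \mipderiv{\alpha}{\phi}(\vec{x})},
\end{equation*}
where the interchange of limit and integral is justified by Hölder's inequality (with $\phi$ and $\mipderiv{\alpha}{\phi}$ in $\L{q}[\dom]$ for the conjugate exponent $q$, using compactness of support when $p=\infty$). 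Passing the limit inside on the right gives $(-1)^{\abs{\alpha}} \rintegral[\dom]{\vec{x}}{u(\vec{x})\mipderiv{\alpha}{\phi}(\vec{x})}$, so $u_\alpha$ is the weak $\alpha$-derivative of $u$ and hence $u\in\sobolev[k]{p}[\dom]$ with $u_n\to u$ in the Sobolev norm. The main obstacle here is precisely this commutation of weak differentiation with $\L{p}$-limits; it is routine but requires careful handling of the $p=\infty$ case via the compact support of $\phi$.

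For the $p=2$ case, verifying that $\inprod{\cdot}{\cdot}[k,\dom]$ defines an inner product is a direct consequence of the inner product axioms on $\L2[\dom]$ and linearity of weak differentiation; the norm induced by it coincides with $\norm{\cdot}[k,2,\dom]$, so $\sobolev{k}[\dom]$ is a Hilbert space by the previous step. For separability, the plan is to consider the map
\begin{equation*}
\iota \colon \sobolev{k}[\dom] \to \bigoplus_{\abs{\alpha}\le k} \L2[\dom],
\quad u \mapsto (\mipderiv{\alpha}{u})_{\abs{\alpha}\le k},
\end{equation*}
which is a linear isometry (by construction of $\inprod{\cdot}{\cdot}[k,\dom]$) onto a closed subspace of the target (closedness is exactly the completeness argument above). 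Since $\L2[\dom]$ is separable and finite direct sums of separable metric spaces are separable, every subspace of the target is separable in the subspace topology, so $\sobolev{k}[\dom]$ inherits separability through $\iota$. This concludes the three claims.
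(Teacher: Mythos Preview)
Your proof is correct and follows the standard textbook argument. However, the paper does not actually supply a proof of this statement: the theorem is presented purely as a summary of Definitions~3.1--3.2 and Theorems~3.3 and~3.6 in \citet{Adams2003Sobolev}, with the footnote serving as the only justification. Your argument is essentially the one found there (norm axioms via Minkowski, completeness by passing weak derivatives through $\L{p}$-limits, separability via the isometric embedding into $\bigoplus_{\abs{\alpha}\le k}\L2[\dom]$), so there is nothing to compare beyond noting that you have filled in what the paper deliberately outsourced.
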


  \bibliography{linpde_gp}

  \ifoptionfinal{}{%
    \newpage
    \tableofcontents
  }
\end{document}